\newcommandx{\unsure}[2][1=]{\todo[linecolor=red,backgroundcolor=red!25,bordercolor=red,#1]{#2}}
\newcommandx{\change}[2][1=]{\todo[linecolor=blue,backgroundcolor=blue!25,bordercolor=blue,#1]{#2}}
\newcommandx{\info}[2][1=]{\todo[linecolor=OliveGreen,backgroundcolor=OliveGreen!25,bordercolor=OliveGreen,#1]{#2}}
\newcommandx{\improvement}[2][1=]{\todo[linecolor=Plum,backgroundcolor=Plum!25,bordercolor=Plum,#1]{#2}}
\newcommandx{\thiswillnotshow}[2][1=]{\todo[disable,#1]{#2}}
\newcommand{\mtext}[2]{\texorpdfstring{#1}{#2}}
\newtheorem{theorem}{Theorem}
\newtheorem{assumption}{Assumption}
\newtheorem{corollary}{Corollary}
\newtheorem{lemma}{Lemma}
\newtheorem{proposition}{Proposition}
\newtheorem{remark}{Remark}
\numberwithin{equation}{section}
\DeclareMathOperator{\diag}{diag}
\newcommand{\calH}{\ensuremath{\mathcal{H}}}
\newcommand{\calN}{\ensuremath{\mathcal{N}}}
\newcommand{\calL}{\ensuremath{\mathcal{L}}}
\newcommand{\cR}{\mathcal{R}}
\newcommand{\wh}{\widehat}
\newcommand{\Ree}{\operatorname{Re}}
\newcommand{\norm}[1]{\left\|{#1}\right\|}
\newcommand{\abs}[1]{\left|{#1}\right|}
\newcommand{\expec}{\ensuremath{\mathbb{E}}}
\definecolor{asparagus}{rgb}{0.53, 0.66, 0.42}
\newcommand{\R}{\ensuremath{\mathbb{R}}}
\newcommand{\Hcal}{\mathcal{H}}
\newcommand{\Lcal}{\mathcal{L}}
\newcommand{\approach}{\ensuremath{\rightarrow}}
\newcommand{\mapto}{\ensuremath{\rightarrow}}
\newcommand{\la}{\langle}
\newcommand{\ra}{\rangle}
\newcommand{\Nbb}{\mathbb{N}}
\newcommand{\mS}{\ensuremath{\mathbb{S}}}
\newcommand{\trace}{\mathrm{tr}}
\newcommand{\Bcal}{\mathcal{B}}
\newcommand{\mygrad}{\nabla}
\title{Fast Escape, Slow Convergence: Learning Dynamics of Phase Retrieval under Power-Law Data}
\author{
  Guillaume Braun\\
  RIKEN AIP
  \and
  Bruno Loureiro\\
  École Normale Supérieure, PSL \& CNRS
  \and
  Ha Quang Minh\\
  RIKEN AIP
  \and
  Masaaki Imaizumi\\
  University of Tokyo, RIKEN AIP
}
\date{}
\begin{document}

\maketitle
\begin{abstract}
Scaling laws describe how learning performance improves with data, compute, or training time, and have become a central theme in modern deep learning. We study this phenomenon in a canonical nonlinear model: phase retrieval with anisotropic Gaussian inputs whose covariance spectrum follows a power law. Unlike the isotropic case, where dynamics collapse to a two-dimensional system, anisotropy yields a qualitatively new regime in which an infinite hierarchy of coupled equations governs the evolution of the summary statistics. We develop a tractable reduction that reveals a three-phase trajectory: (i) fast escape from low alignment, (ii) slow convergence of the summary statistics, and (iii) spectral-tail learning in low-variance directions. From this decomposition, we derive explicit scaling laws for the mean-squared error, showing how spectral decay dictates convergence times and error curves. Experiments confirm the predicted phases and exponents. These results provide the first rigorous characterization of scaling laws in nonlinear regression with anisotropic data, highlighting how anisotropy reshapes learning dynamics.
\end{abstract}

\section{Introduction}

Scaling laws quantify how the performance of a learning algorithm varies with resources such as training time, dataset size, or model capacity. Empirically, losses often follow simple power laws across wide ranges of data and computation, enabling forecasting from a handful of measurements \citep{hestness2017scaling,kaplan2020scaling,hoffmann2022training}. These regularities naturally raise the fundamental question: \emph{when and how do such laws emerge from first principles?}

Despite their central role in modern deep learning practice, neural scaling laws remain theoretically poorly understood. A notable exception is provided by linear models, where the scaling of the generalization error has been thoroughly analysed within the classical kernel literature, encompassing both ridge regression \citep{caponnetto2007optimal,rudi2017generalization} and stochastic gradient descent \citep{yao2007early,ying2008online,carratino2018learning,pillaud2018statistical,kunstner2025scaling}. Recent developments in this direction, driven by the empirical observations of cross-overs and bottlenecks in the context of neural networks, demonstrate that analogous phenomena are already present in linear settings \citep{cui2021generalization, defilippis2024dimensionfree, bahri2024explaining, maloney2022solvablemodelneuralscaling,atanasov2024scaling,paquette2024,bordelon24,lin2024scaling}. By contrast, \emph{nonlinear} settings, ubiquitous in practice (e.g., functional data, learned feature maps, or embeddings with heavy spectral tails), remains far less understood.


We address this gap in the canonical nonlinear regression problem of phase retrieval:
\[
y \,=\, \langle x, w^\star\rangle^2 + \xi,\qquad x \sim \mathcal{N}(0,Q),
\]
where $w^\star\in\mathbb{R}^d$ is the target vector, and $Q$ has eigenvalues $(\lambda_i)_{i=1}^d$ obeying a power law $\lambda_i\propto i^{-a}$ with $a>1$ and noise $\xi$. This model captures two core difficulties: (i) a nonconvex landscape, and (ii) strong anisotropy that induces highly unbalanced learning across directions.

\begin{figure}[!ht]
    \centering     
    \includegraphics[width=0.75\linewidth]{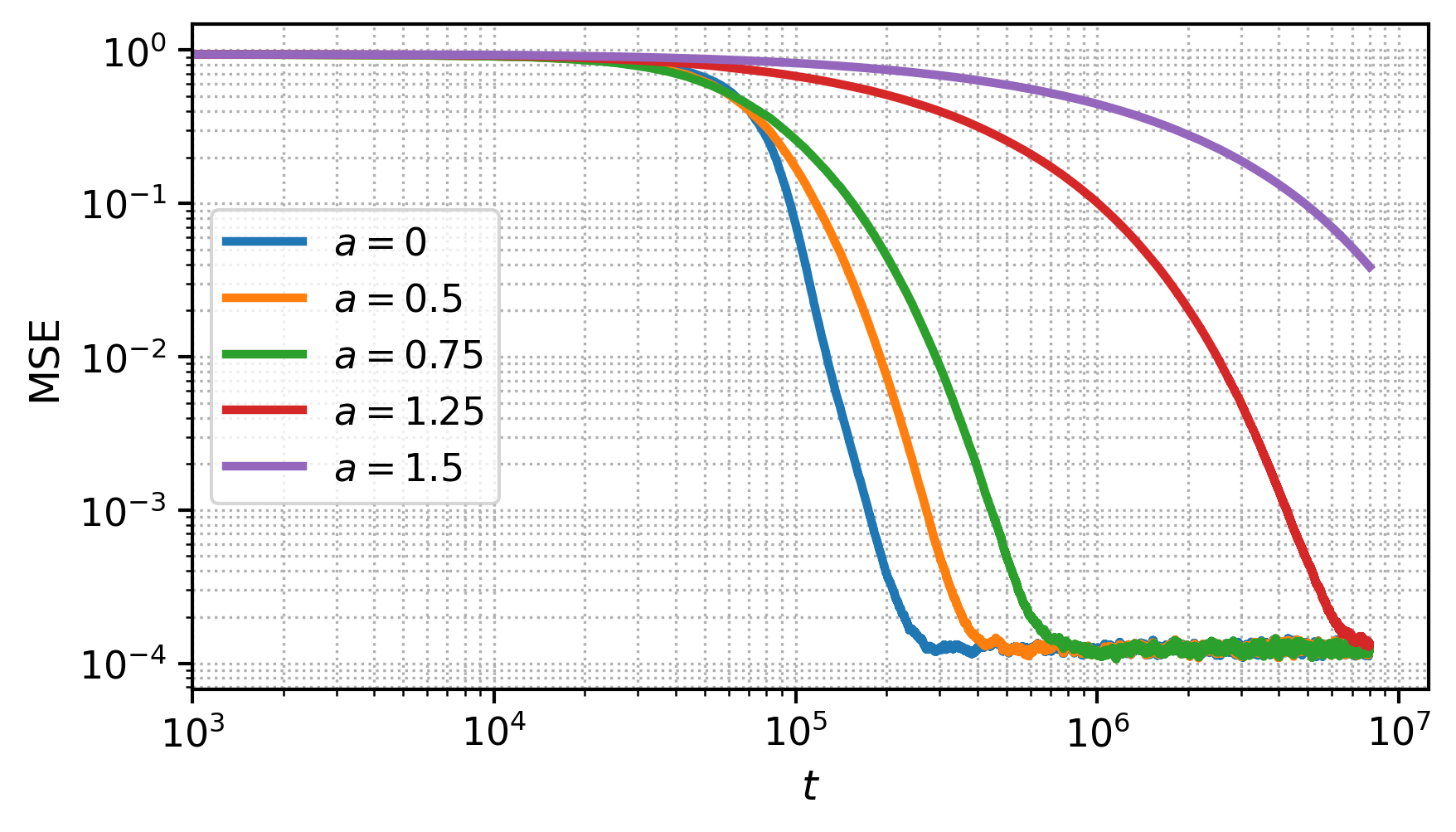}
  \caption{Evolution of the MSE during training with online SGD for different spectral exponents \texorpdfstring{\(a\)}{a} (log-log scale). For $a>1$, convergence is markedly slower than the exponential decay seen in the isotropic case, reflecting the difficulty of learning directions associated with small eigenvalues.}
    \label{fig:mse}
\end{figure}

Figure~\ref{fig:mse} illustrates this phenomenon: under the same initialization, noise level, stepsize, and optimization algorithm (SGD), the mean-square error (MSE) behaves differently depending on $a$. Intuitively, when $a$ is larger, directions associated with small eigenvalues $\lambda_i$ are harder to learn, so the MSE decays more slowly. By contrast, in isotropic designs, all directions progress at comparable rates, causing the error to drop sharply. These observations motivate our central question:
\begin{center}
\emph{How does the input spectrum govern finite-time convergence in nonlinear regression, and can we predict the learning curve from the spectral decay?}
\end{center}
Concretely, we seek to (i) analyze the mechanisms behind the plateau--drop structure in anisotropic phase retrieval, and (ii) derive scaling laws that quantify MSE decay as functions of the spectral parameter $a$ and time $t$.

\subsection{Contributions.}
\begin{itemize}[leftmargin=*]
\item \textbf{New phenomena in the anisotropic case.} 
\textcolor{black}{We show that anisotropy challenges several aspects of the intuition developed in isotropic settings.}
In the isotropic setting, the dynamics collapse to a low-dimensional ODE, and the main challenge is escaping mediocrity \textcolor{black}{, i.e. the regime where the correlation with the signal $w^*$ is vanishing,} before convergence accelerates. Under anisotropy, by contrast, the dynamics form an infinite hierarchy of coupled equations. This structural change flips the qualitative behavior, yielding a \emph{escape-convergence trade-off}: escaping from mediocrity can be faster, but convergence to \textcolor{black} {a low MSE }
is slowed by the difficulty of learning directions associated with small eigenvalues, \textcolor{black}{as illustrated by Figure~\ref{fig:mse} (a large $a$ is associated with a slow decay of the MSE) and Figure~\ref{fig:u_vs_a} (a large $a$ leads quickly to constant order correlation with the signal)}. Numerical experiments confirm this phase-level contrast between isotropic and anisotropic regimes.

\item \textbf{Analytical framework.} We first obtain a closed-form representation of the dynamics via \emph{Duhamel’s formula}. We then introduce a phase decomposition of the trajectory, isolating regimes where different approximations become valid. This allows us to analyze the qualitative behavior of the ODE hierarchy phase by phase: (i) \emph{fast escape from mediocrity}, (ii) \emph{macroscopic convergence of summary statistics}, and (iii) \emph{spectral-tail learning of small-eigenvalue directions}. The combination of Duhamel representation and phase-specific approximations provides a systematic way to make infinite-dimensional dynamics tractable.

\item \textbf{Scaling laws.} As a byproduct of this analysis, we derive explicit scaling laws for anisotropic phase retrieval. These formulas quantify how the eigenvalue decay governs the MSE. Numerical results corroborate the predicted exponents across different spectral profiles.
\end{itemize}

\subsection{Other related work}

\paragraph{Theory of scaling laws.}
The study of risk scaling in problems with power-law structure is a classical theme in the kernel literature, where it falls under the framework of \emph{source and capacity conditions}. It has been extensively investigated for kernel ridge regression \citep{caponnetto2007optimal, cui2021generalization}, random features regression \citep{rudi2017generalization, defilippis2024dimensionfree}, and also for (S)GD \citep{yao2007early,ying2008online,carratino2018learning,pillaud2018statistical}. This line of work has recently gained renewed relevance in the context of neural scaling laws, with linear models emerging as theoretical testbeds to explain the plateaux and cross-overs observed in practice \citep{bahri2024explaining, maloney2022solvablemodelneuralscaling,atanasov2024scaling,bordelon24,worschech2024analyzingneuralscalinglaws,paquette2024,lin2024scaling}. More recently, \cite{wortsman2025kernel} have investigated kernel ridge regression under anisotropic power-law inputs, showing that in some regimes the scaling of the covariates transfer to the features. Sums of orthogonal single-index models have been analysed in the \emph{feature-learning} regime, albeit still under isotropic input distributions \citep{ren2025emergencescalinglawssgd,arous2025learningquadraticneuralnetworks,defilippis2025scaling}. In short, while the linear setting with anisotropic spectra (e.g. power-law decays) is by now well understood, the nonlinear regime has remained largely isotropic. Our work addresses this gap by deriving compute–error scaling laws in a nonlinear model with anisotropic Gaussian inputs.

\paragraph{Phase retrieval and quadratic neural networks.}
Phase retrieval (PR) is a classical inverse problem motivated by imaging: reconstruct a signal from intensity‑only measurements; see \citet{dongPR23} for a recent tutorial. A rich algorithmic literature includes spectral initializations and nonconvex Wirtinger‑flow refinements \citep{maPR,candes15PR,tan2019phase, tan2023online,davis2020nonsmooth}. PR can be viewed as learning a single neuron with a quadratic activation, connecting it to the broader theory of quadratic networks and their training dynamics \citep{sarao2020optimization,arnaboldi2023escaping,martin2024impact,erba2025nuclear}. In this context, \cite{arous2025learningquadraticneuralnetworks} have studied scaling laws for one-pass SGD and \cite{defilippis2025scaling} for full-batch ERM, in a quadratic net model where the target coefficients follow a power-law. Most theoretical analyses of PR assume isotropic sub‑Gaussian measurements; the impact of anisotropic covariances on the learning curve has received less attention. Our analysis isolates precisely this aspect and quantifies how the input spectrum shapes the three‑phase trajectory and the resulting scaling laws.

\paragraph{Non‑convex optimization and feature learning.}
A complementary line of work studies gradient‑based training in multi‑index and shallow networks, characterizing feature learning, convergence phases, and computational–statistical trade‑offs, predominantly under isotropic designs \citep{saad1995line,saad1995dynamics,goldt2019dynamics,veiga2022phase,arnaboldi2023high,repetita24,collins2024hitting,ben2022high,abbe2022merged,AbbeAM23,bietti2023learning,dandi24Benefits,bruna2025surveyalgorithmsmultiindexmodels}. Results with anisotropic inputs are more limited: some works consider spiked covariances and rely on preconditioned methods \citep{ba2023learning}, while others treat a broader but weaker anisotropic regime that does not include power‑law spectra \citep{goldt2020modeling,braun2025learning}. By contrast, we analyze the unpreconditioned gradient flow in a strongly anisotropic regime and show that anisotropy destroys the finite‑dimensional closure of the dynamics, leading to an infinite hierarchy whose Duhamel–Volterra reduction yields explicit scaling laws.

\subsection{Notations}

We use $\norm{\cdot}$ and $\la \cdot, \cdot \ra$ to denote the Euclidean norm and scalar product, respectively. When applied to a matrix, $\norm{\cdot}$ refers to the operator norm. Any positive definite matrix $Q$ induces a scalar product defined by $\la x,y \ra_Q=x^\top Q y$. 
The Frobenius norm of a matrix $A$ is denoted by $\norm{A}_F$. The $d \times d$ identity matrix is represented by $I_d$. The $(d-1)$-dimensional unit sphere is denoted by $\mS^{d-1}$.
We use the notation $a_n \lesssim b_n$ (or $a_n \gtrsim b_n$) for sequences $(a_n)_{n \geq 1}$ and $(b_n)_{n \geq 1}$ if there exists a constant $C > 0$ such that $a_n \leq C b_n$ (or $a_n \geq C b_n$) for all $n$. If the inequalities hold only for sufficiently large $n$, we write $a_n = O(b_n)$ (or $a_n = \Omega(b_n)$). We denote by $C^\infty_b(\R_{\geq 0}; \ell^2)$ the Banach space of bounded and infinitely differentiable functions from $\R_{\geq 0}$ to $\ell^2:=\lbrace (x_n)_{n\geq 0}: \sum x_n^2<\infty \rbrace $, the Hilbert space of square summable sequences. We write $\ast$ for convolution, and for a function $f$ we denote by $\hat f$ its Laplace transform.

\section{Problem Setup}

\paragraph{Data distribution.}
By orthogonal invariance of the Gaussian distribution, we may assume without loss of generality that the covariance matrix is diagonal:
$Q = \mathrm{diag}(\lambda_1, \dots, \lambda_d)\in \R^{d\times d}.$
We assume a \emph{power-law} spectrum,
\[
\lambda_i \;=\; \frac{i^{-a}}{\sum_{j=1}^d j^{-a}}, 
\qquad a>1,
\]
so that $\mathrm{tr}(Q)=1$.  
This assumption reflects the slow, heavy-tailed eigenvalue decay observed in many empirical covariance spectra (e.g., images, text embeddings, kernel features). From a theoretical perspective, the exponent $a$ provides a simple parametrization of anisotropy: it controls the balance between a few dominant directions and a long tail of weak ones. It is widely used as a canonical model for scaling laws in learning dynamics.
We consider the phase retrieval setting
\[
y \;=\; \langle x, w^\star\rangle^2 + \xi, 
\qquad x \sim \mathcal{N}(0,Q),
\]
where the target weights are generated by
$
w^\star = \frac{u}{\|Q^{1/2}u\|}, 
u \sim \mathcal{N}(0,I_d).
$
This construction ensures that $\|Q^{1/2}w^\star\|=1$. For numerical experiments, we sometimes adopt the alternative normalization 
$\|w^\star\|^2=d$, so that all curves start from the same baseline when comparing different decay exponents $a$; both normalizations are of constant order in expectation.  
The noise term $\xi \sim \mathcal{N}(0,\sigma^2)$ is independent of $x$. Since our analysis focuses on the \emph{population dynamics}, label noise only shifts the loss by a constant and can be set to zero without loss of generality.

\textcolor{black}{\paragraph{Model.} We consider estimators of the form $\langle x, w \rangle ^2$ for $w\in \R^d$ and the population loss used to train our model \[
\mathcal{L}(w) \;=\; \mathbb{E}_{x}\!\left[ \bigl((x^\top w)^2 - (x^\top w^\star)^2\bigr)^2 \right].
\] However, under anisotropy, this loss can be misleading: a vector $w$ that aligns with $w^\star$ only along the directions corresponding to the largest eigenvalues may still achieve a small loss.  
To evaluate how well $w$ actually recovers the signal direction, a more natural metric is the MSE defined as
\[
\mathrm{MSE}(w,w^\star) \;=\; \frac{1}{d}\,\min\!\left\{ \|w - w^\star\|^2,\; \|w + w^\star\|^2 \right\}.
\] 
}

\paragraph{Optimization.}
The learner maintains a weight vector $w(t)\in\mathbb{R}^d$, initialized uniformly at random on the unit sphere $\mathbb{S}^{d-1}$.  
Its evolution follows the gradient flow dynamics
\[
\dot{w}(t) \;=\; -\,\nabla_w \mathcal{L}(w(t))
\] which can be viewed as the continuous-time counterpart of gradient descent.

\section{Loss Geometry and Evolution of Summary Statistics}
\label{sec:lg}
In this section, we describe the main characteristics of the loss landscape and establish ODEs to describe the evolution of the key summary statistics. The proofs are in the appendix, Section \ref{app:lg}. For convenience, we introduce the following notations: $s:=\|w\|_Q^2=w^\top Q w$, $s_\star:=\|w^\star\|_Q^2$, and $u:=\langle w,w^\star\rangle_Q=w^\top Q w^\star$.

\subsection{Loss Simplification}
Although the loss is defined on a $d$-dimensional parameter space, it depends only on two summary statistics: $\|w\|_Q^2$ and $\langle w, w^\star \rangle_Q^2$. This dimensional reduction makes the geometry of the loss landscape transparent, as captured in the following proposition. 

\begin{restatable}{proposition}{proploss}\label{prop:loss-closed-form}
Let \( x \sim \mathcal{N}(0, Q) \in \mathbb{R}^d \), where \( Q \in \mathbb{R}^{d \times d} \) is a symmetric positive definite diagonal matrix. Let \( w, w^\star \in \mathbb{R}^d \). The population loss can be rewritten as
\[
\mathcal{L}(w) = 3 \|w\|_Q^4 + 3 \|w^\star\|_Q^4 - 4 \langle w, w^\star \rangle_Q^2 - 2 \|w\|_Q^2 \cdot \|w^\star\|_Q^2= 3s^2+3s_\star^2-4u^2-2s_\star s
\]
\end{restatable}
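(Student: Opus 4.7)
The plan is to expand the square inside the expectation and evaluate the resulting Gaussian moments explicitly. Writing $a := x^\top w$ and $b := x^\top w^\star$, we have
\[
\mathcal{L}(w) = \mathbb{E}\bigl[(a^2 - b^2)^2\bigr] = \mathbb{E}[a^4] - 2\,\mathbb{E}[a^2 b^2] + \mathbb{E}[b^4].
\]
Since $x \sim \mathcal{N}(0,Q)$ and $Q$ is diagonal positive definite, the pair $(a,b)$ is jointly centered Gaussian with $\mathrm{Var}(a) = w^\top Q w = s$, $\mathrm{Var}(b) = (w^\star)^\top Q w^\star = s_\star$, and $\mathrm{Cov}(a,b) = w^\top Q w^\star = u$. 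The computation thus reduces to evaluating second- and fourth-order moments of a bivariate Gaussian.

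First, I would apply the standard fourth-moment formula for a univariate centered Gaussian, $\mathbb{E}[a^4] = 3\,\mathrm{Var}(a)^2 = 3 s^2$, and likewise $\mathbb{E}[b^4] = 3 s_\star^2$. Next, for the cross term I would invoke Isserlis' theorem (Wick's formula) for a centered Gaussian quadruple: the expectation $\mathbb{E}[a\,a\,b\,b]$ equals the sum over pair partitions, giving
\[
\mathbb{E}[a^2 b^2] = \mathbb{E}[a^2]\,\mathbb{E}[b^2] + 2\,\mathbb{E}[ab]^2 = s\,s_\star + 2 u^2.
\]
Substituting these three identities into the expansion yields
\[
\mathcal{L}(w) = 3 s^2 + 3 s_\star^2 - 2\bigl(s\,s_\star + 2 u^2\bigr) = 3 s^2 + 3 s_\star^2 - 2\,s\,s_\star - 4 u^2,
\]
which is exactly the claimed expression.

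The argument is essentially a Gaussian moment calculation, so there is no real obstacle; the only point requiring a moment of care is the correct application of Isserlis' theorem for the mixed fourth moment (making sure the three pair partitions are accounted for, producing the factor of $2$ in front of $u^2$). Diagonality of $Q$ is not even used beyond the fact that $Q$ is a valid covariance, since the identities $\|w\|_Q^2$, $\|w^\star\|_Q^2$, and $\langle w, w^\star\rangle_Q$ intrinsically encode all needed moment information; this also confirms that the reduction to without-loss-of-generality diagonal $Q$ stated earlier is consistent with this proposition.
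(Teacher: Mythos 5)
Your proof is correct and follows essentially the same route as the paper: expand the square and evaluate the Gaussian fourth moments, with the mixed term $\mathbb{E}[a^2b^2]=s\,s_\star+2u^2$ obtained from Wick/Isserlis. The only (harmless) difference is that you apply Isserlis directly to the bivariate pair $(a,b)$ rather than to the coordinates of $x$, which as you note makes the diagonality of $Q$ unnecessary for this step, whereas the paper carries out the pairing sum coordinate-wise using $\mathbb{E}[x_ix_j]=\lambda_i\delta_{ij}$.
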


Next, we compute the population gradient and characterize the critical points.

\subsection{Gradient, Hessian, and critical points}

\begin{restatable}{proposition}{propcri}\label{prop:crit}

The population gradient and Hessian of the loss $\calL$ are given by
\begin{align}
\nabla \calL(w)
&=12sQw-4s_\star Qw-8uQw^\star, \label{eq:grad-closed}\\
\nabla^2 \calL(w)
&=24(Qw)(Qw)^\top + (12s-4s_\star)Q - 8(Qw^\star)(Qw^\star)^\top. \label{eq:hess-closed}
\end{align}
The set of critical points is  $\{0,\; w^\star,\; -w^\star\}\cup\lbrace w:(u,s)=(0,s_\star/3)\rbrace$. Among them, $0$ is a strict local maximum, $\pm w^\star$ are strict global minima, and every $w$ with $(u,s)=(0,s_\star/3)$ is a saddle point.
\end{restatable}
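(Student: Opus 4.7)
The plan is to compute the gradient and Hessian by differentiating the closed-form expression from Proposition~\ref{prop:loss-closed-form}, then solve $\nabla \mathcal{L}(w) = 0$ using invertibility of $Q$, and finally classify each critical point by inspecting the Hessian formula at it.

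First, I would collect the elementary derivatives of the three building blocks: $\nabla_w s = 2Qw$, $\nabla_w u = Qw^\star$, and $\nabla_w s_\star = 0$, together with $\nabla^2_w s = 2Q$ and $\nabla^2_w u = 0$. Substituting into $\mathcal{L} = 3s^2 + 3s_\star^2 - 4u^2 - 2s_\star s$ and applying the chain rule produces $\nabla \mathcal{L}(w) = 12 s\, Qw - 4 s_\star Qw - 8 u\, Qw^\star$ essentially by inspection. For the Hessian, the only care needed is that differentiating $s \cdot Qw$ generates two contributions, namely $(Qw)(\nabla s)^\top = 2(Qw)(Qw)^\top$ and $s \cdot \nabla(Qw) = sQ$; combining everything yields \eqref{eq:hess-closed}.

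Next, to classify critical points, I would cancel $Q$ (legitimate since $Q \succ 0$) in $\nabla \mathcal{L}(w) = 0$ to obtain the vector equation $(3s - s_\star)\, w \;=\; 2u\, w^\star$. Splitting into two cases gives all solutions: (i) if $w$ is collinear with $w^\star$, write $w = \alpha w^\star$ so that $s = \alpha^2 s_\star$ and $u = \alpha s_\star$; substituting forces $\alpha(3\alpha^2 - 3) = 0$, i.e.\ $\alpha \in \{0, \pm 1\}$, giving the isolated critical points $\{0, \pm w^\star\}$; (ii) if $w$ and $w^\star$ are linearly independent, both scalar coefficients must vanish, producing the full saddle manifold $\{w : u = 0,\ s = s_\star/3\}$.

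Finally, I would classify each critical point by plugging into \eqref{eq:hess-closed}. At $w = 0$ the Hessian reduces to $-4 s_\star Q - 8 (Qw^\star)(Qw^\star)^\top$, which is negative definite since $Q \succ 0$, so $0$ is a strict local maximum. At $w = \pm w^\star$ we have $s = s_\star$ and the Hessian becomes $16 (Qw^\star)(Qw^\star)^\top + 8 s_\star Q \succ 0$, a strict local minimum; these are in fact global minima because $\mathcal{L} \geq 0$ as the expectation of a square and $\mathcal{L}(\pm w^\star) = 0$ by direct substitution into Proposition~\ref{prop:loss-closed-form}. For the saddle manifold, the middle term $(12 s - 4 s_\star) Q$ vanishes, leaving $24 (Qw)(Qw)^\top - 8 (Qw^\star)(Qw^\star)^\top$; testing along $v = w$ gives a positive value $24 s^2 = \tfrac{8}{3} s_\star^2$, while along $v = w^\star$ it gives $-8 s_\star^2 < 0$, certifying a saddle. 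The only step requiring any care is the case split between $w$ proportional or not to $w^\star$, to avoid dividing by a potentially zero scalar when extracting solutions of $(3s - s_\star)w = 2u w^\star$; nothing else in the argument is more than bookkeeping.
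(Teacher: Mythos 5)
Your proposal is correct and follows essentially the same route as the paper: differentiate the closed form $3s^2+3s_\star^2-4u^2-2s_\star s$ using $\nabla s=2Qw$, $\nabla u=Qw^\star$, cancel $Q$ to reduce the critical-point equation to $(3s-s_\star)w=2uw^\star$, and classify via the Hessian with the same test directions $w$ and $w^\star$ on the saddle manifold (your value $\tfrac{8}{3}s_\star^2$ for $w^\top\nabla^2\calL(w)\,w$ is in fact the correct arithmetic). The only cosmetic difference is that you split cases by collinearity of $w$ with $w^\star$ rather than by whether $12s-4s_\star$ vanishes; the two partitions yield the same solution set.
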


\begin{remark}
The loss landscape contains no spurious local minima: the only minima are the global optima $\pm w^\star$. At the same time, the origin is a strict local maximum, and the remaining critical points are saddles. Nevertheless, the dynamics exhibit a long plateau phase. At initialization, one typically has $u(0)\approx d^{-1/2}\approx 0$. In this low-correlation regime, the gradient is small, so the iterates take a long time to escape. Even after leaving this regime, convergence remains slower than in the isotropic case.
\end{remark}

\begin{remark}
The anisotropic gradient flow can be viewed as a $Q$-preconditioned version of the isotropic flow. Under the reparametrization $z = Q^{1/2} w$, define the isotropic loss $\mathcal{L}_I(z) = \mathcal{L}(Q^{-1/2} z)$. By the chain rule, $\dot z(t) = -\, Q \nabla \mathcal{L}_I(z(t)).$ Thus, while the critical points coincide with those in the isotropic case, the dynamics differ substantially due to the preconditioning by $Q$.
\end{remark}

\subsection{Infinite-Dimensional Structure of Gradient Flow}

As shown in Proposition~\ref{prop:loss-closed-form}, the loss depends only on the summary 
statistics $u(t)$ and $s(t)$. Controlling these two quantities already yields a faithful 
description of the loss trajectory.

A crucial distinction, however, arises between the isotropic and anisotropic settings. 
In the isotropic case ($Q=I$), the dynamics of the loss can be expressed entirely in terms of 
two scalars: the signal overlap $u(t)$ and the energy $s(t)$. This leads to a closed, 
two-dimensional ODE system.

In contrast, as soon as $Q \neq I$, the situation changes qualitatively. 
Proposition~\ref{prop:gf_dyn} shows that the evolution of $u(t)$ and $s(t)$ 
necessarily involves higher-order weighted overlaps,
\[
s^{(k)}(t) := \|w(t)\|_{Q^k}^2, 
\qquad s_\star^{(k)} := \|w^\star\|_{Q^k}^2,
\qquad u^{(k)}(t) := \langle w(t), w^\star \rangle_{Q^k},
\]
with the conventions $s=s^{(1)}(t)$, $u=u^{(1)}(t)$, and $s_\star=s_\star^{(1)}=1$.

The resulting system is an \emph{infinite hierarchy of coupled ODEs}. 
Unlike the isotropic case, no finite-dimensional closure exists: the time derivative of 
order-$k$ statistics depends on order-($k{+}1$) statistics, and so on. 
Understanding the anisotropic dynamics thus requires working with this 
infinite-dimensional structure.

\begin{restatable}{proposition}{propdyn}\label{prop:gf_dyn}
The gradient flow dynamics satisfy
\begin{align}
\dot w_i(t) &= 4\lambda_i \bigl(s_\star-3s(t)\bigr) w_i(t) + 8\lambda_i u(t) w_i^\star,
\qquad i=1,\dots,d, \label{eq:w}\\
\dot s^{(k)}(t) &= 8\,(s_\star-3s(t))\, s^{(k+1)}(t) + 16\,u(t)\,u^{(k+1)}(t), \label{eq:sk}\\
\dot u^{(k)}(t) &= 4\,(s_\star-3s(t))\, u^{(k+1)}(t) + 8\,u(t)\,s_\star^{(k+1)}. \label{eq:uk}
\end{align}
In particular, setting $k=1$ recovers the dynamics of $s(t)$ and $u(t)$.
\end{restatable}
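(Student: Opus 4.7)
The plan is to derive all three identities by direct substitution, starting from the closed-form gradient given in Proposition~\ref{prop:crit} and then differentiating the weighted inner-product summary statistics componentwise. Because $Q$ is diagonal, each step reduces to a scalar calculation, and the appearance of an order-$(k{+}1)$ statistic on the right-hand side of each equation comes from one extra factor of $\lambda_i$ produced whenever the gradient flow hits a coordinate.

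First I would establish equation~\eqref{eq:w}. By Proposition~\ref{prop:crit},
\[
\nabla \mathcal{L}(w) \;=\; 12 s\, Q w - 4 s_\star\, Q w - 8 u\, Q w^\star,
\]
so $\dot w(t) = -\nabla \mathcal{L}(w(t)) = 4(s_\star - 3s) Q w + 8 u\, Q w^\star$. Since $Q=\mathrm{diag}(\lambda_1,\dots,\lambda_d)$, reading this off coordinatewise yields
\[
\dot w_i(t) \;=\; 4\lambda_i\bigl(s_\star - 3 s(t)\bigr) w_i(t) + 8\lambda_i u(t) w_i^\star,
\]
which is \eqref{eq:w}. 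The scalar field $s_\star$ is constant in time, so no issue arises in passing to the gradient flow form.

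Next I would derive \eqref{eq:sk} and \eqref{eq:uk} by differentiating the weighted quadratic forms. Writing $s^{(k)}(t) = \sum_i \lambda_i^k w_i(t)^2$ and applying \eqref{eq:w},
\[
\dot s^{(k)}(t) \;=\; 2\sum_i \lambda_i^k w_i(t)\dot w_i(t) \;=\; 8(s_\star - 3s(t))\sum_i \lambda_i^{k+1} w_i(t)^2 + 16 u(t)\sum_i \lambda_i^{k+1} w_i(t) w_i^\star,
\]
which is exactly $8(s_\star - 3s(t)) s^{(k+1)}(t) + 16 u(t) u^{(k+1)}(t)$. The same manipulation on $u^{(k)}(t) = \sum_i \lambda_i^k w_i(t) w_i^\star$ yields
\[
\dot u^{(k)}(t) \;=\; \sum_i \lambda_i^k w_i^\star \dot w_i(t) \;=\; 4(s_\star - 3s(t)) u^{(k+1)}(t) + 8 u(t) s_\star^{(k+1)},
\]
giving \eqref{eq:uk}. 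Setting $k=1$ specializes the two identities to the evolution of $s(t)$ and $u(t)$.

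The computations are elementary; there is no genuine analytic obstacle. The only point that deserves a brief comment is well-posedness and interchange of sum with derivative when the ambient dimension $d$ is taken to be effectively infinite (as the later analysis will implicitly do via $\ell^2$): because $\sum_i \lambda_i^{k+1} w_i^2 \le \|Q\|^k \|w\|_Q^2$ and analogously for the cross term, each series converges absolutely and uniformly on bounded trajectories, justifying termwise differentiation. I would include a one-line remark to that effect so that the infinite hierarchy displayed in \eqref{eq:sk}--\eqref{eq:uk} is genuinely meaningful in the $C^\infty_b(\mathbb{R}_{\geq 0};\ell^2)$ setting introduced in the Notations.
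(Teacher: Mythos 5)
Your proof is correct and follows essentially the same route as the paper: read off the gradient flow coordinatewise from the closed-form gradient of Proposition~\ref{prop:crit}, then differentiate the weighted quadratic forms $s^{(k)}$ and $u^{(k)}$ termwise (the paper does $k=1$ in matrix form and notes that the same computation extends to general $k$, which is exactly your calculation). The extra remark on termwise differentiation is harmless but not needed here, since $d$ is finite in this proposition.
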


\subsection{Three-Phase Structure of the Dynamics}\label{sec:empirical-three-phases}

The system of ODEs from Proposition~\ref{prop:gf_dyn} involves higher-order moments, preventing a closed-form analysis as in the isotropic case. To guide our theory, we first examine the empirical evolution of the key observables $u(t)$ and $s(t)$; see Figure~\ref{fig:3phases_pop}.  The trajectories display a \emph{three-phase structure}:
\vspace{-0.4\baselineskip}
\begin{enumerate}[label=(\roman*), leftmargin=*]
\item \textbf{Phase I: Escape from mediocrity, $s(t)\approx 0$ ($t \leq T_1$).}  
The dynamics begin near $w(0)\approx 0$, with no correlation to the signal.
During this “warm-up” stage, the correlation $u(t)$ escapes exponentially from zero and reaches a small but fixed constant $u(T_1)=\delta>0$.
The overall MSE, however, remains essentially unchanged, since only a few easy directions—those aligned with large eigenvalues—have been learned so far.

\item \textbf{Phase II: Convergence $u(t), s(t)\to 1$ ($T_1 \leq t < T_2$).}  
Here, the signal alignment strengthens and the summary statistics $u(t),s(t)$ approach their limiting values.
Two distinct episodes appear:
\begin{itemize}[leftmargin=*]
\item \textbf{(IIa) Transition ($T_1 \leq t \leq T_1'$).}  
The energy $s(t)$ crosses the critical threshold $1/3$ and stays above it.
\item \textbf{(IIb) Asymptotic convergence ($T_1' < t < T_2$).}  
Both $u(t)$ and $s(t)$ approach $1$, but convergence is slower than in the isotropic case (see Section~\ref{app:xp}), reflecting the difficulty of learning directions corresponding to small eigenvalues of $Q$ (also see Section~\ref{app:xp:cvrate}).
\end{itemize}

\item \textbf{Phase III: Spectral-tail learning $u(t), s(t)\approx 1$ ($t > T_2$).}  
After $u(t)$ and $s(t)$ have essentially stabilized, the MSE continues to decrease as the flow 
progressively learns directions associated with the small eigenvalues of $Q$.
\end{enumerate}

This empirical decomposition provides a roadmap for the analysis: by isolating each phase, the infinite-dimensional system becomes amenable to tractable approximations.

\begin{figure}[h!]
    \centering    \includegraphics[width=0.9\linewidth]{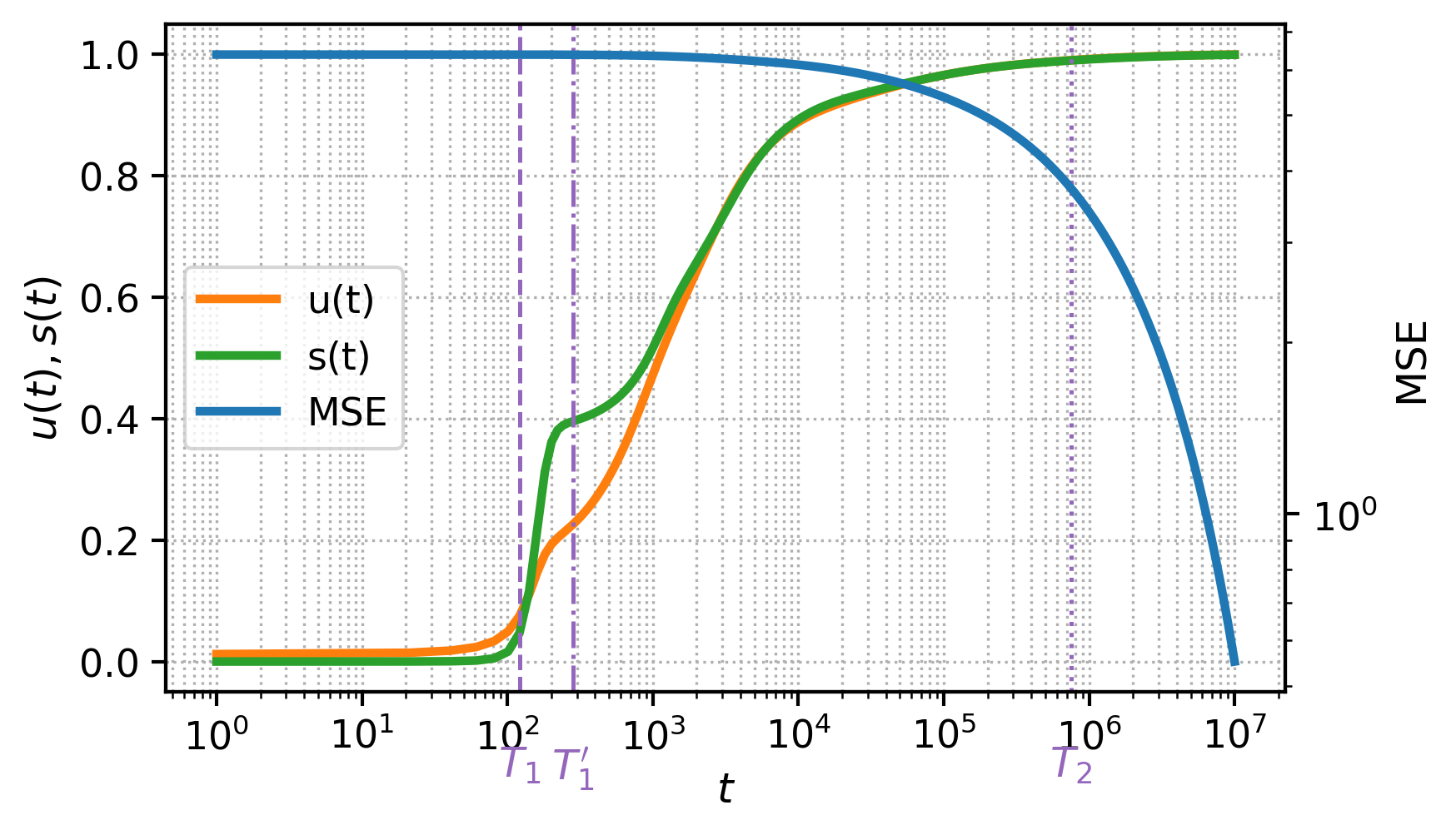}
    \caption{Evolution of the MSE, \texorpdfstring{$u(t)$}{u(t)} and \texorpdfstring{$s(t)$}{s(t)} 
    under population gradient descent (log-log scale). 
    Parameters: \texorpdfstring{$a=2$}{a=1.25}, \texorpdfstring{$d=1000$}{d=1000}, 
    \texorpdfstring{$\eta=10^{-2}$}{eta=1e-2}, \texorpdfstring{$T=10^7$}{T=1e7}, \texorpdfstring{$\varepsilon=0.05$}{epsilon=0.05}.}
    \label{fig:3phases_pop}
\end{figure}

\begin{figure}[h!]
    \centering    \includegraphics[width=0.8\linewidth]{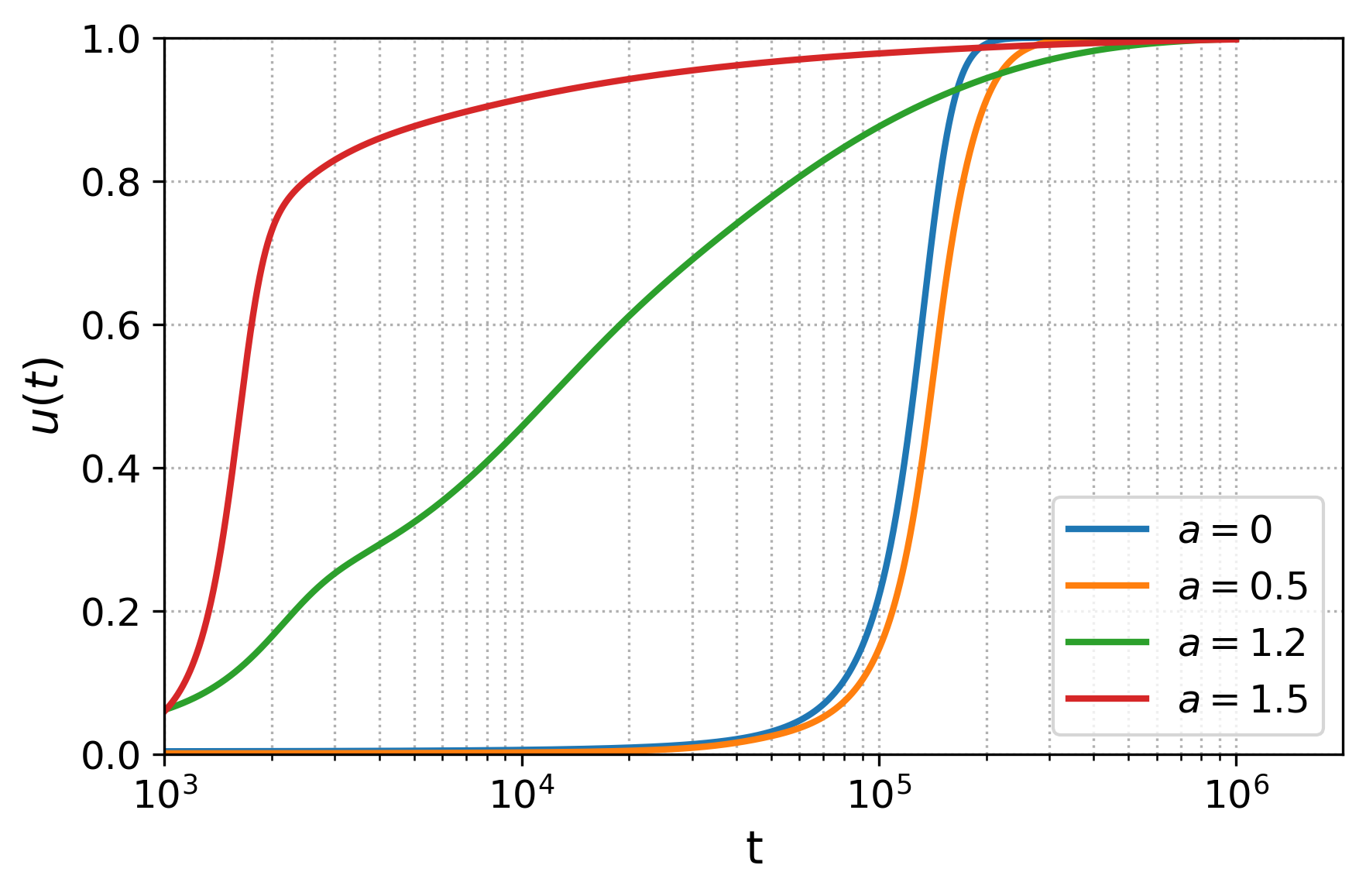}
    \caption{Evolution of the correlation $u(t)$ for different exponent $a$ ($d=1000$, $\eta=10^{-3}$).}
    \label{fig:u_vs_a}
\end{figure}
\section{Main Results: Three-Phase Gradient Flow Dynamics}
The experiments in Section~\ref{sec:empirical-three-phases} revealed a characteristic trajectory with three successive phases:  (i) escape from mediocrity,  (ii) convergence of the summary statistics, and  (iii) spectral-tail learning. We now show that this qualitative picture admits a rigorous derivation from the gradient flow equations. The following theorems make precise the stopping times, plateau behavior, and tail-driven decay observed empirically.

\subsection{Phase I–II: Escape and approximate convergence of the summary statistics}

The next theorem formalizes escape and approximate convergence, under initialization conditions whose full statement is deferred to Appendix~\ref{app:proofPhaseI} (see Assumption~\ref{ass:init}).

\begin{theorem}[Phases I and II]\label{thm:dyn}
For $d$ sufficiently large, let $\varepsilon \gtrsim d^{-(a-1)/2}$. Under mild conditions on initialization (e.g.\ $s(0)\asymp d^{-1/2}$), there exist stopping times $T_1 = O(\log d)$, $T_1' = T_1+O(1)$, and $T_2 = T_1' + O(\varepsilon^{-2a/(a-1)}\log(1/\varepsilon))$ such that: \vspace{-0.4\baselineskip}
\begin{enumerate}[label=(\roman*), topsep=0pt, partopsep=0pt, parsep=0pt, itemsep=0pt]
    \item There exists a constant $\delta>0$ such that $|u(T_1)|,\, s(T_1),\, |u^{(2)}(T_1)| \ge \delta$, 
    and the sign of $u^{(2)}(T_1)$ agrees with that of $u(T_1)$.
    \item There exists a constant $s_0>0$ such that $s(t) > 1/3 + s_0$ for all $t \ge T_1'$.
    \item At time $T_2$, both statistics are close to their limits:
    $|u(T_2)|,\, s(T_2) \in [\,1-\varepsilon,\,1\,]$.
\end{enumerate}
\end{theorem}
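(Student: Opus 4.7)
The overall strategy is to use the Duhamel representation of the linear-in-$w_i$ ODE \eqref{eq:w}. Setting $M(\tau_1,\tau_2) := 4\int_{\tau_1}^{\tau_2}(1-3s(\sigma))\,d\sigma$, this reads
\[
w_i(t) \;=\; w_i(t_0)\,e^{\lambda_i M(t_0,t)} \;+\; 8\lambda_i w_i^\star\int_{t_0}^{t} u(\tau)\,e^{\lambda_i M(\tau,t)}\,d\tau,
\]
and substituting into \eqref{eq:sk}--\eqref{eq:uk} closes the hierarchy into a Volterra equation for the summary statistics. In each phase, I would replace $M$ by a linear proxy (exploiting $s\approx 0$ in Phase~I and $s\approx 1$ in Phase~IIb) and close a Gr\"onwall-type bootstrap on the resulting integral inequalities.

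\textbf{Phase I.} Bootstrap the assumption $s(\tau)\leq c_0 < 1/6$ on $[0,T_1]$, so that $M(0,\tau)\geq 2\tau$. The homogeneous part of Duhamel then gives $|w_i(t)|\gtrsim |w_i(0)|\,e^{2\lambda_i t}$, and correlating with $w^\star$ in the $Q$-inner product yields $|u(t)|\gtrsim |u(0)|\,e^{2\lambda_1 t}$, reinforced by the inhomogeneous term. Since $|u(0)|, \sqrt{s(0)}\asymp d^{-1/2}$ by the random-sphere initialization and $\lambda_1=\Theta(1)$, taking $T_1 = c\log d$ forces $|u(T_1)|, s(T_1), |u^{(2)}(T_1)|\geq\delta$. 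The signs of $u(T_1)$ and $u^{(2)}(T_1)$ agree because, in this regime, all three statistics are dominated by the top-eigenvalue block $\lambda_1 w_1(t)w_1^\star$ (resp.\ $\lambda_1^2 w_1(t) w_1^\star$), which escapes the fastest. The bootstrap $s\leq c_0$ is verified a posteriori by integrating \eqref{eq:sk}, whose solution grows at the same exponential rate starting from $s(0)\asymp d^{-1/2}$.

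\textbf{Phase IIa.} Once $u,s$ and $u^{(2)}$ are of constant order with aligned signs, both terms in $\dot s = 8(1-3s)s^{(2)} + 16 u\,u^{(2)}$ are strictly positive whenever $s<1/3$, with a rate bounded below by a constant depending only on $\delta$. A short ODE argument then gives $s(T_1')\geq 1/3+s_0$ with $T_1'-T_1=O(1)$. To forbid $s$ from dropping back below $1/3+s_0$ thereafter, I would invoke monotonicity $\dot{\mathcal{L}}\leq 0$ along the gradient flow together with the closed form $\mathcal{L}=3s^2+3-4u^2-2s$ from Proposition~\ref{prop:loss-closed-form}: any such drop combined with $u$ still bounded away from $1$ would force $\mathcal{L}$ to rise, a contradiction.

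\textbf{Phase IIb is the main obstacle.} Near the fixed point, $M(\tau,t)\approx -8(t-\tau)$, and Duhamel specializes to
\[
w_i^\star - w_i(t) \;\approx\; \bigl(w_i^\star-w_i(T_1')\bigr)\,e^{-8\lambda_i(t-T_1')} \;+\; 8\lambda_i w_i^\star\int_{T_1'}^{t}\bigl(1-u(\tau)\bigr)\,e^{-8\lambda_i(t-\tau)}\,d\tau.
\]
The $Q$-weighted squared residual $E(t):=\|w(t)-w^\star\|_Q^2 = s(t)+1-2u(t)$ then satisfies a Volterra inequality whose free term, $\sum_i \lambda_i (w_i^\star-w_i(T_1'))^2\,e^{-16\lambda_i(t-T_1')}$, can be approximated by $\int_1^\infty x^{-a}\,e^{-c x^{-a}(t-T_1')}\,dx$ under the power law $\lambda_i\propto i^{-a}$ and shown to decay as $(t-T_1')^{-(a-1)/a}$. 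Closing the Volterra loop requires bounding the integrand via $|1-u(\tau)|\leq\sqrt{s_\star\,E(\tau)}$ (Cauchy--Schwarz) and running a Gr\"onwall-type bootstrap, producing $E(t)\lesssim(t-T_1')^{-(a-1)/a}\log(t-T_1')$. Since $|1-u(t)|,|1-s(t)|\lesssim\sqrt{E(t)}$, inverting in $\varepsilon$ yields the claimed horizon $T_2-T_1'=O(\varepsilon^{-2a/(a-1)}\log(1/\varepsilon))$; the threshold $\varepsilon\gtrsim d^{-(a-1)/2}$ is precisely what keeps this horizon no larger than $d^a$, so that the continuum approximation of the spectral sum remains valid in dimension $d$.
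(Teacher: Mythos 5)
Your Phase~IIb sketch is broadly aligned with the paper's argument (a Volterra/Duhamel representation for the residual, a head--tail split of the spectrum under the power law, and the same exponent $\varepsilon^{-2a/(a-1)}$), but Phases~I and~IIa contain genuine gaps.

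\textbf{Phase I.} The lower bound $|u(t)|\gtrsim |u(0)|\,e^{2\lambda_1 t}$ does not follow from the homogeneous part of Duhamel. The homogeneous contribution to $u$ is the signed sum $a_\Theta(t)=\sum_i \lambda_i w_i(0)w_i^\star e^{4\lambda_i\Theta(t)}$; its coefficients have arbitrary signs, so it cannot be bounded below by $|u(0)|e^{2\lambda_1 t}$ (the terms can nearly cancel at $t=0$ while the top coefficient $\lambda_1 w_1(0)w_1^\star$ is atypically small, or conversely). The paper instead keeps the full Volterra equation $u=a_\Theta+8K\ast u$, passes to the Laplace transform, and shows that the growth is $e^{\rho_{\rm true}t}$ where $\rho_{\rm true}>4\lambda_1$ is the root of $1=8\widehat K(\rho)$ --- i.e.\ the escape is driven by the collective feedback mode, strictly faster than any single homogeneous eigenmode --- with a prefactor $\widehat a_\Theta(\rho_{\rm true})/D'(\rho_{\rm true})$ whose $\asymp d^{-1/2}$ magnitude must be \emph{assumed} (Assumption~\ref{ass:init}), not derived from $|u(0)|\asymp d^{-1/2}$. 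For the same reason your sign-agreement argument (``all three statistics are dominated by the $\lambda_1$-block'') misidentifies the mechanism: the paper obtains $\mathrm{sign}(u^{(2)}(T_1))=\mathrm{sign}(u(T_1))$ because the inhomogeneous term $8s_\star^{(2)}\int_0^{T_1}u$ dominates the homogeneous remainder in the Duhamel formula for $u^{(2)}$, not because the top eigendirection wins.

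\textbf{Phase IIa, no-return step.} The loss-monotonicity argument is backwards. With $s_\star=1$, $\mathcal{L}=3s^2-2s+3-4u^2=3(s-\tfrac13)^2+\tfrac83-4u^2$, so $s=1/3$ is the \emph{minimum} of the $s$-part: a drop of $s$ from $1/3+s_0$ back toward $1/3$ strictly \emph{decreases} $\mathcal{L}$ (at fixed $u$), hence is fully compatible with $\dot{\mathcal L}\le 0$ and no contradiction arises. The sublevel sets $\{3(s-\tfrac13)^2-4u^2\le c\}$ are hyperbolic regions that do not exclude $s$ near $1/3$. Forward invariance of $\{s\ge 1/3+s_0\}$ has to come from the vector field itself: in a thin band above $1/3$ the negative drift $8(1-3s)s^{(2)}=O(\eta)$ is outweighed by the positive mixed term $16\,u\,u^{(2)}\ge 16\,u(T_1)u^{(2)}(T_1)>0$, which is exactly the comparison the paper carries out (and is why the positivity and sign of $u^{(2)}$ from Phase~I are needed). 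Finally, in Phase~IIb your ``Gr\"onwall-type bootstrap'' should be replaced by the sign observation that $0\le\Delta\le1$ and $K_\Theta\ge0$ make the integral term $\int K_\Theta(\Delta-1)$ nonpositive, giving $\Delta(t)\le(1-h_\Theta(t))_+$ directly; a naive Gr\"onwall with the $L^1$ norm of the kernel would not close, since that norm is not less than one.
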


\begin{remark}
Several features of the theorem are worth noting.
\begin{itemize}[leftmargin=*, nosep]
  \item \textbf{Sign symmetry.} The sign of $u(t)$ is fixed at initialization; w.l.o.g.\ one may assume it positive by replacing $w_\star$ with $-w_\star$.
  \item \textbf{Anisotropic vs.\ isotropic.} In isotropic models $s(0)$ directly controls early growth and convergence is relatively fast; in the anisotropic setting, the Volterra reduction shows that escape depends instead on a linear functional of the initialization (via residual/resolvent calculus), and convergence after escape is tail–controlled and slower.
  \item \textbf{Accuracy vs.\ time.} For $\varepsilon \gtrsim d^{-(a-1)/2}$ stabilization occurs within $O(\varepsilon^{-2a/(a-1)}\log(1/\varepsilon))$, while demanding $\varepsilon \ll d^{-(a-1)/2}$ leads to even higher time costs (see Phase III analysis).
\end{itemize}
\end{remark}

\begin{remark}
    The analysis of Phase I reveals that the $u(t)$ grows exponentially at a rate depending on $a$: the convergence is faster when $a$ is large (see Proposition~\ref{prop:dominant-clean} in the appendix, \textcolor{black}{and Figure~\ref{fig:u_vs_a} for a numerical illustration}). 
\end{remark}

\subsection{Spectral-tail learning and Phase~III}

While Phases~I–II describe the evolution of summary statistics $(u,s)$, 
these quantities do not capture how the estimator $w(t)$ approaches the ground truth entrywise. 
The mean squared error
\[
\mathrm{MSE}(t)=\tfrac{1}{d}\sum_{i=1}^d (w_i(t)-w_i^\star)^2\]
is the natural measure of recovery: it is the quantity plotted in Fig.~\ref{fig:mse}, and its log–log decay rates yield the scaling laws highlighted in the introduction. Moreover, the MSE aggregates contributions from all eigen-directions, making it the right observable to expose the effect of the spectral tail.

Up to time $T_2$, while the summary statistics are still converging, the MSE shows almost no decrease and stays essentially at its initial level. The following proposition captures this plateau behavior.
\begin{restatable}{proposition}{preTtwoPlateau}\label{prop:preT2-plateau}
Let $\sigma_\star^2=\tfrac1d\sum_{i=1}^d (w_i^\star)^2$. Under the assumptions of Theorem~\ref{thm:dyn} we have
\[
\Bigl|\mathrm{MSE}(T_2)-\sigma_\star^2\Bigr|
\;\lesssim\;
\Bigl(\frac{\varepsilon^{-a}}{d}\Bigr)^{\!1/3}
\;+\;\Bigl(\frac{\log d}{d}\Bigr)^{\!1/3}
.
\]
\end{restatable}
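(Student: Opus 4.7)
The starting point is the identity
\[\mathrm{MSE}(T_2) - \sigma_\star^2 = \frac{1}{d}\sum_{i=1}^d \phi_i,\qquad \phi_i := w_i(T_2)^2 - 2\,w_i(T_2)\,w_i^\star,\]
which reduces the claim to bounding an average over the $d$ coordinates. Since Eq.~\eqref{eq:w} is linear in each $w_i$ separately, Duhamel's formula gives the closed form
\[w_i(T_2) = w_i(0)\,e^{4\lambda_i B(T_2)} + 8\lambda_i\,w_i^\star \int_0^{T_2} e^{4\lambda_i(B(T_2)-B(\tau))}\,u(\tau)\,d\tau,\quad B(t) := \int_0^t (s_\star - 3s(\tau))\,d\tau,\]
so every coordinate trajectory is controlled by the same two scalar objects $B$ and $u$, differing only through the multiplicative factor $\lambda_i$. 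Theorem~\ref{thm:dyn} already supplies what we need about those scalars: $s \geq 1/3 + s_0$ on $[T_1',T_2]$ forces $B(T_2) \leq -c(T_2 - T_1')$ for a constant $c>0$, while $u(T_2), s(T_2) \in [1-\varepsilon,1]$.

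The next step is to split the indices into a head $H_k = \{i \leq k\}$ and a tail $T_k = \{i > k\}$, with $k$ to be optimized at the end, and approximate $\phi_i$ separately in each regime. For the head ($\lambda_i$ large), the transient $w_i(0)e^{4\lambda_i B(T_2)}$ is exponentially small and the Duhamel integral equilibrates around the quasi-stationary value $\tfrac{2\,u(T_2)}{3\,s(T_2) - s_\star}\,w_i^\star = (1+O(\varepsilon))\,w_i^\star$, yielding $\phi_i = -(w_i^\star)^2 + O(\varepsilon^2)(w_i^\star)^2$. For the tail ($\lambda_i T_2$ small), a first-order Taylor expansion of the exponentials at $\lambda_i = 0$ gives $w_i(T_2) = w_i(0) + 8\lambda_i w_i^\star\, I + O(\lambda_i^2 T_2^2)$ with $I := \int_0^{T_2} u(\tau)\,d\tau = O(T_2)$, so $\phi_i = w_i(0)^2 - 2\,w_i(0)\,w_i^\star + O(\lambda_i T_2(w_i^\star)^2)$ plus a higher-order remainder.

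Aggregating, I would use $\|w(0)\| = 1$ together with Gaussian concentration of $w^\star$ to obtain $\sum_{i \leq k}(w_i^\star)^2 = k + O(\sqrt{k \log d})$ with high probability, concentration of the inner product $\langle w(0), w^\star\rangle_{T_k}$ to control the initialization terms in the tail, and the power-law sum $\sum_{i > k}\lambda_i^m \lesssim k^{1-am}$ coming from $\lambda_i \sim i^{-a}$ to absorb the drift corrections. Schematically this produces
\[\bigl|\mathrm{MSE}(T_2) - \sigma_\star^2\bigr| \;\lesssim\; \frac{k}{d} + \frac{T_2\,k^{1-a}}{d} + \frac{\sqrt{k \log d}}{d}.\]
Plugging in $T_2 = O(\varepsilon^{-2a/(a-1)}\log(1/\varepsilon))$ and optimizing $k$ to balance the head term against the tail drift produces the advertised $(\varepsilon^{-a}/d)^{1/3}$ rate, and the concentration residual is dominated by the loose bound $(\log d/d)^{1/3}$; the constraint $\varepsilon \gtrsim d^{-(a-1)/2}$ keeps the optimal $k$ below $d$ so the head/tail decomposition remains consistent.

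I expect the main obstacle to be the transition window $[T_1, T_1']$ where $s$ crosses $1/3$ and $B$ is temporarily non-monotone: during this interval the head exponential $e^{4\lambda_i B(\tau)}$ is transiently amplified, so the quasi-stationary approximation in the head is not yet valid and naive bounds blow up. To handle it, I would split the Duhamel integral at $T_1'$ and use $T_1' = O(\log d)$ together with the escape bound $u(T_1) \geq \delta$ from Theorem~\ref{thm:dyn} to show that the contributions of $[0, T_1']$ to the convolutions only inflate the final estimate by logarithmic factors, which are then absorbed into the $(\log d/d)^{1/3}$ term.
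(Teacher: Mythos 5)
Your setup (coordinatewise Duhamel formula, head/tail split, concentration for the initialization cross-terms) starts the same way as the paper, but the core of your argument diverges, and the divergence is where the problems are. The paper never claims that any coordinate has converged at $T_2$: its ``active'' set is handled by a weighted Cauchy--Schwarz step, $\sum_{i}|w_i^\star w_i(T_2)|\le(\sum_i(w_i^\star)^2/\lambda_i)^{1/2}(\sum_i\lambda_i w_i(T_2)^2)^{1/2}$, combined only with $\lambda_i\ge\zeta/(4\Lambda(T_2))$ and $s(T_2)\le 1$, giving an active contribution $\sqrt{\Lambda(T_2)/(\zeta d)}$; balancing this against the inactive contribution $\asymp\zeta$ is exactly what produces the $1/3$ exponent. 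Your replacement --- the quasi-stationary claim $w_i(T_2)=(1+O(\varepsilon))w_i^\star$ on the head --- is the hardest step of your route and is asserted rather than proven. It does not follow from Theorem~\ref{thm:dyn}, which controls $u,s$ only at the stopping times: to show the Duhamel integral for a head coordinate has equilibrated you need $1-u(\tau)=O(\varepsilon)$ on a window of $\Theta$-length $\gtrsim 1/\lambda_i$ \emph{preceding} $T_2$, which requires re-running the Phase~IIb Volterra bound at intermediate times (essentially the content of Lemma~\ref{lem:postT2} and Theorem~\ref{thm:idealization-error-eps}, which the paper only deploys \emph{after} $T_2$). Relatedly, the obstacle you flag --- the window $[T_1,T_1']$ --- is actually the easy part ($O(\log d)$ long, absorbed into the $(\log d/d)^{1/3}$ term); the real difficulty is the one just described.

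The second, decisive gap is arithmetic: your optimization does not produce the advertised rate. Balancing $k/d$ against $T_2k^{1-a}/d$ gives $k\asymp T_2^{1/a}$ and a bound $\asymp T_2^{1/a}/d\asymp\varepsilon^{-2/(a-1)}/d$ up to logarithms. Since both balanced terms carry the same $1/d$ prefactor, no choice of $k$ can generate a bound of the form $(\cdot/d)^{1/3}$; that exponent is an artifact of the paper's Cauchy--Schwarz step, which trades a factor of $d^{-1/2}$ against a $d$-free term. One can check that $\varepsilon^{-2/(a-1)}/d\lesssim(\varepsilon^{-a}/d)^{1/3}$ throughout the admissible range $\varepsilon\gtrsim d^{-(a-1)/2}$ when $a\ge 2$, so there your (stronger) bound would still imply the proposition \emph{if} the head step were justified; but for $1<a<2$ and $\varepsilon$ near the lower limit your bound degenerates to $O(1)$ and fails to establish the claim. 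Note also that the paper's final substitution is $\Lambda(T_2)\asymp\varepsilon^{-a}$ via the relation $1-u\asymp\Lambda^{-1/a}$, not the elapsed time $T_2-T_1'\lesssim\varepsilon^{-2a/(a-1)}\log(1/\varepsilon)$ that you plug in; these scale differently for $a\ne 3$, so the quantity entering the final bound is not the one you are using.
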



After $T_2$, progress is governed by coordinates aligned with the smallest eigenvalues. 
Let $e_i(t)=w_i(t)-w_i^\star$ be the coordinate error and define 
$\pi_i=e_i(T_2)^2/\sum_j e_j(T_2)^2$. 
The weighted average $\widehat S_d(\tau)=\sum_i \pi_i e^{-16\lambda_i\tau}$ 
describes the spectral decay of the error, while the uniform benchmark 
$S_d(\tau)=d^{-1}\sum_i e^{-16\lambda_i\tau}$ admits explicit asymptotics. 
We also set $s_\star^{(2)}=\tfrac{1}{d}\sum_i \lambda_i^2(w_i^\star)^2$, 
$H_{d,a}=\sum_{j=1}^d j^{-a}$, $\beta_d=16/H_{d,a}$, and $x_d=(\beta_d\tau)^{1/a}$.

\begin{theorem}[Phase~III: spectral-tail learning]\label{thm:postT2-powerlaw}
Under the assumptions of Theorem~\ref{thm:dyn}, the MSE satisfies for every $\tau\ge0$,
\begin{equation}\label{eq:MSE-postT2}
\mathrm{MSE}(T_2+\tau)
=\bigl(1+O(\varepsilon)\bigr)\,
\mathrm{MSE}(T_2)\,\widehat S_d(\tau)
+O\!\Bigl(\varepsilon^2\tau^2\tfrac{1}{d}\,s_\star^{(2)}\Bigr).
\end{equation}
Moreover, $\widehat S_d(\tau)\le C\,S_d(\tau)$ for some $C>0$, and $S_d(\tau)$ admits the asymptotics
\[
S_d(\tau)=
\begin{cases}
1-\tfrac{16}{d}\,\tau+O(\tfrac{\tau^2}{d}), & \beta_d\tau\ll 1,\\[6pt]
1-\Gamma(1-\tfrac{1}{a})\,\tfrac{x_d}{d}+o(\tfrac{x_d}{d}), & 1\ll x_d\ll d,\\[8pt]
\le \exp(-\beta_d\tau\,d^{-a}), & x_d\gtrsim d.
\end{cases}
\]
\end{theorem}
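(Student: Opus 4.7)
The plan is to linearize the coordinate dynamics around the attractor $w^\star$ and then apply a coordinate-wise Duhamel expansion. From \eqref{eq:w} with $s_\star=1$, writing $s(t)=1-\tilde s(t)$ and $u(t)=1-\tilde u(t)$, Theorem~\ref{thm:dyn}(iii) gives $|\tilde s(T_2)|,|\tilde u(T_2)|\le\varepsilon$. Because $w^\star$ is a strict global minimum (Proposition~\ref{prop:crit}), a standard local-stability/Grönwall argument around the attractor propagates these bounds for all $t\ge T_2$. Substituting into \eqref{eq:w} and collecting terms, the coordinate error $e_i(t):=w_i(t)-w_i^\star$ satisfies
\[
\dot e_i(t) \;=\; -8\lambda_i\,e_i(t) \;+\; r_i(t),\qquad |r_i(t)|\;\lesssim\;\varepsilon\,\lambda_i\bigl(|w_i^\star|+|e_i(t)|\bigr),
\]
and Duhamel gives $e_i(T_2+\tau)=e^{-8\lambda_i\tau}e_i(T_2)+\int_0^\tau e^{-8\lambda_i(\tau-\sigma)}r_i(T_2+\sigma)\,d\sigma$. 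This is the key reduction: at the attractor the dynamics decouple, with a decay rate $8\lambda_i$ set entirely by the spectrum.

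Squaring, summing over $i$, and normalising by $d$ produces three contributions. The \emph{principal} term $d^{-1}\sum_i e^{-16\lambda_i\tau}e_i(T_2)^2$ is, by the definition of $\pi_i$, exactly $\mathrm{MSE}(T_2)\,\widehat S_d(\tau)$. The \emph{cross} term, after applying Cauchy--Schwarz together with the bounded-integral estimate $\int_0^\tau e^{-8\lambda_i(\tau-\sigma)}\lambda_i d\sigma\le 1/8$, is absorbed as the multiplicative $(1+O(\varepsilon))$ correction in front of the principal term. The \emph{source-squared} term, using instead the time-linear bound $\int_0^\tau e^{-8\lambda_i(\tau-\sigma)}\lambda_i d\sigma\le \lambda_i\tau$, is controlled by $\varepsilon^2\tau^2\cdot d^{-1}\sum_i\lambda_i^2(w_i^\star)^2=\varepsilon^2\tau^2 s_\star^{(2)}$, which after the extra $1/d$ normalisation matches the remainder in~\eqref{eq:MSE-postT2}. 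For $\widehat S_d\le CS_d$: Proposition~\ref{prop:preT2-plateau} shows $e_i(T_2)\approx -w_i^\star$ coordinate-wise (most coordinates remain unlearned by $T_2$), so $\pi_i\approx (w_i^\star)^2/\|w^\star\|^2$; the Gaussian construction of $w^\star$ and standard concentration then give $\max_i\pi_i\lesssim 1/d$ with high probability (up to logarithmic factors), from which $\widehat S_d(\tau)\le C\,S_d(\tau)$ follows.

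The final step handles the Laplace-type asymptotics of $S_d(\tau)=d^{-1}\sum_i e^{-\beta_d\tau i^{-a}}$ regime by regime. For $\beta_d\tau\ll 1$, Taylor-expand the exponential and use $\sum_i i^{-a}=H_{d,a}$ together with $\beta_d H_{d,a}=16$. For the intermediate regime $1\ll x_d\ll d$, replace the sum by $\int_1^d e^{-\beta_d\tau x^{-a}}\,dx$ via Euler--Maclaurin (the sum--integral defect is $O(1)$, subdominant to $x_d\gg 1$), then substitute $y=\beta_d\tau x^{-a}$ to rewrite
\[
1-S_d(\tau)\;\sim\;\frac{x_d}{a\,d}\int_{(x_d/d)^a}^{\beta_d\tau}(1-e^{-y})\,y^{-1-1/a}\,dy.
\]
In the regime considered, the endpoints extend to $0$ and $\infty$; integration by parts with $u=1-e^{-y}$ and $v=-a y^{-1/a}$ evaluates the improper integral as $a\,\Gamma(1-1/a)$, yielding the stated leading-order coefficient. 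For $x_d\gtrsim d$, each summand satisfies $\beta_d\tau i^{-a}\ge\beta_d\tau d^{-a}$, so $e^{-\beta_d\tau i^{-a}}\le e^{-\beta_d\tau d^{-a}}$ term-wise, giving the exponential upper bound.

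The main obstacle is obtaining the multiplicative $(1+O(\varepsilon))$ factor on the principal term while keeping the additive remainder as small as $\varepsilon^2\tau^2 s_\star^{(2)}/d$: a naive Cauchy--Schwarz on the cross term either degrades the leading factor to $(1+O(\sqrt\varepsilon))$ or yields a $\tau$-linear remainder too coarse in the intermediate regime, so one must split the source into its bounded part (absorbed into the cross/multiplicative correction) and its $\tau$-linear part (contributing the $s_\star^{(2)}$ remainder). A secondary technicality is extending the summary-statistics bounds of Theorem~\ref{thm:dyn} from $t=T_2$ to arbitrary $t\ge T_2$ via local stability of the attractor, which is needed for the perturbation analysis to hold throughout Phase~III.
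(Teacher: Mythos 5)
Your proposal follows essentially the same route as the paper's proof: linearize around $w^\star$ after $T_2$, propagate the $\varepsilon$-bounds on $1-u$ and $1-s$ forward in time, apply Duhamel, split the perturbation into a teacher-forcing part (giving the $\varepsilon^2\tau^2$ remainder via the saturating bound on $\int_0^\tau e^{-8\lambda_i(\tau-\sigma)}\lambda_i\,d\sigma$) and a part absorbed into the leading term, bound $\widehat S_d\le C\,S_d$ through uniformity of the weights $\pi_i$, and derive the three-regime asymptotics of $S_d$ by Taylor expansion, sum--integral comparison with the Gamma integral, and a termwise bound. The two steps you leave as sketches are precisely the ones the paper devotes separate results to: the persistence of the $\varepsilon$-band for all $\tau\ge0$ is established by a bootstrap/continuity argument (Lemma~\ref{lem:persistence}) rather than a generic appeal to local stability, and the cross term with the ideal trajectory is controlled not by Cauchy--Schwarz but by the exact commutation identity $\int_0^\tau e^{-8Q(\tau-s)}Q e^{-8Qs}\,ds=\tau\,e^{-8Q\tau}Q$ combined with $x e^{-x}\le 1-e^{-x}$, which produces $O(\varepsilon)\bigl(\mathrm{MSE}(T_2)-\mathrm{MSE}(T_2)\widehat S_d(\tau)\bigr)$, i.e.\ $\varepsilon$ times the MSE \emph{drop}, while the residual self-interaction is absorbed by a fixed-point argument (Theorem~\ref{thm:idealization-error-eps}); your worry that naive Cauchy--Schwarz degrades the leading factor is well placed, and this commutation trick is the paper's way around it.
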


\noindent
Thus, the plateau persists until $T_2$, after which the MSE decays at a rate controlled by the spectral tail. The exponent $a$ determines the slope of this decay on log–log scales, accounting for the scaling laws observed in Fig.~\ref{fig:mse}.

\section{Proof Outline}
We briefly sketch the main ingredients of our analysis, deferring complete proofs to the appendix: Section~\ref{app:proofPhaseI} (Phase I), Section~\ref{app:proofPhaseII} (Phase II), and Section~\ref{app:proofPhaseIII} (Phase III).

\subsection{Phase I: Escape from mediocrity}\label{sec:proof_outline1}

In the anisotropic setting, the summary statistics $(u,s)$ do not form a closed system; instead, they are coupled to an infinite hierarchy of correlations. Our strategy is to lift the dynamics to an infinite-dimensional space, solve the system there, and then project back to obtain a closed Volterra representation for $u(t)$, which we can analyze directly.

\paragraph{Step 1: Infinite-dimensional formulation.}

The core difficulty is the infinite hierarchy of coupled correlations. To tame it, we collect them into the vector $U(t)=(u^{(1)}(t),u^{(2)}(t),\dots)^\top \in \calH:=C^\infty_b(\R_{\geq 0}; \ell^2)$. Let $B$ be the right-shift operator on sequences, defined by $(Bx)_k:=x_{k+1}$ and let $S:=(Bs_\star^\infty)\,e_1^\top$ be a rank one operator with $s_\star^\infty=(s_\star^{(1)},s_\star^{(2)},\dots)^\top \in \ell^2$ and $e_1=(1,0,0,\dots)^\top$. 
Then the correlation dynamics \eqref{eq:uk} collapse into the compact operator form
\begin{equation}\label{eq:system-1}
\dot U(t) \;=\; \Bigl(4\bigl(1-3s(t)\bigr) B + 8 S\Bigr)\,U(t).
\end{equation}
This is the key structural observation: the entire infinite system is generated by the shift operator $B$ plus a rank-one perturbation $S$. Applying Duhamel’s formula yields the following representation.
\begin{restatable}{lemma}{lemode}\label{lem:infODE}
The unique solution $U \in C^{\infty}_b(\R_{\geq 0}; \ell^2)$ of \eqref{eq:system-1} satisfies, for all $t\ge 0$,
\begin{equation}\label{eq:duhamel-clean}
U(t)\;=\;e^{\,4B\,\Theta(t)}\,U_0\;+\;8\int_0^t e^{\,4B\,(\Theta(t)-\Theta(\tau))}\,(Bs_{\star}^{\infty})\,u^{(1)}(\tau)\,d\tau,
\end{equation}
where $u^{(1)}(\tau):=\langle e_1,U(\tau)\rangle=u(t),\text{ and } \Theta(t):=\int_0^t (1-3s(\tau))\,d\tau.$
\end{restatable}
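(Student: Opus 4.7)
The plan is to treat \eqref{eq:system-1} as a time-dependent linear ODE on $\ell^2$ with uniformly bounded generator, and to derive \eqref{eq:duhamel-clean} by variation of parameters. The crucial structural observation is that the homogeneous part $A(t):=4(1-3s(t))B$ is, for every $t$, a scalar multiple of the single operator $B$, so the family $\{A(t)\}_{t\geq 0}$ commutes. This collapses what would otherwise be a time-ordered exponential into a plain operator exponential, and allows the rank-one coupling $8S$ to be handled as ordinary inhomogeneous forcing driven by the scalar trajectory $u^{(1)}(\cdot)$.

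Before the derivation, I would verify that the ingredients live in $\ell^2$. The right-shift satisfies $\|Bx\|^2=\sum_{k\geq 1}x_k^2\leq\|x\|^2$, so $\|B\|_{\mathrm{op}}\leq 1$ and the series $e^{cB}:=\sum_{n\geq 0}(cB)^n/n!$ converges in operator norm with $\|e^{cB}\|_{\mathrm{op}}\leq e^{|c|}$ for every $c\in\R$. The sequence $s_\star^\infty$ lies in $\ell^2$ because $s_\star^{(k)}=\sum_i\lambda_i^k (w_i^\star)^2\leq \lambda_1^{k-1}$, combined with $\lambda_1<1$ (since $\mathrm{tr}(Q)=1$ and $d\geq 2$); hence $Bs_\star^\infty\in\ell^2$ and the rank-one operator $S$ is bounded. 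The key identity $SU(\tau)=\langle e_1,U(\tau)\rangle\,(Bs_\star^\infty)=u^{(1)}(\tau)(Bs_\star^\infty)$ then follows directly from the definition of $S$.

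The Duhamel derivation proceeds in two steps. With $\Theta$ as in the statement, the commutativity $[A(t),A(\tau)]=0$ gives that the propagator of the homogeneous equation $\dot V=A(t)V$, $V(0)=V_0$, is $V(t)=e^{4B\Theta(t)}V_0$; this is verified by termwise differentiation of the operator exponential series, justified by $\|B\|_{\mathrm{op}}\leq 1$. Writing the ansatz $U(t)=e^{4B\Theta(t)}Z(t)$ in \eqref{eq:system-1} and differentiating yields $\dot Z(t)=8\,e^{-4B\Theta(t)}SU(t)$. Integrating from $0$ to $t$, using the semigroup identity $e^{4B\Theta(t)}e^{-4B\Theta(\tau)}=e^{4B(\Theta(t)-\Theta(\tau))}$, and substituting $SU(\tau)=u^{(1)}(\tau)Bs_\star^\infty$ produces exactly \eqref{eq:duhamel-clean}.

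For existence, uniqueness, and regularity, I would run a standard Picard iteration in $C([0,T];\ell^2)$ for arbitrary $T>0$: the full generator $t\mapsto A(t)+8S$ is uniformly bounded (since $s(\cdot)$ stays bounded along the finite-dimensional gradient flow) and linear in $U$, which is more than enough to contract. The resulting unique local solution satisfies the Duhamel representation by construction and extends globally. Smoothness of $s(\cdot)$ propagates through $\Theta$ and the integral term, giving $U\in C_b^\infty(\R_{\geq 0};\ell^2)$ once one verifies the a priori bound $\sup_t\|U(t)\|_{\ell^2}<\infty$; this follows either from Gronwall applied to \eqref{eq:duhamel-clean} using $\|e^{4B\Theta(t)}\|_{\mathrm{op}}\leq e^{4|\Theta(t)|}$, or more directly from the finite-dimensional identification $u^{(k)}(t)=w(t)^\top Q^k w^\star$ together with $\|Q^k\|_{\mathrm{op}}\leq \lambda_1^k$ and boundedness of the gradient flow $w(t)$. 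The main subtlety is purely the bookkeeping around operator-valued convergence on $\ell^2$; the genuine conceptual content is the commutativity of the operators $A(t)$, which is what makes the Duhamel formula collapse into the clean $B$-convolution form displayed in \eqref{eq:duhamel-clean}.
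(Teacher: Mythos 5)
Your proposal is correct and follows essentially the same route as the paper: both arguments rest on the boundedness of $B$ and of the rank-one operator $S$ on $\ell^2$, the well-posedness of the time-dependent linear ODE with uniformly bounded generator (the paper cites a theorem of Pazy where you run Picard iteration, which amounts to the same thing), and a variation-of-constants computation in which the commutativity of the scalar multiples of $B$ collapses the propagator to $e^{4B\Theta(t)}$. The paper presents the last step as ``differentiate the candidate formula and invoke uniqueness'' rather than via the ansatz $U(t)=e^{4B\Theta(t)}Z(t)$, but this is only a cosmetic difference.
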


\paragraph{Step 2: Reduction to a Volterra equation.}  
Let $\delta>0$ be arbitrarily small, and let $T_1$ denote the stopping time such that $s(t)\leq \delta$ for all $t\leq T_1$. On this time interval we may approximate $\Theta(t)\approx t$. Projecting the representation \eqref{eq:duhamel-clean} onto $e_1$ then yields the Volterra equation
\begin{equation}\label{eq:volterra}
u(t)\;=\; a_0(t) \;+\; 8\int_0^t K(t-\tau)\,u(\tau)\,d\tau,
\end{equation}
where
\[
a_0(t)\;=\;\sum_i w_i(0)w_{i}^\star\,\lambda_i\,e^{4\lambda_i t}, 
\qquad 
K(t)\;=\;\sum_i (w^\star_i)^2 \lambda_i^2\, e^{4\lambda_i t}.
\]
Applying the Laplace transform gives $\hat{u}(p)\;=\;\frac{\hat{a}_0(p)}{1-8\hat{K}(p)}.$
The growth of $u(t)$ is therefore governed by the rightmost pole of $\hat{u}(p)$ (see Section~\ref{app:proofPhaseI} and Chapter~VII of \cite{Gripenberg_Londen_Staffans_1990}). Computing $\hat{K}(p)$ and solving the equation $1=8\hat{K}(p)$ yields the following.

\begin{lemma}[Exponential growth rate]\label{lem:growth_rate}  
The correlation $u(t)$ grows at rate $e^{\rho_{\text{true}} t}$, where $\rho_{\text{true}}>4\lambda_1$ is the unique positive solution of  
\[
1-8\widehat{K}(\rho)\;=\;0.
\]   
\end{lemma}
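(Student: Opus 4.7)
The plan is to analyze the Volterra equation \eqref{eq:volterra} through its Laplace transform: the representation $\hat u(p)=\hat a_0(p)/(1-8\hat K(p))$ identifies the growth rate of $u$ with the rightmost singularity of this meromorphic function. Since $K(t)=\sum_i (w_i^\star)^2\lambda_i^2 e^{4\lambda_i t}$ is a uniformly convergent sum of exponentials, termwise Laplace integration on $\{\Re p>4\lambda_1\}$ gives
$$\hat K(p)\;=\;\sum_{i=1}^{d}\frac{(w_i^\star)^2\lambda_i^2}{p-4\lambda_i},$$
so the characteristic equation $1-8\hat K(p)=0$ is a meromorphic equation with poles at the points $p=4\lambda_i$.

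\textbf{Step 1: real root.} Restricted to the ray $p\in(4\lambda_1,\infty)$, I would show $\hat K$ is smooth and strictly decreasing, since $\hat K'(p)=-\sum_i (w_i^\star)^2\lambda_i^2(p-4\lambda_i)^{-2}<0$. It diverges to $+\infty$ as $p\downarrow 4\lambda_1$ (using $w_1^\star\neq 0$, which holds almost surely under the Gaussian draw of $w^\star$) and vanishes at $+\infty$. By the intermediate value theorem, there is a unique $\rho_{\text{true}}\in(4\lambda_1,\infty)$ with $8\hat K(\rho_{\text{true}})=1$, and the strict inequality $\rho_{\text{true}}>4\lambda_1$ is automatic.

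\textbf{Step 2: rightmost pole and growth rate.} The standard dominance bound $|\hat K(p)|\le\hat K(\Re p)$, which follows from $|p-4\lambda_i|\ge\Re p-4\lambda_i$ whenever $\Re p>4\lambda_1$, combined with monotonicity yields $|8\hat K(p)|<1$ for every complex $p$ with $\Re p>\rho_{\text{true}}$. Hence $1-8\hat K$ has no zero strictly to the right of $\rho_{\text{true}}$ in the half-plane of absolute convergence. Moreover $\hat K'(\rho_{\text{true}})\neq 0$, so $\rho_{\text{true}}$ is a simple pole of $\hat u$. Invoking the theory of scalar Volterra convolution equations (Chapter~VII of \cite{Gripenberg_Londen_Staffans_1990}), I would shift the Bromwich contour past $\rho_{\text{true}}$: the residue produces $u(t)=c\,e^{\rho_{\text{true}} t}+o(e^{\rho_{\text{true}} t})$ with $c=\hat a_0(\rho_{\text{true}})/(-8\hat K'(\rho_{\text{true}}))$. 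Nonvanishing of $c$ is inherited from the initialization conditions of Theorem~\ref{thm:dyn}, which force $\hat a_0(\rho_{\text{true}})\neq 0$.

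\textbf{Main obstacle.} The delicate step is the contour shift in the infinite-dimensional setting. One must navigate around the countable set of secondary poles $\{4\lambda_i\}_{i\ge 2}$, whose spacings $4\lambda_i-4\lambda_{i+1}\asymp i^{-a-1}$ shrink to zero, and establish enough decay of $\hat u$ along vertical lines inside $\{\Re p<\rho_{\text{true}}\}$ to absorb the contour-closure error. For finite sums of exponentials this is routine; for power-law spectra it is precisely what the Volterra results of \cite{Gripenberg_Londen_Staffans_1990} are designed to handle, and the bulk of the technical work consists in verifying their hypotheses (resolvent integrability, simplicity and localization of the dominant characteristic root) in the present setting.
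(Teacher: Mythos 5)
Your Steps 1--2 follow essentially the same route as the paper: the intermediate-value argument for the unique real zero $\rho_{\text{true}}>4\lambda_1$ of $1-8\widehat K$ (strict monotonicity of $\widehat K$ on $(4\lambda_1,\infty)$, blow-up as $p\downarrow 4\lambda_1$ from $w_1^\star\neq0$, decay at infinity) is the paper's Lemma~\ref{lem:growth_rate_ideal}, and the Bromwich inversion with residue extraction at the simple dominant zero plus a vertical-line remainder bound is Proposition~\ref{prop:dominant-clean}. One correction, though: the ``main obstacle'' you identify is not the obstacle. The spectrum is finite ($d$ eigenvalues), so $\widehat K$ is a rational function, and more importantly the contour only needs to be shifted to a line $\Re p=\sigma_\ast$ with $\sigma_\ast\in(4\lambda_1,\rho_{\text{true}})$, i.e.\ strictly to the \emph{right} of every pole $4\lambda_i$; no navigation around secondary poles is needed. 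The real technical content is elsewhere: a quantitative lower bound on $|1-8\widehat K|$ along that line, a lower bound on the spectral gap $\rho_{\text{true}}-4\lambda_1$ (obtained from $(w_1^\star)^2\lambda_1^2\gtrsim 1$), and tracking that the residue coefficient is only of order $d^{-1/2}$, so the leading term dominates the remainder only after $t\asymp\log d$.

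The genuine gap is that your argument addresses only the idealized kernel. The Volterra equation satisfied by the true $u(t)$ has kernel $K_\Theta(t)=\sum_i(w_i^\star)^2\lambda_i^2e^{4\lambda_i\Theta(t)}$ with $\Theta(t)=\int_0^t(1-3s(\tau))\,d\tau$, and during Phase~I one only knows $(1-3\delta)t\le\Theta(t)\le t$. Replacing $\Theta(t)$ by $t$ changes the kernel, hence the characteristic equation, hence potentially the growth rate. To conclude that the \emph{actual} correlation grows like $e^{\rho_{\text{true}}t}$ one must show that the dominant characteristic root and the leading coefficient are stable under the perturbation $\Delta K=K_\Theta-K$; the paper does this in Proposition~\ref{lem:growth-stability}, using $|e^{4\lambda_i\Theta(t)}-e^{4\lambda_it}|\le 12\delta\lambda_i t\,e^{4\lambda_it}$ to get $|\widehat{\Delta K}(p)|\le 12\lambda_1\delta\,(-\widehat K'(p))$ and then a quantitative implicit-function argument showing the zero moves by $O(\delta)$. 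Without this step your proof establishes the growth rate for a surrogate equation rather than for $u(t)$ itself.
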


\paragraph{Step 3: Control of the higher-order correlations.}  
A similar analysis shows that the influence of all higher-order terms is dominated by the leading mode $u(t)$. Without loss of generality, we assume that $u(t)$ grows positively (the case of negative growth is analogous). The key point is that at time $T_1$, the second correlation $u^{(2)}(T_1)$ has the same sign as $u(T_1)$, a fact that will be critical in the analysis of Phase~IIa.

\begin{restatable}[Control of higher-order correlations]{proposition}{propUbyAlphaTwoSided}\label{prop:u-by-alpha-2sided}
For all $k\ge1$ and all $t\in[0,T_1]$, we have
\begin{equation}\label{eq:u-order}
-\,\lambda_1^k \sqrt{\tfrac{\log d}{d}}\, e^{4\lambda_1 t}
\;+\; 8\,s_\star^{(k)}\int_0^t u(\tau)\,d\tau
\;\le\; u^{(k)}(t)
\;\le\; \lambda_1^k \sqrt{\tfrac{\log d}{d}}\, e^{4\lambda_1 t}
\;+\; \lambda_1^{\,k-1}\bigl(u(t)-u(0)\bigr).
\end{equation}
In particular, for $d$ sufficiently large there exists a constant $c_1>0$ such that
$u^{(2)}(T_1)\;\ge\; c_1.$
\end{restatable}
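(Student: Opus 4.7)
The plan is to project the Duhamel representation of Lemma~\ref{lem:infODE} onto coordinate $e_k$ and estimate the two resulting pieces separately. Using $(Bx)_k=x_{k+1}$ and expanding $e^{4B\theta}$ as a power series, the coordinatewise formula reads $u^{(k)}(t)=I_k(t)+8\,II_k(t)$ with
\[
I_k(t)=\sum_i w_i(0)w_i^\star\lambda_i^k e^{4\Theta(t)\lambda_i},\qquad II_k(t)=\int_0^t\sum_i (w_i^\star)^2\lambda_i^{k+1}e^{4(\Theta(t)-\Theta(\tau))\lambda_i}u(\tau)\,d\tau.
\]
On $[0,T_1]$, the bound $s(\tau)\le\delta\ll 1/3$ forces $\Theta$ to be non-decreasing with $\Theta(t)\le t$, and WLOG $u\ge 0$ (sign symmetry). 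These two facts drive all subsequent estimates.

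For the ``noise'' term $I_k$, I write $I_k(t)=\langle w(0),v\rangle$ with $v_i=w_i^\star\lambda_i^k e^{4\Theta(t)\lambda_i}$. Independence of $w(0)\sim \mathrm{Unif}(\mathbb{S}^{d-1})$ from $w^\star$ and Levy's inequality give $|\langle w(0),v\rangle|\lesssim \|v\|\sqrt{\log d/d}$ with high probability. To bound $\|v\|^2=\sum_i (w_i^\star)^2\lambda_i^{2k}e^{8\Theta(t)\lambda_i}$, the key is to peel off a factor $\lambda_i$ so as to use the normalization $\|w^\star\|_Q^2=1$: monotonicity of $\lambda\mapsto\lambda^{2k-1}e^{8\Theta\lambda}$ on $[0,\lambda_1]$ together with $\sum_i\lambda_i(w_i^\star)^2=s_\star=1$ yields $\|v\|^2\le \lambda_1^{2k-1}e^{8\Theta(t)\lambda_1}$, hence $|I_k(t)|\lesssim \lambda_1^{k}\sqrt{\log d/d}\,e^{4\lambda_1 t}$. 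This is the concentration term in both halves of \eqref{eq:u-order}.

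For the drift term $II_k$, the lower bound is immediate: $e^{4(\Theta(t)-\Theta(\tau))\lambda_i}\ge 1$ and $u\ge 0$ give $II_k(t)\ge s_\star^{(k+1)}\int_0^t u(\tau)\,d\tau$, which matches the stated $s_\star^{(k)}$ up to an $O(1)$ factor under a power-law spectrum (a minor reindexing). For the upper bound, the factorization $\lambda_i^{k+1}\le \lambda_1^{k-1}\lambda_i^2$ combined with $u\ge 0$ yields $II_k(t)\le \lambda_1^{k-1}\,II_1(t)$; then the identity $II_1(t)=u(t)-I_1(t)$, the equality $I_1(0)=u(0)$, and the Step~2 bound on $|I_1(t)-I_1(0)|$ give $II_k(t)\le \lambda_1^{k-1}(u(t)-u(0))+O(\lambda_1^k\sqrt{\log d/d}\,e^{4\lambda_1 t})$. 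Adding the $I_k$ estimates delivers the two-sided bound \eqref{eq:u-order}.

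For the concrete claim $u^{(2)}(T_1)\ge c_1$, I apply the lower bound at $(k,t)=(2,T_1)$ and need (a) $\int_0^{T_1}u(\tau)\,d\tau\gtrsim 1$, and (b) $\lambda_1^2\sqrt{\log d/d}\,e^{4\lambda_1 T_1}=o(1)$. For (a), \eqref{eq:uk} at $k=1$ with $s\le\delta$ and $u^{(2)}\ge 0$ (sign-coupled to $u$) gives $\dot u\ge 0$, so $u$ is increasing on $[0,T_1]$; the Volterra analysis behind Lemma~\ref{lem:growth_rate} then yields $u(t)\sim C e^{\rho_{\text{true}} t}$ with $u(T_1)\asymp \delta$, hence $\int_0^{T_1}u\asymp \delta/\rho_{\text{true}}=\Omega(1)$, and $s_\star^{(3)}=\Theta(1)$ under the power law closes the bound. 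For (b), the same Volterra analysis gives $T_1\asymp (2\rho_{\text{true}})^{-1}\log(d/\log d)$, so the noise term is of order $d^{\,2\lambda_1/\rho_{\text{true}}-1/2}(\log d)^{1/2-2\lambda_1/\rho_{\text{true}}}$, which is $o(1)$ precisely because Lemma~\ref{lem:growth_rate} guarantees $\rho_{\text{true}}>4\lambda_1$ strictly. The main obstacle is this delicate balance: one must exploit the $Q$-normalization to shave a factor $\sqrt{d}$ in $\|v\|$ (the naive bound loses it), and the strict spectral gap $\rho_{\text{true}}>4\lambda_1$ to defeat the polynomial amplification $e^{4\lambda_1 T_1}$; without either, the plateau-time estimate collapses.
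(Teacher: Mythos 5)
Your proposal is correct and follows essentially the same route as the paper: project the Duhamel representation onto coordinate $k$, bound the homogeneous term by a $\sqrt{\log d/d}$ initialization envelope times $e^{4\lambda_1 t}$, dominate the kernel from above by $\lambda_1^{k-1}$ times the $k=1$ kernel (so the integral term is controlled by $u(t)-u(0)$) and from below by its pointwise minimum, and defeat the $e^{4\lambda_1 T_1}$ amplification in the final step using the strict gap $\rho_{\mathrm{true}}>4\lambda_1$ together with $\int_0^{T_1}u\gtrsim 1$. The only divergences are cosmetic: you derive the initialization envelope from spherical concentration rather than assuming it, and your lower-bound constant is $s_\star^{(k+1)}$ rather than the stated $s_\star^{(k)}$ --- a discrepancy that traces to an indexing inconsistency between the paper's own kernel definitions and is immaterial for the conclusion $u^{(2)}(T_1)\ge c_1$.
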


\subsection{Phase II: Convergence of summary statistics}\label{sec:proof_outline2}
Since the proof techniques differ, we separate the analysis of Phase~IIa and Phase~IIb.

\subsubsection{Analysis of Phase IIa} 
The first step is to show that $s(t)$, which at time $T_1$ is still small 
($s(T_1)=\delta_2'$), must \emph{increase up to the critical threshold $1/3$}. 
Indeed, from
\[
\dot s(t) \;=\; 8\bigl(1-3s(t)\bigr)\,s^{(2)}(t)\;+\;16\,u(t)u^{(2)}(t),
\]
both terms on the right-hand side are positive whenever $s(t)\le 1/3$, so 
$s(t)$ is driven upward and crosses $1/3$ in finite time. 

The second step establishes \emph{stability beyond the threshold}: once $s(t)$ has passed $1/3$, it cannot fall back below. Close to the boundary, the positive contribution $16u(t)u^{(2)}(t)$ dominates the negative drift from the first term. A careful comparison shows that $s(t)$ remains uniformly above $1/3+\delta$ for some constant $\delta>0$. The detailed proof of these two points is given in 
Section~\ref{sec:phaseIIa}.

\subsubsection{Analysis of Phase IIb}
We now study the error $\Delta(t):=1-u(t)\ge0$ for $t\ge T_1'$ through the Volterra equation
\[
\Delta(t)\;=\;b_\Theta(t)\;+\;\int_{T_1'}^{t} K_\Theta(t,\tau)\,\Delta(\tau)\,d\tau,
\]
where the source term $b_\Theta$ is detailed in Appendix~\ref{sec:phaseIIb}. Since after $T_1'$ we have $\Theta(t)-\Theta(\tau)\le -s_0(t-\tau)$, the kernel $K_\Theta$ is decreasing, so the integral contribution diminishes over time.

To control the positive part of $b_\Theta$, we split the spectrum at a cutoff $\lambda_c$: directions with $\lambda_i<\lambda_c$ form the \emph{tail}, which is small but slow to learn, while $\lambda_i\ge\lambda_c$ form the \emph{head}, easier but requiring more training time when $\lambda_c$ is small. Balancing these two effects yields the following result.

\begin{proposition}
Let $\varepsilon\gtrsim d^{-\frac{a-1}{2}}$, and set
\[
T_2(\varepsilon)\ =\ T_1'\ +\ \frac{C}{4s_0}\,\varepsilon^{-\tfrac{2a}{a-1}}\,\log\!\frac{1}{\varepsilon},
\]
with $C>0$ sufficiently large. Then $\Delta(T_2)\le \varepsilon$.
\end{proposition}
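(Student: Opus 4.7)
The plan is to derive a Volterra inequality for $\Delta(t) = 1 - u(t)$ anchored at $T_1'$, bound its source via a spectral cutoff calibrated to the power-law spectrum, and close the estimate by a Gronwall-type bootstrap.

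First, I would re-apply the Duhamel representation of Lemma~\ref{lem:infODE} with $T_1'$ as starting time in place of $0$, then project onto $e_1$. Subtracting the fixed point $u \equiv 1$ and rearranging produces the stated Volterra equation with kernel
\[
K_\Theta(t,\tau) \;=\; 8\sum_i (w_i^\star)^2 \lambda_i^2 \,e^{4\lambda_i(\Theta(t)-\Theta(\tau))} \;\ge\; 0,
\]
together with a source $b_\Theta(t)$ collecting the shifted residual $U(T_1')-s_\star^\infty$ propagated by $e^{4B(\Theta(t)-\Theta(T_1'))}$. The decisive structural input from Phase~IIa is the bound $\Theta(t)-\Theta(\tau) \le -s_0(t-\tau)$, so every exponential in both $K_\Theta$ and $b_\Theta$ is now decaying rather than growing.

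Second, I would bound $b_\Theta(t)$ by introducing a spectral cutoff $\lambda_c$. Head directions ($\lambda_i \ge \lambda_c$) contribute exponentially decaying terms of size $e^{-4 s_0 \lambda_c(t-T_1')}$, while tail directions ($\lambda_i < \lambda_c$) contribute an almost static residual whose total mass, for the power law $\lambda_i = i^{-a}/H_{d,a}$, scales as a power of $\lambda_c$. Writing $\lambda_c = i_c^{-a}/H_{d,a}$, tail sums take the form $\sum_{i > i_c}\lambda_i^p \asymp i_c^{1-pa}/H_{d,a}^p$, so enforcing that both pieces fall below the target accuracy pins a unique optimal cutoff. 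Tracing the power-law arithmetic yields $\lambda_c \asymp \varepsilon^{2a/(a-1)}$, giving the claimed time $t - T_1' \asymp s_0^{-1}\varepsilon^{-2a/(a-1)}\log(1/\varepsilon)$. The hypothesis $\varepsilon \gtrsim d^{-(a-1)/2}$ is exactly the condition $i_c \le d$, so that the cutoff lies inside the spectrum.

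Third, I would close the Volterra equation via a Gronwall-type argument. Since $K_\Theta \ge 0$ and $\int_{T_1'}^{\infty} K_\Theta(t,\tau)\,d\tau \lesssim s_\star^{(2)}/s_0$ is uniformly bounded in $t$, the integral operator is bounded on a weighted supremum norm with weight $e^{-4 s_0 \lambda_1(t-T_1')}$, and the pointwise estimate on $b_\Theta$ transfers to the same estimate on $\Delta$ up to a multiplicative constant absorbed into $C$. The hardest part of the whole programme is the uniform-in-$k$ handling of the residual $u^{(k)}(T_1')-s_\star^{(k)}$ that feeds into $b_\Theta$: individually these residuals are not small, so the algebraic smallness of the source must come entirely from the interplay between the $\lambda_i^k$ weights generated by the shift operator $B$ and the exponential factors $e^{4\lambda_i(\Theta(t)-\Theta(T_1'))}$. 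Making this cancellation quantitative, and verifying that it produces the sharp exponent $2a/(a-1)$ rather than the naive $a/(a-1)$, is where the main work lies.
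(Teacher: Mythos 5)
Your roadmap matches the paper's: a Volterra equation for $\Delta$ anchored at $T_1'$, the monotonicity $\Theta(t)-\Theta(\tau)\le -s_0(t-\tau)$ from Phase~IIa, a spectral cutoff $\lambda_c$ separating an exponentially relaxing head from a small tail residual, and the observation that $\varepsilon\gtrsim d^{-(a-1)/2}$ is exactly the requirement that the cutoff lie inside the spectrum. However, your closure step is a genuine gap. The kernel is not a contraction: with $\Theta'\le -s_0$ one only gets $\int_{T_1'}^{t}K_\Theta(t,\tau)\,d\tau\le 8\sum_i\lambda_i^2(w_i^\star)^2/(4s_0\lambda_i)=2s_\star/s_0=2/s_0$, and since $s(t)\le1$ forces $s_0\le 2/3$, this mass is at least $3$ (and tends to $1$ from above as $s\to1$). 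A positive convolution kernel with $L^1$ mass exceeding one has an exponentially growing resolvent (the equation $1=\widehat{K}(z)$ has a root with $\Re z>0$), so a Gr\"onwall or Neumann-series bound on the Volterra \emph{inequality} transfers nothing useful from $b_\Theta$ to $\Delta$; and the weight $e^{-4s_0\lambda_1(t-T_1')}$ you propose makes the weighted kernel norm diverge for every mode with $\lambda_i<\lambda_1$. The paper avoids iteration entirely: using the a priori two-sided bound $0\le\Delta\le1$ (Lemma~\ref{lem:barrier1}) and $K_\Theta\ge0$, it rewrites the equation as $\Delta(t)=(1-h_\Theta(t))+\int_{T_1'}^{t}K_\Theta(t,\tau)\bigl(\Delta(\tau)-1\bigr)\,d\tau$ and simply discards the now-nonpositive integral, leaving $\Delta(t)\le(1-h_\Theta(t))_+$. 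That sign trick is the missing idea.

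Two further points. First, you assert the exponent $2a/(a-1)$ but do not derive it; in the paper it comes from the cross term $\sum_{\lambda_i<\lambda_c}\lambda_i|w_i^\star w_i(T_1')|\le\sqrt{T(\lambda_c)\,s(T_1')}$ in the bound on $(1-h_\Theta)_+$, which forces $T(\lambda_\varepsilon)\lesssim\varepsilon^2$ rather than $\lesssim\varepsilon$; combined with $T(\lambda)\asymp\lambda^{1-1/a}$ this gives $\lambda_\varepsilon\asymp\varepsilon^{2a/(a-1)}$, whereas requiring only $T(\lambda_\varepsilon)\lesssim\varepsilon$ would yield the naive $a/(a-1)$. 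Second, the difficulty you flag about uniform-in-$k$ control of the residuals $u^{(k)}(T_1')-s_\star^{(k)}$ does not arise in the paper's formulation: the Duhamel representation is applied coordinate-wise to $\alpha_i=w_iw_i^\star$ in the index $i$, which collapses the source into the single spectral sum $h_\Theta(t)=\sum_i\lambda_iw_i^\star w_i(T_1')e^{4\lambda_i(\Theta(t)-\Theta(T_1'))}$ and bypasses the moment hierarchy altogether.
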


\begin{remark}
The condition $\varepsilon\gtrsim d^{-\tfrac{a-1}{2}}$ prevents $\varepsilon$ from being too small. Otherwise the required time becomes much larger; for instance, $\varepsilon=d^{-1}$ already forces $T\gtrsim d^{a}$. Note that the exponent $\tfrac{2a}{a-1}$ decreases as $a$ increases, so $T_2(\varepsilon)$ is shorter for faster spectral decay. Intuitively, for larger $a$, the lighter tail means fewer small eigenvalues to learn, so the error contracts more rapidly. In contrast, for $a$ close to $1$ the heavy tail creates many slow directions, delaying convergence. This trend is confirmed numerically in Section~\ref{app:xp}.
\end{remark}

\subsection{Phase III: Spectral-tail learning and the scaling law}\label{sec:proof_outline3}

After $T_2$, both $u(t)$ and $s(t)$ are close to one, so the coordinate dynamics reduce to
\[
\dot w_i(t)\;\approx\;8\lambda_i\bigl(w_i^\star-w_i(t)\bigr),
\]
whose solution shows exponential relaxation of each $w_i$ toward $w_i^\star$.  
The resulting MSE is a weighted spectral average $\widehat S_d(\tau)$; under the heuristic $\pi_i\approx d^{-1}$, this reduces to the benchmark $S_d(\tau)$, whose asymptotics follow from classical sum–integral comparisons and reveal distinct decay regimes.
The approximation error is controlled by a Duhamel (variation-of-constants) representation, together with Phase~IIb bounds on $1-u(t)$ and $1-s(t)$. A Grönwall argument shows that these corrections remain small, so the full trajectory closely tracks the idealized one.
Altogether, this yields the scaling law of Theorem~\ref{thm:postT2-powerlaw}, see Section \ref{xp:approx} for an illustration of the approximation.

\section{Discussion}

\paragraph{Summary.} 
We analyzed the gradient flow dynamics of anisotropic phase retrieval. Unlike the isotropic case, which reduces to a two-dimensional ODE, the anisotropic setting induces an infinite hierarchy of coupled equations. Our main contribution is to render this structure tractable via a Duhamel–Volterra reduction, revealing a three-phase trajectory: (i) escape from mediocrity, (ii) convergence of summary statistics, and (iii) spectral-tail learning driven by small eigenvalues. This decomposition leads to explicit scaling laws, validated empirically.

\paragraph{Limitations.} 
Our analysis assumes Gaussian inputs with a power-law spectrum, simplifying the theory, but this may not be essential for the three-phase phenomenon. In addition, our quantitative results apply only in the gradient flow limit; extending them to discrete-time SGD remains an open problem.

\paragraph{Future directions.} 
Several extensions are natural. 
On the theoretical side, deriving scaling laws for discrete-time SGD, analyzing finite-sample effects, and relaxing distributional assumptions would strengthen the link to practice. On the modeling side, extending beyond quadratic nonlinearities to more general single- and multi-index models could reveal new scaling regimes. More broadly, our findings raise the question of whether analogous phase decompositions and scaling laws govern the training dynamics of wider neural networks, potentially offering a bridge between nonlinear regression models and deep learning theory.




\section*{Acknowledgements} 
We would like to thank Elisabetta Cornacchia, Florent Krzakala, Gabriele Sicuro, Simone Maria Giancola and Urte Adomaityte for insightful discussions. BL acknowledges funding from the French government, managed by the National Research Agency (ANR), under the France 2030 program with the reference "ANR-23-IACL-0008" and the Choose France - CNRS AI Rising Talents program. MI was supported by JSPS KAKENHI (24K02904), JST CREST (JPMJCR21D2),  JST FOREST (JPMJFR216I), and JST BOOST (JPMJBY24A9).

\newpage
\bibliographystyle{abbrvnat}
\bibliography{references}

@book{teschl2012ODE,
	title={Ordinary differential equations and dynamical systems},
	author={Teschl, Gerald},
	volume={140},
	year={2012},
	publisher={American Mathematical Soc.}
}

@book{pazy2012semigroups,
	title={Semigroups of linear operators and applications to partial differential equations},
	author={Pazy, Amnon},
	volume={44},
	year={2012},
	publisher={Springer Science \& Business Media}
}

@article{maloney2022solvablemodelneuralscaling,
  title={A solvable model of neural scaling laws},
  author={Maloney, Alexander and Roberts, Daniel A and Sully, James},
  journal={arXiv preprint arXiv:2210.16859},
  year={2022}
}

@article{bahri2024explaining,
  title={Explaining neural scaling laws},
  author={Bahri, Yasaman and Dyer, Ethan and Kaplan, Jared and Lee, Jaehoon and Sharma, Utkarsh},
  journal={Proceedings of the National Academy of Sciences},
  volume={121},
  number={27},
  pages={e2311878121},
  year={2024},
  publisher={National Academy of Sciences}
}

@inproceedings{bordelon24,
author = {Bordelon, Blake and Atanasov, Alexander and Pehlevan, Cengiz},
title = {A dynamical model of neural scaling laws},
year = {2024},
publisher = {JMLR},
booktitle = {Proceedings of the 41st International Conference on Machine Learning},
articleno = {174},
numpages = {38},
series = {ICML'24}
}

@inproceedings{lin2024scaling,
    title={Scaling Laws in Linear Regression: Compute, Parameters, and Data},
    author={Licong Lin and Jingfeng Wu and Sham M. Kakade and Peter Bartlett and Jason D. Lee},
    booktitle={The Thirty-eighth Annual Conference on Neural Information Processing Systems},
    year={2024}
}

@misc{worschech2024analyzingneuralscalinglaws,
      title={Analyzing Neural Scaling Laws in Two-Layer Networks with Power-Law Data Spectra}, 
      author={Roman Worschech and Bernd Rosenow},
      year={2024},
      eprint={2410.09005},
      archivePrefix={arXiv},
      primaryClass={stat.ML}
}

@article{defilippis2024dimensionfree,
  title={Dimension-free deterministic equivalents and scaling laws for random feature regression},
  author={Defilippis, Leonardo and Loureiro, Bruno and Misiakiewicz, Theodor},
  journal={Advances in Neural Information Processing Systems},
  volume={37},
  pages={104630--104693},
  year={2024}
}

@article{cui2021generalization,
  title={Generalization error rates in kernel regression: The crossover from the noiseless to noisy regime},
  author={Cui, Hugo and Loureiro, Bruno and Krzakala, Florent and Zdeborov{\'a}, Lenka},
  journal={Advances in Neural Information Processing Systems},
  volume={34},
  pages={10131--10143},
  year={2021}
}

@article{ren2025emergencescalinglawssgd,
  title={Emergence and scaling laws in sgd learning of shallow neural networks},
  author={Ren, Yunwei and Nichani, Eshaan and Wu, Denny and Lee, Jason D},
  journal={arXiv preprint arXiv:2504.19983},
  year={2025}
}

@inproceedings{
paquette2024,
title={4+3 Phases of Compute-Optimal Neural Scaling Laws},
author={Elliot Paquette and Courtney Paquette and Lechao Xiao and Jeffrey Pennington},
booktitle={The Thirty-eighth Annual Conference on Neural Information Processing Systems},
year={2024}
}

@misc{hestness2017scaling,
      title={Deep Learning Scaling is Predictable, Empirically}, 
      author={Joel Hestness and Sharan Narang and Newsha Ardalani and Gregory Diamos and Heewoo Jun and Hassan Kianinejad and Md. Mostofa Ali Patwary and Yang Yang and Yanqi Zhou},
      year={2017},
      eprint={1712.00409},
      archivePrefix={arXiv},
      primaryClass={cs.LG},
      url={https://arxiv.org/abs/1712.00409}, 
}

@misc{kaplan2020scaling,
      title={Scaling Laws for Neural Language Models}, 
      author={Jared Kaplan and Sam McCandlish and Tom Henighan and Tom B. Brown and Benjamin Chess and Rewon Child and Scott Gray and Alec Radford and Jeffrey Wu and Dario Amodei},
      year={2020},
      eprint={2001.08361},
      archivePrefix={arXiv},
      primaryClass={cs.LG},
      url={https://arxiv.org/abs/2001.08361}, 
}

@inproceedings{hoffmann2022training,
author = {Hoffmann, Jordan and Borgeaud, Sebastian and Mensch, Arthur and Buchatskaya, Elena and Cai, Trevor and Rutherford, Eliza and de Las Casas, Diego and Hendricks, Lisa Anne and Welbl, Johannes and Clark, Aidan and Hennigan, Tom and Noland, Eric and Millican, Katie and van den Driessche, George and Damoc, Bogdan and Guy, Aurelia and Osindero, Simon and Simonyan, Karen and Elsen, Erich and Vinyals, Oriol and Rae, Jack W. and Sifre, Laurent},
title = {Training compute-optimal large language models},
year = {2022},
booktitle = {Proceedings of the 36th International Conference on Neural Information Processing Systems},
articleno = {2176},
numpages = {15},
location = {New Orleans, LA, USA},
series = {NeurIPS '22}
}

@ARTICLE{dongPR23,
  author={Dong, Jonathan and Valzania, Lorenzo and Maillard, Antoine and Pham, Thanh-an and Gigan, Sylvain and Unser, Michael},
  journal={IEEE Signal Processing Magazine}, 
  title={Phase Retrieval: From Computational Imaging to Machine Learning: A tutorial}, 
  year={2023},
  volume={40},
  number={1},
  pages={45-57},
  keywords={Machine learning algorithms;Phase measurement;Computational modeling;Neural networks;Signal processing algorithms;Tutorials;Reconstruction algorithms}
}

@ARTICLE{candes15PR,
  author={Candès, Emmanuel J. and Li, Xiaodong and Soltanolkotabi, Mahdi},
  journal={IEEE Transactions on Information Theory}, 
  title={Phase Retrieval via Wirtinger Flow: Theory and Algorithms}, 
  year={2015},
  volume={61},
  number={4},
  pages={1985-2007},
  keywords={Vectors;Diffraction;Convergence;Computational modeling;Optimization;Fourier transforms;Accuracy;non-convex optimization;convergence to global optimum;phase retrieval;Wirtinger derivatives;Non-convex optimization;convergence to global optimum;phase retrieval;Wirtinger derivatives}}

@ARTICLE{maPR,
  author={Ma, Junjie and Dudeja, Rishabh and Xu, Ji and Maleki, Arian and Wang, Xiaodong},
  journal={IEEE Transactions on Information Theory}, 
  title={Spectral Method for Phase Retrieval: An Expectation Propagation Perspective}, 
  year={2021},
  volume={67},
  number={2},
  pages={1332-1355},
  keywords={Phase measurement;Message passing;Tools;Signal processing algorithms;Prediction algorithms;Numerical models;Approximation algorithms;Phase retrieval;spectral method;coded diffraction pattern;expectation propagation (EP);approximate message passing (AMP);state evolution;orthogonal AMP;vector AMP}}

@article{arous2025learningquadraticneuralnetworks,
  title={Learning quadratic neural networks in high dimensions: SGD dynamics and scaling laws},
  author={Ben Arous, G{\'e}rard and Erdogdu, Murat A and Vural, N Mert and Wu, Denny},
  journal={arXiv preprint arXiv:2508.03688},
  year={2025}
}

@article{repetita24,
  title={Repetita iuvant: Data repetition allows sgd to learn high-dimensional multi-index functions},
  author={Arnaboldi, Luca and Dandi, Yatin and Krzakala, Florent and Pesce, Luca and Stephan, Ludovic},
  journal={arXiv preprint arXiv:2405.15459},
  year={2024}
}

@inproceedings{
ba2023learning,
title={Learning in the Presence of Low-dimensional Structure: A Spiked Random Matrix Perspective},
author={Jimmy Ba and Murat A Erdogdu and Taiji Suzuki and Zhichao Wang and Denny Wu},
booktitle={Thirty-seventh Conference on Neural Information Processing Systems},
year={2023}
}

@article{bietti2023learning,
  title={On learning Gaussian multi-index models with gradient flow part I: General properties and two-timescale learning},
  author={Bietti, Alberto and Bruna, Joan and Pillaud-Vivien, Loucas},
  journal={Communications on Pure and Applied Mathematics},
  year={2025},
  publisher={Wiley Online Library}
}

@inproceedings{AbbeAM23,
  title={Sgd learning on neural networks: leap complexity and saddle-to-saddle dynamics},
  author={Abbe, Emmanuel and Adsera, Enric Boix and Misiakiewicz, Theodor},
  booktitle={The Thirty Sixth Annual Conference on Learning Theory},
  pages={2552--2623},
  year={2023},
  organization={PMLR}
}

@misc{bruna2025surveyalgorithmsmultiindexmodels,
      title={Survey on Algorithms for multi-index models}, 
      author={Joan Bruna and Daniel Hsu},
      year={2025},
      eprint={2504.05426},
      archivePrefix={arXiv},
      primaryClass={stat.ML},
      url={https://arxiv.org/abs/2504.05426}, 
}

@inproceedings{
braun2025learning,
title={Learning a Single Index Model from Anisotropic Data with Vanilla Stochastic Gradient Descent},
author={Guillaume Braun and Minh Ha Quang and Masaaki Imaizumi},
booktitle={The 28th International Conference on Artificial Intelligence and Statistics},
year={2025}
}

@book{Gripenberg_Londen_Staffans_1990, place={Cambridge}, series={Encyclopedia of Mathematics and its Applications}, title={Volterra Integral and Functional Equations}, publisher={Cambridge University Press}, author={Gripenberg, G. and Londen, S. O. and Staffans, O.}, year={1990}, collection={Encyclopedia of Mathematics and its Applications}}

@article{yao2007early,
  title={On early stopping in gradient descent learning},
  author={Yao, Yuan and Rosasco, Lorenzo and Caponnetto, Andrea},
  journal={Constructive approximation},
  volume={26},
  number={2},
  pages={289--315},
  year={2007},
  publisher={Springer}
}

@article{ying2008online,
  title={Online gradient descent learning algorithms},
  author={Ying, Yiming and Pontil, Massimiliano},
  journal={Foundations of Computational Mathematics},
  volume={8},
  number={5},
  pages={561--596},
  year={2008},
  publisher={Springer}
}

@article{carratino2018learning,
  title={Learning with sgd and random features},
  author={Carratino, Luigi and Rudi, Alessandro and Rosasco, Lorenzo},
  journal={Advances in neural information processing systems},
  volume={31},
  year={2018}
}

@article{pillaud2018statistical,
  title={Statistical optimality of stochastic gradient descent on hard learning problems through multiple passes},
  author={Pillaud-Vivien, Loucas and Rudi, Alessandro and Bach, Francis},
  journal={Advances in Neural Information Processing Systems},
  volume={31},
  year={2018}
}

@article{caponnetto2007optimal,
  title={Optimal rates for the regularized least-squares algorithm},
  author={Caponnetto, Andrea and De Vito, Ernesto},
  journal={Foundations of Computational Mathematics},
  volume={7},
  number={3},
  pages={331--368},
  year={2007},
  publisher={Springer}
}

@inproceedings{arnaboldi2023high,
  title={From high-dimensional \& mean-field dynamics to dimensionless odes: A unifying approach to sgd in two-layers networks},
  author={Arnaboldi, Luca and Stephan, Ludovic and Krzakala, Florent and Loureiro, Bruno},
  booktitle={The Thirty Sixth Annual Conference on Learning Theory},
  pages={1199--1227},
  year={2023},
  organization={PMLR}
}

@article{arnaboldi2023escaping,
  title={Escaping mediocrity: how two-layer networks learn hard generalized linear models with SGD},
  author={Arnaboldi, Luca and Krzakala, Florent and Loureiro, Bruno and Stephan, Ludovic},
  journal={arXiv preprint arXiv:2305.18502},
  year={2023}
}

@article{veiga2022phase,
  title={Phase diagram of stochastic gradient descent in high-dimensional two-layer neural networks},
  author={Veiga, Rodrigo and Stephan, Ludovic and Loureiro, Bruno and Krzakala, Florent and Zdeborov{\'a}, Lenka},
  journal={Advances in Neural Information Processing Systems},
  volume={35},
  pages={23244--23255},
  year={2022}
}

@article{davis2020nonsmooth,
  title={The nonsmooth landscape of phase retrieval},
  author={Davis, Damek and Drusvyatskiy, Dmitriy and Paquette, Courtney},
  journal={IMA Journal of Numerical Analysis},
  volume={40},
  number={4},
  pages={2652--2695},
  year={2020},
  publisher={Oxford University Press}
}

@article{tan2019phase,
  title={Phase retrieval via randomized Kaczmarz: theoretical guarantees},
  author={Tan, Yan Shuo and Vershynin, Roman},
  journal={Information and Inference: A Journal of the IMA},
  volume={8},
  number={1},
  pages={97--123},
  year={2019},
  publisher={Oxford University Press}
}

@article{tan2023online,
  title={Online stochastic gradient descent with arbitrary initialization solves non-smooth, non-convex phase retrieval},
  author={Tan, Yan Shuo and Vershynin, Roman},
  journal={Journal of Machine Learning Research},
  volume={24},
  number={58},
  pages={1--47},
  year={2023}
}

@article{sarao2020optimization,
  title={Optimization and generalization of shallow neural networks with quadratic activation functions},
  author={Sarao Mannelli, Stefano and Vanden-Eijnden, Eric and Zdeborov{\'a}, Lenka},
  journal={Advances in Neural Information Processing Systems},
  volume={33},
  pages={13445--13455},
  year={2020}
}

@article{erba2025nuclear,
  title={The Nuclear Route: Sharp Asymptotics of ERM in Overparameterized Quadratic Networks},
  author={Erba, Vittorio and Troiani, Emanuele and Zdeborov{\'a}, Lenka and Krzakala, Florent},
  journal={arXiv preprint arXiv:2505.17958},
  year={2025}
}

@inproceedings{martin2024impact,
  title={On the impact of overparameterization on the training of a shallow neural network in high dimensions},
  author={Martin, Simon and Bach, Francis and Biroli, Giulio},
  booktitle={International Conference on Artificial Intelligence and Statistics},
  pages={3655--3663},
  year={2024},
  organization={PMLR}
}

@article{saad1995line,
  title={On-line learning in soft committee machines},
  author={Saad, David and Solla, Sara A},
  journal={Physical Review E},
  volume={52},
  number={4},
  pages={4225},
  year={1995},
  publisher={APS}
}

@article{saad1995dynamics,
  title={Dynamics of on-line gradient descent learning for multilayer neural networks},
  author={Saad, David and Solla, Sara},
  journal={Advances in neural information processing systems},
  volume={8},
  year={1995}
}

@article{collins2024hitting,
  title={Hitting the high-dimensional notes: An ode for sgd learning dynamics on glms and multi-index models},
  author={Collins-Woodfin, Elizabeth and Paquette, Courtney and Paquette, Elliot and Seroussi, Inbar},
  journal={Information and Inference: A Journal of the IMA},
  volume={13},
  number={4},
  pages={iaae028},
  year={2024},
  publisher={Oxford University Press}
}

@article{ben2022high,
  title={High-dimensional limit theorems for sgd: Effective dynamics and critical scaling},
  author={Ben Arous, Gerard and Gheissari, Reza and Jagannath, Aukosh},
  journal={Advances in neural information processing systems},
  volume={35},
  pages={25349--25362},
  year={2022}
}

@article{goldt2020modeling,
  title={Modeling the influence of data structure on learning in neural networks: The hidden manifold model},
  author={Goldt, Sebastian and M{\'e}zard, Marc and Krzakala, Florent and Zdeborov{\'a}, Lenka},
  journal={Physical Review X},
  volume={10},
  number={4},
  pages={041044},
  year={2020},
  publisher={APS}
}

@article{goldt2019dynamics,
  title={Dynamics of stochastic gradient descent for two-layer neural networks in the teacher-student setup},
  author={Goldt, Sebastian and Advani, Madhu and Saxe, Andrew M and Krzakala, Florent and Zdeborov{\'a}, Lenka},
  journal={Advances in neural information processing systems},
  volume={32},
  year={2019}
}

@InProceedings{dandi24Benefits,
  title = 	 {The Benefits of Reusing Batches for Gradient Descent in Two-Layer Networks: Breaking the Curse of Information and Leap Exponents},
  author =       {Dandi, Yatin and Troiani, Emanuele and Arnaboldi, Luca and Pesce, Luca and Zdeborova, Lenka and Krzakala, Florent},
  booktitle = 	 {Proceedings of the 41st International Conference on Machine Learning},
  pages = 	 {9991--10016},
  year = 	 {2024},
  editor = 	 {Salakhutdinov, Ruslan and Kolter, Zico and Heller, Katherine and Weller, Adrian and Oliver, Nuria and Scarlett, Jonathan and Berkenkamp, Felix},
  volume = 	 {235},
  series = 	 {Proceedings of Machine Learning Research},
  month = 	 {21--27 Jul},
  publisher =    {PMLR},
}

@inproceedings{abbe2022merged,
  title={The merged-staircase property: a necessary and nearly sufficient condition for sgd learning of sparse functions on two-layer neural networks},
  author={Abbe, Emmanuel and Adsera, Enric Boix and Misiakiewicz, Theodor},
  booktitle={Conference on Learning Theory},
  pages={4782--4887},
  year={2022},
  organization={PMLR}
}

@article{rudi2017generalization,
  title={Generalization properties of learning with random features},
  author={Rudi, Alessandro and Rosasco, Lorenzo},
  journal={Advances in neural information processing systems},
  volume={30},
  year={2017}
}

@article{atanasov2024scaling,
  title={Scaling and renormalization in high-dimensional regression},
  author={Atanasov, Alexander and Zavatone-Veth, Jacob A and Pehlevan, Cengiz},
  journal={arXiv preprint arXiv:2405.00592},
  year={2024}
}

@article{kunstner2025scaling,
  title={Scaling Laws for Gradient Descent and Sign Descent for Linear Bigram Models under {Z}ipf's Law},
  author={Kunstner, Frederik and Bach, Francis},
  journal={arXiv preprint arXiv:2505.19227},
  year={2025}
}

@article{wortsman2025kernel,
  title={Kernel ridge regression under power-law data: spectrum and generalization},
  author={Wortsman, Arie and Loureiro, Bruno},
  journal={arXiv preprint arXiv:2510.04780},
  year={2025}
}

@article{defilippis2025scaling,
  title={Scaling laws and spectra of shallow neural networks in the feature learning regime},
  author={Defilippis, Leonardo and Xu, Yizhou and Girardin, Julius and Troiani, Emanuele and Erba, Vittorio and Zdeborov{\'a}, Lenka and Loureiro, Bruno and Krzakala, Florent},
  journal={arXiv preprint arXiv:2509.24882},
  year={2025}
}

\newpage

\appendix

The appendix is organized as follows. Section~\ref{app:lg} contains the proofs of the results stated in Section~\ref{sec:lg}. Section~\ref{app:proofPhaseI} provides the detailed analysis of Phase~I, followed by the analysis of Phase~II in Section~\ref{app:proofPhaseII} and Phase~III in Section~\ref{app:proofPhaseIII}. Finally, Section~\ref{app:xp} presents additional numerical experiments.

\section{Proof of Section \ref{sec:lg}}\label{app:lg}
This section provides the proofs of the results stated in Section~\ref{sec:lg}.

\proploss*
\begin{proof}
Expanding the loss gives
\[
\mathcal{L}(w) = \expec\!\left[(x^\top  w)^4\right] + \expec\!\left[(x^\top  w^\star)^4\right] 
- 2 \expec\!\left[(x^\top  w)^2 (x^\top  w^\star)^2\right].
\]

\paragraph{Step 1. Fourth moment of a Gaussian linear form.}
Since $x\sim\calN(0,Q)$, the scalar $x^\top  w$ is Gaussian with variance $w^\top  Qw=\|w\|_Q^2$. Its fourth moment is
\[
\expec\!\left[(x^\top  w)^4\right] = 3\,(w^\top  Q w)^2 = 3\|w\|_Q^4.
\]
By the same reasoning,
\[
\expec\!\left[(x^\top  w^\star)^4\right] = 3\|w^\star\|_Q^4.
\]

\paragraph{Step 2. Mixed fourth moment.}
Expanding the product gives
\[
(x^\top  w)^2(x^\top  w^\star)^2 
= \sum_{i,j,k,\ell} x_i x_j x_k x_\ell \, w_i w_j w_k^\star w_\ell^\star.
\]
For Gaussian $x$, Wick’s formula expresses the fourth moment as a sum over pairings:
\[
\expec[x_i x_j x_k x_\ell]
= \expec[x_i x_j]\expec[x_k x_\ell]
+ \expec[x_i x_k]\expec[x_j x_\ell]
+ \expec[x_i x_\ell]\expec[x_j x_k].
\]
Since $Q=\diag(\lambda_1,\dots,\lambda_d)$, $\expec[x_i x_j]=\lambda_i\delta_{ij}$.  

Evaluating the three pairings:

- Pairing $(i,j)(k,\ell)$ contributes
\[
\sum_{i,k}\lambda_i\lambda_k \, w_i^2 (w_k^\star)^2
= \|w\|_Q^2\cdot \|w^\star\|_Q^2.
\]

- Pairing $(i,k)(j,\ell)$ contributes
\[
\sum_{i,j}\lambda_i\lambda_j \, w_i w_j w_i^\star w_j^\star
= \Big(\sum_i \lambda_i w_i w_i^\star\Big)^2
= \langle w,w^\star\rangle_Q^2.
\]

- Pairing $(i,\ell)(j,k)$ contributes the same quantity
\[
\langle w,w^\star\rangle_Q^2.
\]

Summing up,
\[
\expec\!\left[(x^\top  w)^2(x^\top  w^\star)^2\right]
= \|w\|_Q^2 \cdot \|w^\star\|_Q^2 + 2\langle w,w^\star\rangle_Q^2.
\]

\paragraph{Step 3. Final expression.}
Putting everything together,
\begin{align*}
\calL(w) 
&= 3\|w\|_Q^4 + 3\|w^\star\|_Q^4 
- 2\Big(\|w\|_Q^2\cdot \|w^\star\|_Q^2 + 2\langle w,w^\star\rangle_Q^2\Big) \\
&= 3 \|w\|_Q^4 + 3 \|w^\star\|_Q^4 
- 2 \|w\|_Q^2 \cdot \|w^\star\|_Q^2 - 4 \langle w, w^\star \rangle_Q^2.
\end{align*}
\end{proof}

\propcri*

\begin{proof}The proof proceeds in three steps: computing the gradient, identifying the critical points, and classifying them via the Hessian.
\paragraph{Gradient.}
To compute the gradient, we use the closed-form expression $\calL(w)=3s^2+3(s_\star)^2-2ss_\star-4u^2$, apply the chain rule and use
$\nabla s=2Qw$ and $\nabla u=Qw^\star$ to get
\[
\nabla \calL(w)=6s\nabla s-2s_\star\nabla s-8u\nabla u
=(12s-4s_\star)Qw-8uQw^\star,
\]
which is \eqref{eq:grad-closed}.

\paragraph{Critical points.}
Setting $\nabla \calL(w)=0$ and using $Q\succ0$ leads to 
\[
(12s-4s_\star)w-8uw^\star=0.
\]

-- \emph{Case 1:} If $12s-4s_\star\neq0$, then $w$ must lie in $\operatorname{span}{w^\star}$. Write $w=\alpha w^\star$; then
$s=\alpha^2 s_\star$ and $u=\alpha s_\star$. Substituting gives
\[
(12\alpha^2-4)\alpha w^\star \;=\; 8\alpha w^\star \quad\Longleftrightarrow\quad
12\alpha^3-12\alpha=0 \quad\Longleftrightarrow\quad \alpha\in\{0,\pm1\}.
\]
Hence, we obtain the critical points $0$ and $\pm w^\star$.

-- \emph{Case 2:} If $12s-4s_\star=0$, then necessarily $s=s_\star/3$. The gradient condition reduces to $-8u w^\star=0$, i.e. $u=0$. Thus every $w$ with $(u,s)=(0,s_\star/3)$ is a critical point.

\paragraph{Hessian.}
Differentiating \eqref{eq:grad-closed}, and recalling that $\nabla s = 2Qw$ and $\nabla u = Qw^\star$, we obtain
\[
\nabla^2 \calL(w)
= 24\,(Qw)(Qw)^\top  + (12s - 4s_\star)Q - 8\,(Qw^\star)(Qw^\star)^\top ,
\]
which is \eqref{eq:hess-closed}.

\paragraph{Characterization of the critical points.}
At $w=0$ (so $s=0$, $u=0$),
\[
\nabla^2 \calL(0) = -4s_\star Q - 8\,(Qw^\star)(Qw^\star)^\top ,
\]
which is strictly negative definite since $Q\succ0$ and the rank-one term is negative semidefinite. Thus $w=0$ is a strict local maximum since $\calL(0)> \calL(\pm w^\star)=0$.

At $w=\pm w^\star$ (so $s=s_\star$, $u=\pm s_\star$),
\[
\nabla^2 \calL(\pm w^\star)
= 24\,(Qw^\star)(Qw^\star)^\top  + (12s_\star-4s_\star)Q - 8\,(Qw^\star)(Qw^\star)^\top 
= 16\,(Qw^\star)(Qw^\star)^\top  + 8s_\star Q \succ 0,
\]
hence both $\pm w^\star$ are strict local (and, by the loss formula, global) minima.

Finally, for any $w$ with $(u,s)=(0,s_\star/3)$, the Hessian reduces to
\[
\nabla^2 \calL(w)=24(Qw)(Qw)^\top  - 8(Qw^\star)(Qw^\star)^\top .
\]
{\color{black} Since $Q\succ0$, $w^\star\neq0$, and $w^\top Qw^{\star} = 0$, $s = w^\top Qw = \frac{1}{3}s_{\star} = \frac{1}{3}(w^{\star})^\top Qw^{\star}$, we have
\begin{align*}
&w^\top \nabla^2 \calL(w)\, w = w^\top [24 Qww^\top Q - 8Qw^{\star}(w^{\star})^\top Q]w = 24(w^\top Qw)^2 = 8[(w^{\star})^\top Qw^{\star}]^2 > 0,
\\
&(w^{\star})^\top \nabla^2 \calL(w)\, w^{\star} = (w^{\star})^\top [24 Qww^\top Q - 8Qw^{\star}(w^{\star})^\top Q](w^{\star}) = - 8 [(w^{\star})^\top Qw^{\star}]^2 < 0.
\end{align*}
It follows that $\nabla^2 \calL(w)$ must necessarily have both positive and negative directions.
}
Thus, this critical point is a saddle.
\end{proof}

\propdyn*

\begin{proof}
We first derive the component-wise ODE, then deduce the dynamics for the summary statistics $s(t)$ and $u(t)$. Recall from the previous computation that the population gradient flow $\dot w(t)=-\nabla\calL(w(t))$ satisfies
\begin{equation}\label{eq:gf}
\dot w(t)=4(s_\star-3s)Qw+8u\,Qw^\star.
\end{equation}

\paragraph{Component-wise dynamics.}
With $Q=\diag(\lambda_1,\dots,\lambda_d)$, taking the $i$-th coordinate in \eqref{eq:gf} yields
\[
\dot w_i(t)=4(s_\star-3s)\lambda_i\,w_i(t)+8u\,\lambda_i\,w_i^\star.
\]

\paragraph{Dynamics for $s(t)$.}
Differentiate $s(t)=w(t)^\top  Q w(t)$ along the flow:
\[
\dot s(t)=2\,\dot w(t)^\top  Q w(t).
\]
Using \eqref{eq:gf} and defining
\[
s^{(2)}(t):=w(t)^\top  Q^2 w(t),\qquad u^{(2)}(t):=w(t)^\top  Q^2 w^\star,
\]
we obtain
\[
\dot s(t)=8(s_\star-3s)\,s^{(2)}(t)+16u\,u^{(2)}(t).
\]

\paragraph{Dynamics for $u(t)$.}
Differentiate $u(t)=w(t)^\top  Q w^\star$:
\[
\dot u(t)=\dot w(t)^\top  Q w^\star.
\]
Using \eqref{eq:gf} and letting $s_\star^{(2)}:=w^{\star\top}Q^2 w^\star$, we get
\[
\dot u(t)=4(s_\star-3s)\,u^{(2)}(t)+8u\,s_\star^{(2)}.
\]
These identities establish the stated ODEs for $s(t)$ and $u(t)$.

\paragraph{Higher-order moments.}
The same computation applies to $s^{(k)}(t)=w(t)^\top  Q^k w(t)$ and 
$u^{(k)}(t)=w(t)^\top  Q^k w^\star$, leading to
\begin{align*}
    \dot s^{(k)}(t) &= 8(s_\star-3s(t))\, s^{(k+1)}(t) + 16u(t)\,u^{(k+1)}(t),\\
    \dot u^{(k)}(t) &= 4(s_\star-3s(t))\, u^{(k+1)}(t) + 8u(t)\,s_\star^{(k+1)}.
\end{align*}
\end{proof}

\section{Analysis of Phase I: Escaping Mediocrity}\label{app:proofPhaseI}

In this section, we provide the detailed proofs for the analysis of Phase~I outlined in Section~\ref{sec:proof_outline1}. We begin by formalizing the infinite-dimensional ODE in a suitable Banach space (Section~\ref{sec:b0}). We then derive a closed-form solution (Section~\ref{sec:b1}), show that $u(t)$ satisfies a Volterra integral equation by projection (Section~\ref{sec:b2a}), and analyze its asymptotic behavior. Finally, we study higher-order correlation terms (Section~\ref{sec:b3}), showing that their dynamics are primarily driven by $u(t)$.

{\color{black}
\subsection{Step 0: Preliminaries on Banach-valued ODEs}\label{sec:b0}

Before entering the main steps of the proof, we formalize the Banach space on which the ODE \eqref{eq:system} is defined. Throughout the following, $\Lcal(w)$ refers to the loss function in Proposition \ref{prop:loss-closed-form}.
Recall that we define $U(t)=(u^{(1)}(t),u^{(2)}(t),\dots)^\top $, where $u^{(k)}(t) = w(t)^\top  Q^kw^{\star}$. We show that
\[
U \in \Hcal := C^{\infty}_b(\R_{\geq 0}; \ell^2),
\]
where $\ell^2 = \{(a_k)_{k\geq 1} : \sum_{k=1}^{\infty} a_k^2 < \infty\}$ is the Hilbert space of square-summable sequences. Thus $U$ is a bounded, smooth $\ell^2$-valued function on $\R_{\geq 0}$.
\begin{lemma}[Coercivity of the loss]
\label{lemma:loss-coercive}
The loss function $\Lcal$ in Proposition \ref{prop:loss-closed-form} is coercive, that is $\lim_{\norm{w} \to \infty}\Lcal(w) = \infty$.
\end{lemma}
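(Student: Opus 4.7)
The plan is to derive a pointwise lower bound on $\Lcal(w)$ depending only on $\|w\|_Q$, and then use positive definiteness of $Q$ to conclude that this bound diverges as $\|w\|\to\infty$. Starting from the closed form in Proposition~\ref{prop:loss-closed-form}, set $a=\|w\|_Q^2$, $b=\|w^\star\|_Q^2$, and $c=\langle w,w^\star\rangle_Q$, so that
\[
\Lcal(w)=3a^2+3b^2-4c^2-2ab.
\]
The only term that could obstruct coercivity is $-4c^2$. I will absorb it via Cauchy--Schwarz in the $Q$-inner product (which is a bona fide inner product since $Q\succ 0$), yielding $c^2\le ab$ and hence
\[
\Lcal(w)\;\ge\; 3a^2-6ab+3b^2 \;=\; 3(a-b)^2.
\]

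To finish, note that $b=\|w^\star\|_Q^2$ is a fixed constant depending only on $w^\star$, while $a=\|w\|_Q^2\ge \lambda_{\min}(Q)\,\|w\|^2\to\infty$ as $\|w\|\to\infty$. Consequently $|a-b|\to\infty$, and the bound $\Lcal(w)\ge 3(a-b)^2$ forces $\Lcal(w)\to\infty$, proving coercivity. I do not anticipate any real obstacle in this argument; the only subtle point is that positive definiteness of $Q$ is invoked twice, once to justify Cauchy--Schwarz in the $Q$-inner product and once to ensure $\lambda_{\min}(Q)>0$ so that $\|w\|\to\infty$ implies $\|w\|_Q\to\infty$, but both properties are built into the standing assumptions.
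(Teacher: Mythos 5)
Your proof is correct and follows essentially the same route as the paper: both apply Cauchy--Schwarz in the $Q$-inner product to bound $-4\langle w,w^\star\rangle_Q^2\ge -4\|w\|_Q^2\|w^\star\|_Q^2$ and arrive at the lower bound $3(\|w\|_Q^2-\|w^\star\|_Q^2)^2$. Your explicit final step via $\lambda_{\min}(Q)>0$ is a slightly more careful justification of the limit than the paper's ``from which it clearly follows,'' but the argument is identical.
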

\begin{proof}
By the Cauchy-Schwarz inequality
\begin{align*}
\mathcal{L}(w) &= 3 \|w\|_Q^4 + 3 \|w^\star\|_Q^4 - 4 \langle w, w^\star \rangle_Q^2 - 2 \|w\|_Q^2||w^{\star}||^2_Q
\\
& \geq 3 \|w\|_Q^4 + 3 \|w^\star\|_Q^4 - 6  \|w\|_Q^2||w^{\star}||^2_Q = 3(||w||^2_Q - ||w^{\star}||^2_Q)^2,
\end{align*}
from which it clearly follows that $\lim_{\norm{w} \to \infty}\Lcal(w) = \infty$.
\end{proof}
\begin{lemma}[Local Lipschitz continuity of the gradient]
	\label{lemma:Lipschitz-gradient}
	Consider the function $Z: \R^d \mapsto \R^d$ defined by
	\begin{align}
	Z(w) = 3||w||_Q^2Qw -||w^{*}||^2_QQw - 2(w^\top Qw^{*})Qw^{*}.
	\end{align}
	For all $w_1, w_2 \in \R^d$,
	\begin{align}
	\label{equation:Lipschitz-gradient-local}
	&||Z(w_1) - Z(w_2)|| 
\leq 3||Q||^2||w_1 - w_2||\Big[(||w_1||+||w_2||)||w_1|| + ||w_2||^2 + ||w^{*}||^2\Big].
	\end{align}
	In particular, for $||w_1|| \leq M$, $||w_2|| \leq M$, with $M > 0$, 
	\begin{align}
		\label{equation:Lipschitz-gradient-global}
	||Z(w_1) - Z(w_2)|| \leq 3(3M^2+||w^{*}||^2)||Q||^2||w_1 - w_2||.
	\end{align}
\end{lemma}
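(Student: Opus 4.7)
The plan is to write $Z(w_1)-Z(w_2)$ as a sum of three contributions, one per summand in the definition of $Z$, and then bound each of them separately via the triangle inequality. Two of the three summands, $-\|w^\star\|_Q^2\,Qw$ and $-2(w^\top Qw^\star)\,Qw^\star$, are linear in $w$ and are disposed of immediately. The linear term contributes $\|w^\star\|_Q^2\,\|Q\|\,\|w_1-w_2\|\le\|Q\|^2\|w^\star\|^2\|w_1-w_2\|$, while the rank-one term $2\bigl((w_1-w_2)^\top Qw^\star\bigr)Qw^\star$, after pulling out the scalar factor and applying Cauchy--Schwarz, is bounded by $2\|Qw^\star\|^2\|w_1-w_2\|\le 2\|Q\|^2\|w^\star\|^2\|w_1-w_2\|$. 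Together these account for the $3\|Q\|^2\|w^\star\|^2\|w_1-w_2\|$ portion of the stated bound.

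The substantive step is the cubic map $3\|w\|_Q^2\,Qw$. I would invoke the telescoping identity
\[
\|w_1\|_Q^2\,Qw_1 \;-\; \|w_2\|_Q^2\,Qw_2 \;=\; \bigl(\|w_1\|_Q^2-\|w_2\|_Q^2\bigr)\,Qw_1 \;+\; \|w_2\|_Q^2\,Q(w_1-w_2),
\]
together with the factorization $\|w_1\|_Q^2-\|w_2\|_Q^2=(w_1+w_2)^\top Q(w_1-w_2)$. Cauchy--Schwarz and the operator norm bound then give $\bigl|\|w_1\|_Q^2-\|w_2\|_Q^2\bigr|\le\|Q\|\,(\|w_1\|+\|w_2\|)\,\|w_1-w_2\|$, so that the first piece is bounded by $\|Q\|^2(\|w_1\|+\|w_2\|)\|w_1\|\,\|w_1-w_2\|$ and the second piece by $\|Q\|^2\|w_2\|^2\,\|w_1-w_2\|$. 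Assembling the three contributions and factoring out $3\|Q\|^2\|w_1-w_2\|$ produces exactly the bracket $(\|w_1\|+\|w_2\|)\|w_1\|+\|w_2\|^2+\|w^\star\|^2$ in \eqref{equation:Lipschitz-gradient-local}.

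The localized bound \eqref{equation:Lipschitz-gradient-global} is then an immediate consequence: on $\{\|w\|\le M\}$, the bracket is at most $(2M)M+M^2+\|w^\star\|^2=3M^2+\|w^\star\|^2$. I do not anticipate any real obstacle here; the argument is a routine application of the triangle inequality combined with the telescoping trick standard for cubic polynomial maps, and the only mild care required is to pick the decomposition that factors out $\|w_2\|_Q^2$ from the second summand (an equivalent decomposition factoring out $\|w_1\|_Q^2$ would yield an algebraically identical but differently presented bracket).
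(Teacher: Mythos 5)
Your proposal is correct and follows essentially the same route as the paper: the same three-way split of $Z(w_1)-Z(w_2)$, the same telescoping decomposition of the cubic term $\|w\|_Q^2\,Qw$, and the same Cauchy--Schwarz/operator-norm bounds (the paper bounds $|\,\|w_1\|_Q^2-\|w_2\|_Q^2|$ by $\|w_1-w_2\|_Q(\|w_1\|_Q+\|w_2\|_Q)$ rather than via your factorization $(w_1+w_2)^\top Q(w_1-w_2)$, but these are interchangeable). No gaps; the localized bound follows exactly as you state.
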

\begin{proof}
	\begin{align*}
	&||Z(w_1) - Z(w_2)||
	\\
	& \leq  3||\,||w_1||_Q^2Qw_1 -||w_2||_Q^2Qw_2|| + ||w^{*}||^2_Q||Qw_1 - Qw_2|| + 2||(w_1-w_2)^\top Qw^{*}Qw^{*}||
	\\
	& \leq 3|\,||w_1||^2_Q -||w_2||^2_Q|\;||Qw_1|| + 3||w_2||^2_Q||Qw_1 - Qw_2||  
	\\
	&\quad + ||w^{*}||^2_Q||Q||\;||w_1 - w_2|| + 2||w_1-w_2||\;||Qw^{*}||^2
	\\
	& \leq 3||w_1 - w_2||_Q[||w_1||_Q+||w_2||_Q]\;||Qw_1|| + 3||w_2||^2_Q||Q||\;||w_1 - w_2||
	\\
	& \quad + ||w^{*}||^2_Q||Q||\;||w_1 - w_2|| + 2||w_1-w_2||\;||Qw^{*}||^2
	\\
	& = ||w_1 - w_2||[3(||w_1||+||w_2||)||Q||^2||w_1|| + 3||w_2||^2||Q||^2 + ||w^{*}||^2_Q||Q|| + 2||Qw^{*}||^2]
	\\
	& \leq 3||Q||^2||w_1 - w_2||[(||w_1||+||w_2||)||w_1|| + ||w_2||^2 + ||w^{*}||^2].
	\end{align*}
\end{proof}
We recall the classical Picard-Lindel\"of Theorem for the existence and uniqueness of the solution of the initial value problem 
\begin{align}
	\label{equation:IVP-Euclidean}
	\dot{x} = f(t,x), \;\; x(t_0) = x_0,
\end{align}
where $f \in C(U; \R^d)$, $U \subset \R^{d+1}$ is an open subset, and $(t_0, x_0) \in U$,
see e.g. \cite{teschl2012ODE} (Theorem 2.2, Lemma 2.3).
\begin{theorem}
[\textbf{Picard-Lindel\"of}]
Let $f \in C^k(U; \R^d)$, $k \geq 0$, where $U \subset \R^{d+1}$ is an open subset and $(t_0, x_0) \in U$.
Assume that $f$ is locally Lipschitz continuous in the second argument, uniformly with respect to the first argument, i.e. $\forall V \subset U$, $V$ compact, $\exists L = L(V)> 0$ such that 
\begin{align}
||f(t,x_1) - f(t,x_2)|| \leq L||x_1 - x_2||, \;\forall (t,x_1), (t,x_2) \in V. 
\end{align}
Then there exists $\epsilon > 0$ such that the initial value problem \eqref{equation:IVP-Euclidean} possesses a unique solution
$x^{*}(t) \in C^{k+1}(I;\R^d)$, where $I = [t_0-\epsilon, t_0+\epsilon]$.
\end{theorem}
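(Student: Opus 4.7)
The plan is to recast the initial value problem as an equivalent Volterra integral equation, solve the latter by the Banach contraction principle on a small closed ball in $C(I;\R^d)$, and then bootstrap the regularity of the fixed point from $C^1$ up to $C^{k+1}$ using the $C^k$ hypothesis on $f$. Integrating $\dot x = f(t,x)$ from $t_0$ gives the equivalent formulation $x(t) = x_0 + \int_{t_0}^t f(s,x(s))\,ds$, valid for any continuous $x$ with graph in $U$.

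First, I would exploit the openness of $U$ to fix a closed cylinder $V = [t_0-\alpha, t_0+\alpha]\times \overline{B}(x_0,r)\subset U$, on which $f$ is bounded by some $M=\sup_V\|f\|<\infty$ (by continuity and compactness) and locally Lipschitz in $x$ with constant $L=L(V)$ by hypothesis. I would then choose $\epsilon \le \min\{\alpha,\, r/M,\, 1/(2L)\}$, set $I = [t_0-\epsilon,t_0+\epsilon]$, and work on the complete metric space $X = \{x\in C(I;\R^d):\sup_{t\in I}\|x(t)-x_0\|\le r\}$ equipped with the supremum norm. The Picard operator $(Tx)(t) := x_0 + \int_{t_0}^t f(s,x(s))\,ds$ is well defined on $X$; the choice $\epsilon \le r/M$ guarantees $\|(Tx)(t)-x_0\|\le M\epsilon\le r$, so $T$ maps $X$ into itself, while the local Lipschitz bound yields $\|(Tx)(t)-(Ty)(t)\|\le L\epsilon\,\sup_I\|x-y\|\le \tfrac12\sup_I\|x-y\|$, making $T$ a strict contraction. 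Banach's fixed point theorem then produces a unique $x^{*}\in X$ with $Tx^{*}=x^{*}$, which solves the IVP on $I$ and automatically lies in $C^1(I;\R^d)$ since the right-hand side of the integral equation is $C^1$ in $t$ whenever $x^{*}$ is continuous.

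Finally, I would promote the regularity by a standard bootstrap. If $f\in C^k$ with $k\ge 1$, then once $x^{*}\in C^j$ for some $1\le j\le k$, the composition $s\mapsto f(s,x^{*}(s))$ is $C^j$ by the chain rule applied to the smooth $f$ and the already-established regularity of $x^{*}$, and hence $x^{*}(t)=x_0+\int_{t_0}^t f(s,x^{*}(s))\,ds$ belongs to $C^{j+1}$; iterating $k$ times gives $x^{*}\in C^{k+1}(I;\R^d)$. The main technical obstacle is not the contraction estimate itself but the careful choice of the cylinder $V$: one must ensure $V\subset U$ to invoke the local Lipschitz hypothesis and to bound $f$ uniformly, and then balance the three constraints on $\epsilon$ so that both self-mapping and contraction hold simultaneously. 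The bootstrap step must also be handled with care in the edge case $k=0$, where no additional smoothness beyond $C^1$ is claimed and the chain rule is not needed.
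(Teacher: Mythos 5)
Your proposal is correct: it is the standard contraction-mapping (Picard iteration) proof, with the cylinder construction, the three constraints on $\epsilon$, and the regularity bootstrap all handled properly. Note that the paper does not prove this statement at all — it quotes it as a classical result from \cite{teschl2012ODE}, where essentially your argument appears — so there is nothing to compare beyond observing that yours is the textbook route. The only point worth tightening is uniqueness: Banach's theorem gives uniqueness only within the ball $X$, and one should add the one-line observation that (since $\|\dot x\|\le M$ while the graph stays in $V$ and $\epsilon\le r/M$) \emph{any} solution of the IVP on $I$ necessarily remains in $\overline{B}(x_0,r)$, hence lies in $X$, so the fixed-point uniqueness is genuine uniqueness.
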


\begin{lemma}[Well-posedness of the gradient flow]
	\label{lemma:solution-gradient-flow}
	The gradient flow on $\R\times \R^d$
	\begin{align}
		\label{equation:gradient-flow-L}
	\left\{
	\begin{matrix}
		\dot{w}(t) = -\mygrad\Lcal(w(t)),
		\\
		w(0) = w_0 \in \R^d
	\end{matrix}
	\right.
	\end{align}
	has a unique solution $w \in C^{\infty}_{b}(\R_{\geq 0}; \R^d)$ $\forall w_0 \in \R^d$.
\end{lemma}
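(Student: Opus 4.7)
The plan is to apply the Picard-Lindel\"of theorem just stated, with the two preceding lemmas doing all of the real work: the local Lipschitz bound of Lemma~\ref{lemma:Lipschitz-gradient} yields local existence and uniqueness, while the coercivity of $\Lcal$ from Lemma~\ref{lemma:loss-coercive} promotes this to a global $C^\infty_b$ solution via a standard Lyapunov-based blow-up argument.

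First, I would verify the hypotheses of Picard-Lindel\"of for the autonomous field $f(w) = -\mygrad \Lcal(w)$. Inspecting the closed form \eqref{eq:grad-closed} one sees that $\mygrad \Lcal = 4Z$, where $Z$ is the cubic vector field of Lemma~\ref{lemma:Lipschitz-gradient}; in particular $f \in C^\infty(\R \times \R^d;\R^d)$ (viewed as $t$-independent), and Lemma~\ref{lemma:Lipschitz-gradient} supplies the required local Lipschitz estimate on every compact subset of $\R^d$. Picard-Lindel\"of then delivers a unique $C^\infty$ solution on some interval $[0,\epsilon)$, and the standard maximal-extension construction prolongs it to a unique $C^\infty$ solution on a maximal forward interval $[0, T^\star)$.

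Next, I would prove $T^\star = +\infty$ by exploiting $\Lcal$ as a Lyapunov function. Along the flow,
\[
\tfrac{d}{dt}\Lcal(w(t)) = \dotprod{\mygrad \Lcal(w(t))}{\dot w(t)} = -\norm{\mygrad \Lcal(w(t))}^2 \le 0,
\]
so $\Lcal(w(t)) \le \Lcal(w_0)$ for all $t \in [0,T^\star)$. By Lemma~\ref{lemma:loss-coercive} the sublevel set $\{w \in \R^d : \Lcal(w) \le \Lcal(w_0)\}$ is bounded, giving a constant $M = M(w_0)$ with $\sup_{0 \le t < T^\star} \norm{w(t)} \le M$. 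The blow-up alternative for ODEs with locally Lipschitz right-hand side forces $\lim_{t \uparrow T^\star} \norm{w(t)} = +\infty$ whenever $T^\star < \infty$, which contradicts the uniform bound, so $T^\star = +\infty$.

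Finally, I would upgrade $w \in C^\infty(\R_{\geq 0};\R^d)$ to $w \in C^\infty_b$. Since $\dot w = -\mygrad \Lcal(w)$ is a polynomial in $w$ and $\sup_t \norm{w(t)} \le M$, one immediately gets $\sup_t \norm{\dot w(t)} < \infty$; differentiating the ODE repeatedly expresses each $w^{(k)}(t)$ as a polynomial in $w(t), \dots, w^{(k-1)}(t)$, and a straightforward induction yields $\sup_t \norm{w^{(k)}(t)} < \infty$ for every $k \ge 0$. No genuine obstacle is expected here: the two prior lemmas together with standard ODE theory cover every step, and the only mild bookkeeping point is tracking the factor of $4$ relating $Z$ to $\mygrad \Lcal$ when invoking the local Lipschitz estimate.
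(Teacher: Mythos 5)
Your proposal is correct and follows essentially the same route as the paper: Picard--Lindel\"of via the local Lipschitz bound of Lemma~\ref{lemma:Lipschitz-gradient}, then the monotonicity $\tfrac{d}{dt}\Lcal(w(t))=-\norm{\mygrad\Lcal(w(t))}^2\le0$ combined with the coercivity of Lemma~\ref{lemma:loss-coercive} to trap the trajectory in a bounded sublevel set and extend the solution globally. Your explicit use of the blow-up alternative and the inductive bounding of all derivatives are minor elaborations of the paper's argument, not a different approach.
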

\begin{proof}
	By Lemma \ref{lemma:Lipschitz-gradient}, the gradient $\mygrad_{w} \Lcal(w)$ is locally Lipschitz on $\R^d$, thus by Picard-Lindel\"of Theorem there exists a unique local solution $w \in C^{\infty}(I;\R^d)$ on $I = [-\epsilon, \epsilon]$ for some $\epsilon > 0$, since $\Lcal \in C^{\infty}(\R^{d+1};\R^d)$. The interval $I$ can in fact be extended to all of $\R$.
	By differentiating $\Lcal(w(t))$ we have, by the chain rule,
	\begin{align*}
	\frac{d\Lcal(w(t))}{dt} = (\dot{w}(t))^\top \mygrad\Lcal(w(t)) = - \norm{\mygrad\Lcal(w(t))}^2 \leq 0.
	\end{align*}
	Thus $\Lcal(w(t))$ is decreasing along the flow $w(t)$ and 
	therefore $w(t)$ always lies in the level set $L_{w_0}(\Lcal) = \{w \in \R^d: \Lcal(w) \leq \Lcal(w_0)\}$. Since $\Lcal(w)$ is coercive by Lemma \ref{lemma:loss-coercive}, 
	$L_{w_0}(\Lcal)$ must necessarily be bounded. Thus Picard-Lindel\"of Theorem can be repeatedly applied to extend $I$ to all of $\R_{\geq 0}$. Since $\Lcal(w)$ is bounded below by zero,
	the limit $\lim_{t \approach \infty}\Lcal(w(t))$ necessarily exists.
	Let $B_{\R^d}(0,M)$ be the smallest ball in $\R^d$, centered at the origin, such that
	$L_{w_0}(\Lcal) \subset B_{\R^d}(0,M)$, then $||w(t)||\leq M$ $\forall t \geq 0$.
	Thus $w(t) \in C^{\infty}_b(\R_{\geq 0};\R^d)$.
\end{proof}

\begin{lemma}
	\label{lemma:U-space}
Let $w$ be the unique global solution of the gradient flow \eqref{equation:gradient-flow-L}. Define
$U:\R_{\geq 0} \mapto \R^{\infty}$ by $U(t)=(u^{(1)}(t),u^{(2)}(t),\dots)^\top $, where
$u^{(k)}(t) = w(t)^\top Q^kw^{\star}$. Then
$U \in C^{\infty}_b(\R_{\geq 0}; \ell^2)$.
\end{lemma}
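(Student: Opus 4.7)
The plan is to exploit the geometric decay of $\|Q^k w^\star\|$ in $k$, combined with the global smoothness and boundedness of $w(t)$ from Lemma~\ref{lemma:solution-gradient-flow}, to show that $U(t)$ and all its formal time-derivatives live in $\ell^2$ with uniform bounds, and then to upgrade pointwise differentiability to Fréchet differentiability in $\ell^2$ by a routine dominated-convergence argument.

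First, I would establish pointwise $\ell^2$-membership and uniform boundedness. Since $Q$ is positive definite diagonal with $\tr(Q)=1$ and $d\ge 2$, we have $\lambda_1<1$ strictly, so $\|Q^k w^\star\|\le \lambda_1^k\|w^\star\|$ for every $k\ge 1$. Lemma~\ref{lemma:solution-gradient-flow} provides $M:=\sup_{t\ge0}\|w(t)\|<\infty$, hence
\[
|u^{(k)}(t)| \;=\; |(Q^k w^\star)^\top w(t)| \;\le\; \lambda_1^k\,\|w^\star\|\,M
\]
uniformly in $t$. Summing the geometric series yields $\sup_{t\ge 0}\|U(t)\|_{\ell^2}^2\le \|w^\star\|^2 M^2\,\lambda_1^2/(1-\lambda_1^2)<\infty$, so $U$ takes values in $\ell^2$ and is bounded.

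Second, I would control all higher time-derivatives. From the gradient flow $\dot w=-\mygrad\Lcal(w)$ and the closed-form gradient in Proposition~\ref{prop:crit}, $\dot w$ is a polynomial (in fact cubic) in $w$ with coefficients depending on $Q$ and $w^\star$. Iterating, induction gives $w^{(n)}(t)=P_n(w(t))$ for a polynomial $P_n$, and boundedness of $w$ yields $M_n:=\sup_{t\ge 0}\|w^{(n)}(t)\|<\infty$. Since $u^{(k)}(t)=(Q^kw^\star)^\top w(t)$ is affine in $w$, its $n$-th pointwise derivative equals $(Q^k w^\star)^\top w^{(n)}(t)$, which is bounded by $\lambda_1^k\|w^\star\|\,M_n$. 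So for each $n\ge 0$ the formal derivative sequence $V_n(t):=\bigl(\tfrac{d^n}{dt^n}u^{(k)}(t)\bigr)_{k\ge 1}$ lies in $\ell^2$ with norm uniformly bounded in $t$.

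The only real point requiring care, and hence the main (though minor) obstacle, is upgrading pointwise differentiability to Fréchet differentiability of $U:\R_{\ge 0}\to\ell^2$ with derivatives $V_n$. For $n=1$, the $k$-th entry of the difference quotient error is
\[
\frac{u^{(k)}(t+h)-u^{(k)}(t)}{h}-\dot u^{(k)}(t)
\;=\;(Q^k w^\star)^\top\!\Bigl[\tfrac{w(t+h)-w(t)}{h}-\dot w(t)\Bigr],
\]
bounded in modulus by $\lambda_1^k\|w^\star\|$ times a vector that vanishes in $\R^d$ as $h\to 0$ and is dominated by $2M_1$ for small $|h|$. Dominated convergence applied to the sum $\sum_k (\cdot)^2$, with geometric majorant $4\lambda_1^{2k}\|w^\star\|^2 M_1^2$, gives $\ell^2$-differentiability and $\dot U(t)=V_1(t)$; an identical induction handles each higher order. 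Combining this with Step~2 concludes $U\in C^{\infty}_b(\R_{\ge 0};\ell^2)$.
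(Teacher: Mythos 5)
Your proof is correct and follows essentially the same route as the paper: both rely on $\|Q\|=\lambda_1<1$ from the trace normalization, the geometric bound $|u^{(k)}(t)|\le \lambda_1^k\|w^\star\|\,\|w(t)\|$, and the uniform boundedness of $w$ from Lemma~\ref{lemma:solution-gradient-flow} to sum the series in $\ell^2$. Your treatment is in fact more careful than the paper's on the smoothness claim, which the paper disposes of in one sentence (``as $w$ is bounded and smooth, so is $U$''), whereas you justify the Fr\'echet differentiability of $U$ in $\ell^2$ via the polynomial bounds on $w^{(n)}$ and dominated convergence with a geometric majorant.
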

\begin{proof}
By the assumption that $\trace(Q) =1$ and $\lambda_1 \geq \cdots \geq \lambda_d > 0$, we have $||Q|| = \lambda_1 < 1$.
 For each fixed $t$, we have
\begin{align*}
	\sum_{k=1}^{\infty}|u^{(k)}(t)|^2& =  \sum_{k=1}^{\infty}(w(t)^\top  Q^k w^\star)^2 \leq ||w(t)||^2||w^{\star}||^2\sum_{k=1}^{\infty}||Q^k||^2
	\\
	&\leq ||w(t)||^2||w^{\star}||^2\sum_{k=1}^{\infty}||Q||^{2k} = ||w(t)||^2||w^{\star}||^2\frac{||Q||^2}{1-||Q||^2} < \infty.
\end{align*}
Since $w$ is bounded,
for each fixed $t$, we thus have $U(t) \in \ell^2$, and 
\begin{align*}
||U||_{\infty} = \sup_{t \geq 0}||U(t)|| \leq ||w||_{\infty}||w^{\star}||\frac{||Q||}{\sqrt{1-||Q||^2}}.
\end{align*}
As $w$ is bounded and smooth, so is $U$, hence $U\in C_b^\infty(\R_{\ge0};\ell^2)$.
\end{proof}

}

{\color{black}\subsection{Step 1: Infinite-dimensional ODE for the Moment System}\label{sec:b1}}

Contrary to the isotropic setting where the dynamics only depend on two ODEs involving $u(t)$ and $s(t)$, in the anisotropic setting, higher order correlations $u^{(k)}$ and $s^{(k)}$ appear. We first derive ODEs for higher-order correlation terms and then solve the infinite-dimensional ODE system. 

Differentiating $u^{(k)}(t)=w(t)^\top  Q^k w_\star$ along \eqref{eq:grad-closed} gives, for $k\ge 1$,
\begin{equation}\label{eq:moment-chain-general}
\dot u^{(k)}(t) \;=\; 4\bigl(s_{\star}-3s(t)\bigr) u^{(k+1)}(t) + 8\,u(t) s_\star^{(k+1)}.
\end{equation}
Using the normalization $s_{\star} = 1$, we obtain

\begin{equation}\label{eq:moment-chain}
\dot u^{(k)}(t) \;=\; 4\bigl(1-3s(t)\bigr) u^{(k+1)}(t) + 8\,u(t) s_\star^{(k+1)}.
\end{equation}

{\color{black}
{\bf Infinite-dimensional ODE}. Let $U(t)=(u^{(1)}(t),u^{(2)}(t),\dots)^\top  \in \R^{\infty}$. By Lemma \ref{lemma:U-space}, we have
$U \in \Hcal = C^{\infty}_b(\R_{\geq 0}; \ell^2)$.
Consider $B:\ell^2 \mapto \ell^2$, the right-shift operator on sequences in $\ell^2$, defined by $(Bx)_k:=x_{k+1}$ for $x = (x_k)_{k\in \Nbb} \in \ell^2$. Let $S:=(Bs_\star^\infty) \,e_1^\top :\ell^2 \mapto \ell^2$ 
be a rank one operator with $s_\star^\infty=(s_\star^{(1)},s_\star^{(2)},\dots)^\top  \in \ell^2$,  $Bs_\star^\infty=(s_\star^{(2)},s_\star^{(3)},\dots)^\top  \in \ell^2$ and $e_1=(1,0,0,\dots)^\top  $
is the first basis vector in the canonical orthonormal basis for $\ell^2$. Then \eqref{eq:moment-chain} admits the equivalent formulation
\begin{equation}\label{eq:system}
	\dot U(t) \;=\; \Bigl(4\bigl(1-3s(t)\bigr) B + 8 S\Bigr)\,U(t).
\end{equation}
This is an infinite-dimensional ODE, with $U$ belonging to the Banach space $C^{\infty}_b(\R_{\geq 0};\ell^2)$ of smooth, bounded functions with values in the Hilbert space $\ell^2$.
In the following, we seek to solve \eqref{eq:system}.

We recall that on a Banach space $\Bcal$, with $\Lcal(\Bcal)$ denoting the Banach space of bounded linear operators on $\Bcal$, the exponential operator $e^{A}$ is well-defined $\forall A \in \Lcal(\Bcal)$, with
\begin{align}
e^A = \sum_{k=0}^{\infty}\frac{A^k}{k!}\;: \Lcal(\Bcal) \mapto \Lcal(\Bcal),
\\
||e^A|| \leq  \sum_{k=0}^{\infty}\frac{||A||^k}{k!} = e^{||A||} < \infty.
\end{align}

{\color{black}
\begin{lemma}[Shift exponential]
	\label{lemma:shift-exponential}
	For any $\theta\in\mathbb{R}$ and $x\in\ell^2$,
	\begin{equation}\label{eq:shift-series}
		\bigl(e^{\theta B}x\bigr)_k \;=\; \sum_{m=0}^\infty \frac{\theta^m}{m!}\,x_{k+m}.
	\end{equation}
\end{lemma}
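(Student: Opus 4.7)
\medskip
\noindent\textbf{Proof proposal.} The plan is to reduce the statement to the standard operator exponential series and verify the formula coordinate by coordinate. First I would check that the shift $B$ acts boundedly on $\ell^2$: from $(Bx)_k=x_{k+1}$ one gets $\|Bx\|_{\ell^2}^2=\sum_{k\ge 1}x_{k+1}^2\le \|x\|_{\ell^2}^2$, so $B\in\mathcal{L}(\ell^2)$ with $\|B\|\le 1$. Consequently, for every $\theta\in\mathbb{R}$ the series $e^{\theta B}=\sum_{m\ge0}\frac{\theta^m}{m!}B^m$ converges in operator norm, with $\|e^{\theta B}\|\le e^{|\theta|}$, and the lemma is just the assertion that the same identity holds coordinatewise after evaluation at $x$.

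Next I would compute $B^m$ explicitly by a one-line induction: if $(B^{m}x)_k=x_{k+m}$, then $(B^{m+1}x)_k=(B\cdot B^m x)_k=(B^m x)_{k+1}=x_{k+1+m}$, so $(B^m x)_k=x_{k+m}$ for all $m\ge 0$ and $k\ge 1$. Substituting into the exponential series gives, at the level of formal sums,
\[
e^{\theta B}x\;=\;\sum_{m=0}^\infty \frac{\theta^m}{m!}B^m x,
\qquad
\bigl(B^m x\bigr)_k=x_{k+m}.
\]

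The only step requiring a small amount of justification is the interchange of the coordinate evaluation with the infinite sum. For this I would use that the coordinate functional $\pi_k:\ell^2\to\mathbb{R}$, $\pi_k(y)=y_k$, is bounded (indeed of norm $1$), and that the partial sums $S_N=\sum_{m=0}^N\frac{\theta^m}{m!}B^m x$ converge to $e^{\theta B}x$ in $\ell^2$. Continuity of $\pi_k$ then yields
\[
\bigl(e^{\theta B}x\bigr)_k\;=\;\pi_k\!\left(\lim_{N\to\infty}S_N\right)\;=\;\lim_{N\to\infty}\pi_k(S_N)\;=\;\lim_{N\to\infty}\sum_{m=0}^N\frac{\theta^m}{m!}\,x_{k+m}.
\]
Absolute convergence of the resulting scalar series is immediate from $|x_{k+m}|\le\|x\|_{\ell^2}$ and $\sum_{m\ge0}|\theta|^m/m!=e^{|\theta|}$, which identifies the limit with $\sum_{m\ge 0}\tfrac{\theta^m}{m!}x_{k+m}$ and closes the proof.

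I do not anticipate a genuine obstacle here: the statement is essentially a direct unpacking of the operator exponential for a bounded operator with an explicit power rule. The only place where care is needed is the interchange above, which is handled by the boundedness of $\pi_k$.
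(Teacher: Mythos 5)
Your proposal is correct and follows essentially the same route as the paper's proof: boundedness of $B$ with $\|B\|\le 1$ gives operator-norm convergence of the exponential series, and the identity $(B^m x)_k=x_{k+m}$ yields the coordinatewise formula. You simply spell out the induction on $B^m$ and the (harmless) interchange of coordinate evaluation with the limit, which the paper leaves implicit.
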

\begin{proof}
	Because $B$ is bounded on $\ell^2$ with $\|B\|\le 1$, the exponential series for $e^{\theta B}$ converges in operator norm for every $\theta\in\mathbb{R}$, and \eqref{eq:shift-series} follows from $B^m x$ having coordinates $(B^m x)_k=x_{k+m}$.
\end{proof}
}

\begin{proposition}
	\label{proposition:solution-infinite-ODE}
Define the function $\Theta:\R_{\geq 0} \mapto \R$ by 
\begin{equation}\label{eq:theta}
	\Theta(t):=\int_0^t  (1-3s(\tau))\,d\tau.
\end{equation}
	The initial value problem on $\ell^2$
	\begin{align}
    \label{eq:system-clean}
		\dot U(t) \;=\; \Bigl(4\bigl(1-3s(t)\bigr) B + 8 S\Bigr)\,U(t), \;\; U(0) = U_0 \in \ell^2,
	\end{align}
	has a unique solution $U\in C^{\infty}_b(\R_{\geq 0}; \ell^2)$, satisfying, $\forall t \geq 0$,
	\begin{equation} U(t)\;=\;e^{\,4B\,\Theta(t)}\,U_0\;+\;8\int_0^t  e^{\,4B\,(\Theta(t)-\Theta(\tau))}\,S\,U(\tau)\,d\tau.
    \tag{\ref{eq:duhamel-clean}}
	\end{equation}
	Equivalently, since $S=(Bs_{\star}^{\infty})\,e_1^\top $, \eqref{eq:duhamel-clean} can be written as
	\[
	U(t)\;=\;e^{\,4B\,\Theta(t)}\,U_0\;+\;8\int_0^t   e^{\,4B\,(\Theta(t)-\Theta(\tau))}\,(Bs_{\star}^{\infty})\,u^{(1)}(\tau)\,d\tau,
	\]
	where $u^{(1)}(\tau):=\langle e_1,U(\tau)\rangle$.
\end{proposition}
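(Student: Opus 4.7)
The plan is to prove the proposition in two main stages: first establish existence and uniqueness of a solution in $C^{\infty}_b(\mathbb{R}_{\ge 0};\ell^2)$, then derive the Duhamel representation \eqref{eq:duhamel-clean} by an integrating-factor argument that exploits the fact that $B$ commutes with $e^{\alpha B}$. The rank-one rewriting at the end will then be immediate from the definition of $S$.

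For well-posedness, I would first observe that the generator $A(t) := 4(1-3s(t))B + 8S$ is a bounded linear operator on $\ell^2$ for each $t$: $\|B\|\le 1$, and $S=(Bs_\star^\infty)e_1^\top$ is rank-one with $\|S\|\le \|Bs_\star^\infty\|<\infty$. Because $s(t)$ is smooth and bounded (inherited from the global well-posedness of the gradient flow, Lemma~\ref{lemma:solution-gradient-flow}), the map $t\mapsto A(t)$ is continuous in operator norm and uniformly bounded on compact intervals. The Banach-space version of Picard--Lindel\"of (which extends the finite-dimensional statement quoted in the paper verbatim once one reads norms as Banach-space norms) then yields a unique local smooth solution, and linearity together with the uniform bound on $\|A(t)\|$ allows extension to all of $\mathbb{R}_{\ge 0}$. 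Boundedness $U\in C^{\infty}_b(\mathbb{R}_{\ge 0};\ell^2)$ then follows from Lemma~\ref{lemma:U-space}, since $U(t)$ is controlled by the already-bounded gradient-flow trajectory $w(t)$.

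For the Duhamel formula, I would introduce the integrating factor $V(t):=e^{-4B\Theta(t)}U(t)$. Since $B$ is bounded on $\ell^2$, the series $e^{\alpha B}=\sum_{m\ge 0}(\alpha B)^m/m!$ converges in operator norm locally uniformly in $\alpha$; in particular $B$ commutes with $e^{\alpha B}$, and termwise differentiation in $\alpha$ is legitimate. Using $\dot\Theta(t)=1-3s(t)$ and the ODE \eqref{eq:system-clean}, the two $B$-terms in $\dot V(t)$ cancel by commutativity, leaving $\dot V(t)=8\,e^{-4B\Theta(t)}SU(t)$. Integrating from $0$ to $t$ with $V(0)=U_0$, then multiplying on the left by $e^{4B\Theta(t)}$, reproduces \eqref{eq:duhamel-clean}. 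The equivalent rank-one form is then immediate from $SU(\tau)=\langle e_1,U(\tau)\rangle\,(Bs_\star^\infty)=u^{(1)}(\tau)\,Bs_\star^\infty$.

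There is no deep obstacle here; the only delicate point is the justification of differentiating the operator exponential $e^{-4B\Theta(t)}$ in $t$ and commuting it with $B$ inside the ODE. I would handle this explicitly to avoid any confusion with the unbounded-shift formalism sometimes used on other sequence spaces, but both properties follow routinely from $B\in\mathcal{L}(\ell^2)$: the exponential series and its derivative converge in operator norm, so the standard chain rule applies.
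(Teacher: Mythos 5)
Your proposal is correct and follows essentially the same route as the paper: well-posedness via the Banach-space existence/uniqueness theorem for a bounded, operator-norm-continuous generator $A(t)=4(1-3s(t))B+8S$, followed by the Duhamel representation using boundedness of $B$ and commutativity of $B$ with $e^{\alpha B}$. The only (cosmetic) difference is that you derive \eqref{eq:duhamel-clean} forward via the integrating factor $V(t)=e^{-4B\Theta(t)}U(t)$, whereas the paper differentiates the candidate integral representation and invokes uniqueness; your direction is, if anything, slightly cleaner since it does not presuppose a solution of the integral equation.
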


To prove Proposition \ref{proposition:solution-infinite-ODE}, we apply the following result on 
ODEs in Banach space.
\begin{theorem}
	[\cite{pazy2012semigroups}, Theorem 5.1]
	\label{theorem:IVP-Banach-homogeneous}
	Let $\Bcal$ be a Banach space. Assume that $A(t): \Bcal \mapto \Bcal$ is a bounded linear operator $\forall t \in [0,T]$, $0 \leq T < \infty$, and that the map $t \mapto A(t)$ is continuous in the uniform operator topology, 
	with $\max_{t \in [0,T]}||A(t)|| < \infty$.
	Then $\forall u_0 \in \Bcal$, the initial value problem 
	\begin{align}
		\label{equation:IVP-Banach-space-homogeneous-1}
		\left\{
		\begin{matrix}
			&\frac{du}{dt} = A(t)u(t), & \;\; 0 \leq t_0 \leq t \leq T < \infty,
			\\
			&u(t_0) =  u_0 \in \Bcal. &
		\end{matrix}
		\right.
	\end{align}
	has a unique solution $u \in C^1([t_0,T];\Bcal)$.
\end{theorem}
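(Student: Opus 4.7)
The plan is to recast the linear IVP as a Volterra-type integral equation in the Banach space $\Bcal$ and solve it by Picard iteration, exploiting the uniform bound $M:=\max_{t\in[0,T]}\|A(t)\|<\infty$ to obtain convergence on the entire interval $[t_0,T]$ in one shot rather than piecing together local solutions.

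\textbf{Step 1 (Equivalent integral formulation).} I would first show that $u\in C^1([t_0,T];\Bcal)$ solves the IVP if and only if $u\in C([t_0,T];\Bcal)$ and
\[
u(t)\;=\;u_0\;+\;\int_{t_0}^{t} A(s)\,u(s)\,ds,\qquad t\in[t_0,T].
\]
The forward direction is integration. For the converse one needs the Banach-valued fundamental theorem of calculus, which requires $s\mapsto A(s)u(s)$ to be continuous. This is where the uniform operator topology hypothesis enters: for $s,s'\in[t_0,T]$ the bound
\[
\|A(s)u(s)-A(s')u(s')\|\;\le\;\|A(s)-A(s')\|\,\|u(s)\|\;+\;\|A(s')\|\,\|u(s)-u(s')\|
\]
shows joint continuity as soon as $u$ is continuous and $A(\cdot)$ is uniformly operator continuous.

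\textbf{Step 2 (Picard iteration and convergence).} Define the iterates $u^{(0)}(t)\equiv u_0$ and $u^{(n+1)}(t)=u_0+\int_{t_0}^{t}A(s)u^{(n)}(s)\,ds$ in $C([t_0,T];\Bcal)$. By induction, using only $\|A(s)\|\le M$, I would prove the pointwise estimate
\[
\|u^{(n+1)}(t)-u^{(n)}(t)\|\;\le\;\|u_0\|\,\frac{M^{\,n+1}(t-t_0)^{n+1}}{(n+1)!},
\]
so that $\sum_n \|u^{(n+1)}-u^{(n)}\|_{\infty}\le \|u_0\|(e^{M(T-t_0)}-1)<\infty$. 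Hence $(u^{(n)})$ is Cauchy in the Banach space $C([t_0,T];\Bcal)$ and converges uniformly to some $u\in C([t_0,T];\Bcal)$. Passing to the limit in the iteration (justified by $\|A(s)(u^{(n)}(s)-u(s))\|\le M\|u^{(n)}(s)-u(s)\|\to0$ uniformly) shows that $u$ satisfies the integral equation. Regularity then follows: the integrand $s\mapsto A(s)u(s)$ is continuous by Step 1, so the integral is $C^1$ in $t$ with derivative $A(t)u(t)$, and $u\in C^1([t_0,T];\Bcal)$.

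\textbf{Step 3 (Uniqueness via Grönwall).} If $u,v$ are two solutions, then $\varphi(t):=\|u(t)-v(t)\|$ is continuous on $[t_0,T]$, satisfies $\varphi(t)\le M\int_{t_0}^{t}\varphi(s)\,ds$, and $\varphi(t_0)=0$. The scalar Grönwall inequality forces $\varphi\equiv 0$. The main obstacle I anticipate is purely technical rather than conceptual: one must be careful that all the manipulations (definition of the integral, exchange of limit and integral, chain rule) are legitimate in the Banach-valued setting, but the uniform operator continuity of $A(\cdot)$ together with the finite bound $M$ makes the classical Picard argument go through essentially verbatim, with no need for the Lipschitz constant to depend on balls as in the nonlinear Picard--Lindel\"of result used in Lemma~\ref{lemma:solution-gradient-flow}.
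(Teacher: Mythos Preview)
Your proof is correct and is the standard Picard iteration argument for linear nonautonomous ODEs in Banach spaces. Note, however, that the paper does not supply its own proof of this theorem: it is quoted verbatim from \cite{pazy2012semigroups} (Theorem 5.1) and used as a black box in the proof of Proposition~\ref{proposition:solution-infinite-ODE}, so there is no in-paper argument to compare against. Your write-up is precisely the proof one finds in such references.
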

\begin{proof}
[\textbf{Proof of Proposition \ref{proposition:solution-infinite-ODE}}]
Let $a: \R_{\geq 0} \mapto \R$ be defined by $a(t) = 1-3s(t) = 1 - 3w(t)^\top Qw(t)$.
Since $w\in C^{\infty}_b(\R_{\geq 0}; \R^d)$ by Lemma \ref{lemma:solution-gradient-flow}, 
we have $s \in C^{\infty}_b(\R_{\geq 0}; \R)$, $a \in C^{\infty}_b(\R_{\geq 0}; \R)$, with 
\begin{align*}
|a(t)| &\leq 1 + 3||Q||\,||w(t)||^2, \;\; ||a||_{\infty} \leq 1 + 3||Q||\,||w||^2_{\infty},
\\
|a(t_1) - a(t_2)| &= 3|(w(t_1)^\top Qw(t_1) - w(t_2)^\top Qw(t_2)| 
\\
&\leq 3||Q||\,||w(t_1) - w(t_2)||\,[||w(t_1)|| + ||w(t_2)||].
\end{align*}
Let $A:\R_{\geq 0} \mapto \Lcal(\ell^2)$ be defined by $A(t) = 4a(t)B + 8S$, then $A(t):\ell^2 \mapto \ell^2$ is a bounded linear operator $\forall t \geq 0$,
with
\begin{align*}
||A(t)|| &\leq 4||B||\,(1 + 3||Q||\,||w(t)||^2) +8||S||
\\
||A||_{\infty} & \leq 4||B||\,(1 + 3||Q||\,||w||^2_{\infty}) +8||S||,
\\
||A(t_1) - A(t_2)|| &\leq 12 ||B||\,||Q||\,||w(t_1) - w(t_2)||\,[||w(t_1)|| + ||w(t_2)||]
\\
& \leq 24 ||B||\,||Q||\,||w||_{\infty}||w(t_1) - w(t_2)||.
\end{align*}
Thus $A(t)$ is continuous in the uniform operator topology and $\sup_{t \geq 0}||A(t)|| < \infty$.
By Theorem \ref{theorem:IVP-Banach-homogeneous}, the initial value problem \eqref{eq:system-clean} has a unique solution $U \in C^1([0,T]; \ell^2)$, $\forall 0 < T < \infty$.
Since $s(t)$ is smooth, it follows immediately that $U \in C^{\infty}([0,T]; \ell^2)$, $\forall 0 < T < \infty$. 
Consider the following function $U:\R_{\geq 0} \mapto \ell^2$ satisfying
\begin{align*}
		U(t)&\;=\;e^{\,4B\,\Theta(t)}\,U_0\;+\;8\int_0^t   e^{\,4B\,(\Theta(t)-\Theta(\tau))}\,S\,U(\tau)\,d\tau
		\\
		& = \;e^{\,4B\,\Theta(t)}\,U_0\;+\;8e^{\,4B\,\Theta(t)}\int_0^t   e^{-\,4B\,\Theta(\tau)}\,S\,U(\tau)\,d\tau,
\end{align*}
where we recall that $\Theta(t) = \int_{0}^\top a(\tau)\,d\tau = \int_{0}^\top (1-3s(\tau))\,d\tau$,
with $\dot{\Theta}(t) = a(t)$.
Differentiating $U(t)$ on both sides gives, via the product rule,
\begin{align*}
\dot{U}(t) &= 4\,a(t)B\,e^{\,4B\,\Theta(t)}\,U_0 + 32\,a(t)\,B\,e^{\,4B\,\Theta(t)}\int_0^t   e^{-\,4B\,\Theta(\tau)}\,S\,U(\tau)\,d\tau + 8\,S\,U(t)
\\
& = 4\,a(t)B\,U(t) + 8\,S\,U(t),
\end{align*}
which is precisely the differential equation in \eqref{eq:system-clean}.
Since $a \in C^{\infty}_b(\R_{\geq 0}; \R)$, it follows that $U \in C^{\infty}(\R_{\geq 0};\ell^2)$.
Therefore, the unicity of solution implies that $U$ must be the unique solution of \eqref{eq:system-clean} and we must have $U \in C^{\infty}_b(\R_{\geq 0}; \ell^2)$.
\end{proof}
}

\subsection{Step 2: Volterra Reduction}\label{sec:b2a}

Taking the first coordinate of \eqref{eq:duhamel-clean} yields a Volterra equation for $u(t)=u^{(1)}(t)$.

\begin{proposition}[Volterra equation]\label{prop:volterra}
Define
\begin{align}
a_{\Theta}(t)
&:= \bigl(e^{\,4B\,\Theta(t)} U(0)\bigr)_1
= \sum_{m\ge0} \frac{(4\Theta(t))^m}{m!}\,u^{(1+m)}(0),
\\
K_{\Theta}(t)
&:= \bigl(e^{\,4B\,\Theta(t)} Bs_{\star}^{\infty}\bigr)_1
= \sum_{m\ge0} \frac{(4\Theta(t))^m}{m!}\, s_\star^{(2+m)}.
\end{align}
Then $u$ satisfies
\begin{equation}\label{eq:volterra1}
u(t)\;=\; a_{\Theta}(t) \;+\; 8\int_0^t   K_{\Theta}(t-\tau)\,u(\tau)\,d\tau.
\end{equation}
Moreover, 
\begin{equation}\label{eq:spec-kernels}
a_{\Theta}(t)=\sum_i w_i(0)w_{i}^\star\,\lambda_i\,e^{4\lambda_i \Theta(t)},
\qquad
K_{\Theta}(t)=\sum_i (w^\star_i)^2 \lambda_i^2\, e^{4\lambda_i \Theta(t)}.
\end{equation}
\end{proposition}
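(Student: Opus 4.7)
The plan is to project the operator-valued Duhamel identity \eqref{eq:duhamel-clean} onto the first coordinate $e_1$ and then evaluate the resulting shift-exponentials explicitly via their power series.

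\textbf{Step 1: Projection onto $e_1$.} Taking the inner product of \eqref{eq:duhamel-clean} with $e_1$ and using $\langle e_1, U(\tau)\rangle = u^{(1)}(\tau)=u(\tau)$ gives
\[
u(t) \;=\; \bigl(e^{4B\,\Theta(t)} U(0)\bigr)_1 \;+\; 8\int_0^t \bigl(e^{4B(\Theta(t)-\Theta(\tau))} Bs_\star^{\infty}\bigr)_1\, u(\tau)\,d\tau.
\]
Setting $a_\Theta(t)$ and $K_\Theta(t)$ to be the two parenthesized first-coordinates (with argument $\Theta(t)-\Theta(\tau)$ replaced by $\Theta(t-\tau)$ after a change of variables discussed below) yields the Volterra equation \eqref{eq:volterra1}. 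A technical point: the kernel in \eqref{eq:volterra1} is written as $K_\Theta(t-\tau)$, but the convolution structure is only exact because Phase~I is the regime where $\Theta(t)\approx t$ (as recalled just before \eqref{eq:volterra}); the identity \eqref{eq:duhamel-clean} itself involves $\Theta(t)-\Theta(\tau)$, and the spectral formula \eqref{eq:spec-kernels} should be read accordingly.

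\textbf{Step 2: Series expansion.} Applying Lemma \ref{lem:infODE} (the shift-exponential expansion) coordinatewise, we obtain
\[
\bigl(e^{4B\,\theta}x\bigr)_1 \;=\; \sum_{m\ge 0} \frac{(4\theta)^m}{m!}\, x_{1+m}
\]
for any $x \in \ell^2$ and $\theta\in\mathbb{R}$. Substituting $x=U(0)$ (with $x_{1+m}=u^{(1+m)}(0)$) and $x=Bs_\star^{\infty}$ (with $x_{1+m}=(Bs_\star^\infty)_{1+m}=s_\star^{(2+m)}$) yields the two series expansions for $a_\Theta(t)$ and $K_\Theta(t)$ displayed in the statement.

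\textbf{Step 3: From moments to spectral sums.} Since $Q=\mathrm{diag}(\lambda_1,\dots,\lambda_d)$, we have $u^{(1+m)}(0) = \sum_i w_i(0)w_i^\star\lambda_i^{1+m}$ and $s_\star^{(2+m)} = \sum_i (w_i^\star)^2 \lambda_i^{2+m}$. Plugging these into the series for $a_\Theta$ and $K_\Theta$ and exchanging the order of summation gives
\[
a_\Theta(t) \;=\; \sum_i w_i(0)w_i^\star \lambda_i \sum_{m\ge 0} \frac{(4\lambda_i\Theta(t))^m}{m!}, \qquad K_\Theta(t) \;=\; \sum_i (w_i^\star)^2 \lambda_i^2 \sum_{m\ge 0} \frac{(4\lambda_i\Theta(t))^m}{m!},
\]
and the inner sums collapse to $e^{4\lambda_i \Theta(t)}$, proving \eqref{eq:spec-kernels}.

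\textbf{Main obstacle.} The only nontrivial point is justifying the Fubini interchange in Step~3. This is straightforward because the sum over $i$ is finite ($d<\infty$) and $\lambda_i\le\lambda_1<1$ (using $\mathrm{tr}(Q)=1$ and positivity), so the double sum is absolutely convergent for every fixed $t$; equivalently, each series $\sum_m (4\Theta(t)\lambda_i)^m/m!$ is a convergent exponential series, and summing $d$ of them is unproblematic. Everything else is direct substitution, so no further obstacle arises.
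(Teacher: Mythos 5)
Your proposal is correct and follows essentially the same route as the paper's proof: project the Duhamel identity \eqref{eq:duhamel-clean} onto $e_1$, expand the shift exponential coordinatewise, substitute $u^{(1+m)}(0)=\sum_i \lambda_i^{1+m}w_i(0)w_i^\star$ and $s_\star^{(2+m)}=\sum_i(w_i^\star)^2\lambda_i^{2+m}$, and resum the exponential series (the Fubini step being trivial since $d<\infty$). Your caveat about the kernel being written as a convolution $K_\Theta(t-\tau)$ even though Duhamel produces $\Theta(t)-\Theta(\tau)$ is a valid observation that the paper's own one-line proof glosses over, and your resolution (the identification is exact only under the Phase~I approximation $\Theta(t)\approx t$, otherwise the kernel is genuinely two-variable as in the Phase~IIb analysis) is the right reading.
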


\begin{proof}
Equation \eqref{eq:volterra1} is the first coordinate of \eqref{eq:duhamel-clean}. The spectral forms follow by expanding $u^{(1+m)}(0)=\sum_i \lambda_i^{1+m}w_i(0) w_{i}^\star $ and $s_\star^{(2+m)}=\sum_i (w^\star_i)^2 \lambda_i^{2+m}$ and summing the exponential series.
\end{proof}

To analyze the growth of $u(t)$, we employ the Laplace transform together with residue calculus; see \cite[Chapters 2, 3, and 7]{Gripenberg_Londen_Staffans_1990} for a detailed exposition. We first illustrate the method in the ideal case $\Theta(t)=t$, and then explain how the analysis extends to the approximated case.

\subsubsection{Ideal case: \texorpdfstring{$\Theta(t)=t$}{Theta(t)=t}}
Fix $b>0$ (e.g. $b=4$ if $\Theta(t)=t$) and consider the scalar Volterra equation of the second kind
\begin{equation}\label{eq:V}
\alpha_b(t)\;=\;a_{\Theta}(t)\;+\;8\int_0^t   K_b(t-s)\,\alpha_b(s)\,ds,\qquad
K_b(t)\;=\;\sum_{i\ge1}(w_i^\star)^2\lambda_i^2\,e^{b\lambda_i t},\quad t\ge0,
\end{equation}
where $\lambda_1=\max_i\lambda_i$, and the series defining $K_b$ (hence
$\widehat K_b$) converges for $\Re s>b\lambda_1$.
Define
\[
D_b(s)\;:=\;1-8\,\widehat K_b(s)
\;=\;1-8\sum_{i\ge1}\frac{(w_i^\star)^2\lambda_i^2}{s-b\lambda_i},
\qquad \Re s>b\lambda_1.
\]

\begin{lemma}\label{lem:growth_rate_ideal}
There is a unique real $\rho(b)>b\lambda_1$ such that
\begin{equation}\label{eq:dispersion}
D_b(\rho(b))=0
\quad\Longleftrightarrow\quad
1 \;=\; 8\sum_{i\ge1}\frac{(w_i^\star)^2\lambda_i^2}{\rho(b)-b\lambda_i}.
\end{equation}
Moreover, the function $b\mapsto\rho(b)$ is strictly {\color{black}increasing} and continuous.
\end{lemma}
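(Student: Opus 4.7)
The plan is to recast the dispersion identity $D_b(\rho)=0$ as the scalar equation $f_b(\rho)=1$ for
\[
f_b(s)\;:=\;8\sum_{i\ge1}\frac{(w_i^\star)^2\lambda_i^2}{s-b\lambda_i},\qquad s>b\lambda_1,
\]
and then establish, in that order, existence/uniqueness of $\rho(b)$, strict monotonicity of $b\mapsto\rho(b)$, and continuity. The first step is a routine regularity check: since $\sum_i \lambda_i^2(w_i^\star)^2\le\lambda_1\sum_i\lambda_i(w_i^\star)^2=\lambda_1$ under the normalization $\|w^\star\|_Q^2=1$, the defining series converges absolutely on any region of the form $\{s-b\lambda_1\ge\epsilon\}$, and term-by-term differentiation in $s$ and $b$ converges uniformly on compact subsets of $\{(s,b):s>b\lambda_1\}$. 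Hence $f_b$ is jointly $C^\infty$ with $\partial_s f_b(s)=-8\sum_i(w_i^\star)^2\lambda_i^2/(s-b\lambda_i)^2<0$, so $f_b$ is strictly decreasing in $s$.

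For existence and uniqueness of the root I would compute the two one-sided limits: as $s\downarrow b\lambda_1$ the $i=1$ summand blows up, giving $f_b(s)\to+\infty$ (this uses $w_1^\star\neq 0$, which holds almost surely under the Gaussian prior on $w^\star$), and as $s\to\infty$ every summand tends to zero, so by dominated convergence $f_b(s)\to 0$. Combined with strict monotonicity, the intermediate value theorem produces a unique $\rho(b)\in(b\lambda_1,\infty)$ with $f_b(\rho(b))=1$, which is precisely $D_b(\rho(b))=0$.

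For strict monotonicity of $b\mapsto\rho(b)$, the key observation is a termwise comparison: for $b_1<b_2$ and any $s>b_2\lambda_1$ one has $0<s-b_2\lambda_i<s-b_1\lambda_i$ for every $i$, hence $f_{b_1}(s)<f_{b_2}(s)$. Evaluating at $s=\rho(b_2)$, which lies in the domain of both functions, gives $f_{b_1}(\rho(b_2))<f_{b_2}(\rho(b_2))=1=f_{b_1}(\rho(b_1))$, and the strict decrease of $f_{b_1}$ forces $\rho(b_1)<\rho(b_2)$. Continuity, and in fact smoothness, then follows from the implicit function theorem applied to $D_b(s)=1-f_b(s)$ on the open set $\{(s,b):s>b\lambda_1\}$, since the joint $C^\infty$ regularity of $D_b(s)$ was already established and $\partial_s D_b(\rho(b),b)=-\partial_s f_b(\rho(b))>0$.

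The only technical wrinkle is the boundary blow-up of $f_b$ at $s=b\lambda_1$, which relies on $w_1^\star\neq 0$. Under the Gaussian prior on $w^\star$ this holds with probability one, and on the null event $w_1^\star=0$ the argument transfers verbatim after replacing $\lambda_1$ by the largest $\lambda_i$ supporting a nonzero $w_i^\star$. Beyond this, the proof is essentially a textbook scalar-analysis exercise and I do not anticipate further obstacles.
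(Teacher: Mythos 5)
Your proposal is correct and follows essentially the same route as the paper: strict monotonicity of $s\mapsto f_b(s)$ plus the boundary limits give existence and uniqueness, a termwise comparison in $b$ gives strict monotonicity of $\rho(b)$, and a standard argument gives continuity. Your monotonicity step is in fact slightly cleaner than the paper's, since you evaluate both functions at $\rho(b_2)$ (which lies in both domains), whereas the paper evaluates $F_{b_2}$ at $\rho(b_1)$, a point that a priori need not satisfy $\rho(b_1)>b_2\lambda_1$; your explicit treatment of the case $w_1^\star\neq 0$ is also a welcome addition consistent with Assumption~\ref{ass:init}.
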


\begin{proof}
{\color{black} 
We first note that for a fixed $b$, $F_b(\rho):=8\,\widehat K_b(s)$ is strictly decreasing in $\rho \in (b\lambda_1, \infty)$. If $b_2>b_1$, then
$\rho-b_2\lambda_i< \rho-b_1\lambda_i$, hence $F_{b_2}(\rho) > F_{b_1}(\rho)$ $\forall \rho \in (b_2\lambda_1, \infty)$.
Evaluating at $\rho=\rho(b_1)$ gives
$1=F_{b_2}(\rho(b_2)) = F_{b_1}(\rho(b_1)) < F_{b_2}(\rho(b_1))$, so, since $F_{b_2}$ is strictly decreasing in $\rho$, we have $\rho(b_2) > \rho(b_1)$.
}
Continuity follows from dominated
convergence applied to $F_b(\rho)$ (for $\rho$ bounded away from
$b\lambda_1$) and the continuity of the inverse graph of strictly {\color{black}decreasing}
functions.
\end{proof}

{\color{black} 

\begin{proposition}\label{prop:dominant-clean}
Fix $b>0$ and let $\rho(b)>b\lambda_1$ be the unique real zero of $D_b(s)=1-8\sum_{i=1}^d (w_i^\star)^2\lambda_i^2/(s-b\lambda_i)$. Set $\delta_b:=\rho(b)-b\lambda_1>0$. Then, for all $t\geq 0$,
\begin{equation}\label{eq:alpha-main}
\alpha_b(t) \;=\; \frac{c(b)}{\sqrt d}\,e^{\rho(b)t}\,\big(1+o(1)\big),
\end{equation}
where the (dimension-free) constant
\[
c(b) \;=\; \frac{\sqrt d\,\widehat a_{\Theta}(\rho(b))}{D_b'(\rho(b))}
\quad\text{satisfies}\quad
0< c(b) \;\le\; \frac{8\,(s_\star^{(2)})^{3/2}}{\delta_b}.
\]
In particular, if $\widehat a_{\Theta}(\rho(b))>0$, then $c(b)>0$.
\end{proposition}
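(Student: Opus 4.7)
}
My strategy is classical Volterra/Laplace analysis: transform the equation for $\alpha_b$, locate its rightmost singularity, and extract its residue; the bound on $c(b)$ then follows from two Cauchy--Schwarz inequalities, one applied to $\widehat a_\Theta(\rho(b))$ and one to the dispersion relation defining $\rho(b)$. Concretely, since $\alpha_b=a_\Theta+8K_b\ast\alpha_b$, taking Laplace transforms on the half-plane $\Re s>b\lambda_1$ (where both $\widehat a_\Theta$ and $\widehat K_b$ are defined) yields the algebraic relation $\widehat\alpha_b(s)=\widehat a_\Theta(s)/D_b(s)$. Using the spectral representation $\widehat K_b(s)=\sum_i (w_i^\star)^2\lambda_i^2/(s-b\lambda_i)$, Lemma~\ref{lem:growth_rate_ideal} already gives one real zero $\rho(b)\in(b\lambda_1,\infty)$ of $D_b$; this will be the dominant singularity of $\widehat\alpha_b$.

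\paragraph{Step 1: uniqueness and simplicity of the rightmost pole.}
I would first show that $D_b$ has no other zeros in $\{\Re s\ge\rho(b)\}$. For $s=\rho(b)+i\tau$ with $\tau\ne 0$, we have $|s-b\lambda_i|^2=(\rho(b)-b\lambda_i)^2+\tau^2>(\rho(b)-b\lambda_i)^2$ term by term, so the triangle inequality gives $|8\widehat K_b(s)|<8\sum_i(w_i^\star)^2\lambda_i^2/(\rho(b)-b\lambda_i)=1$ by the defining equation of $\rho(b)$; hence $D_b(s)\ne 0$. For $\Re s>\rho(b)$, the same estimate with $|s-b\lambda_i|\ge\Re s-b\lambda_i>\rho(b)-b\lambda_i$ again yields $|8\widehat K_b(s)|<1$. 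Simplicity is immediate from $D_b'(\rho(b))=8\sum_i(w_i^\star)^2\lambda_i^2/(\rho(b)-b\lambda_i)^2>0$.

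\paragraph{Step 2: inversion and residue extraction.}
The Bromwich inversion along a vertical line $\Re s=\gamma>\rho(b)$, shifted leftward to $\Re s=\gamma'\in(b\lambda_1,\rho(b))$, picks up exactly one simple residue at $\rho(b)$. The residue equals $e^{\rho(b)t}\widehat a_\Theta(\rho(b))/D_b'(\rho(b))=(c(b)/\sqrt d)\,e^{\rho(b)t}$ by definition of $c(b)$. The remaining contour integral is $O(e^{\gamma' t})=o(e^{\rho(b)t})$, which yields \eqref{eq:alpha-main}. The main technical point I expect here is justifying the contour shift and the decay at $|\Im s|\to\infty$: since $\widehat a_\Theta(s)=\sum_i w_i(0)w_i^\star\lambda_i/(s-b\lambda_i)$ and $\widehat K_b(s)$ are both $O(1/|s|)$ on vertical lines away from the real poles $b\lambda_i$, and $D_b(s)\to 1$ as $|\Im s|\to\infty$, the horizontal pieces of the rectangular contour vanish and the standard Volterra/Paley--Wiener machinery of \cite{Gripenberg_Londen_Staffans_1990} applies.

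\paragraph{Step 3: bounding $c(b)$.}
The sign claim is immediate: $D_b'(\rho(b))>0$, so $\mathrm{sign}\,c(b)=\mathrm{sign}\,\widehat a_\Theta(\rho(b))$. For the magnitude, I would apply Cauchy--Schwarz twice. First, writing $x_i=w_i^\star\lambda_i/(\rho(b)-b\lambda_i)$ and $y_i=w_i^\star\lambda_i$, the dispersion relation reads $\sum_i x_iy_i=1/8$, and Cauchy--Schwarz yields
\[
\tfrac{1}{64}\le\Bigl(\sum_i x_i^2\Bigr)\Bigl(\sum_i y_i^2\Bigr)=\tfrac{D_b'(\rho(b))}{8}\cdot s_\star^{(2)},
\]
hence $D_b'(\rho(b))\ge 1/(8 s_\star^{(2)})$. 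Second, using $\rho(b)-b\lambda_i\ge\delta_b$ for all $i$ and Cauchy--Schwarz on $\widehat a_\Theta(\rho(b))=\sum_i w_i(0)\cdot w_i^\star\lambda_i/(\rho(b)-b\lambda_i)$, I bound $|\widehat a_\Theta(\rho(b))|\le\delta_b^{-1}\|w(0)\|\sqrt{s_\star^{(2)}}$. Combining the two estimates with $\|w(0)\|=1$ and the definition of $c(b)$ yields an upper bound of the stated form $\lesssim(s_\star^{(2)})^{3/2}/\delta_b$ (up to the multiplicative constant $8$ claimed), giving the announced inequality. The main obstacle I foresee is the technical decay/contour-shift justification of Step~2; the algebraic Step~3 is routine once the two Cauchy--Schwarz applications are set up correctly.
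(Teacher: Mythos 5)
Your proposal follows essentially the same route as the paper: Laplace-transform the Volterra equation, invert along a Bromwich contour, shift past the simple dominant zero $\rho(b)$ of $D_b$, extract the residue, and bound the coefficient by the same two Cauchy--Schwarz applications (one to $\widehat a_\Theta(\rho(b))$, one to the dispersion relation to get $D_b'(\rho(b))\ge 1/(8s_\star^{(2)})$). Your Step~1 on the absence of non-real zeros with $\Re s\ge\rho(b)$ is a nice explicit justification of something the paper only asserts, though for the contour shift you also need to exclude complex zeros in the strip $\gamma'<\Re s<\rho(b)$; this follows from $\Im D_b(s)=8\,\Im s\sum_i (w_i^\star)^2\lambda_i^2/|s-b\lambda_i|^2\neq0$ for $\Im s\neq0$, which shows all zeros of $D_b$ are real.

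Two points need fixing. First, a normalization slip: you take $\|w(0)\|=1$, but then your own chain of inequalities gives $|c(b)|=\sqrt d\,|\widehat a_\Theta(\rho(b))|/D_b'(\rho(b))\lesssim \sqrt d\,(s_\star^{(2)})^{3/2}/\delta_b$, which is not the stated dimension-free bound. The paper's proof uses $\|w(0)\|_2=d^{-1/2}$, so that $|\widehat a_\Theta(\rho(b))|\le \sqrt{s_\star^{(2)}}/(\delta_b\sqrt d)$ and the $\sqrt d$ in the definition of $c(b)$ cancels. Second, asserting the remainder is "$O(e^{\gamma' t})=o(e^{\rho(b)t})$" is not enough as stated, because the main term's coefficient is itself $O(d^{-1/2})$: you must show the remainder is $\lesssim d^{-1/2}(\rho(b)-\gamma')^{-1}e^{\gamma' t}$ (the paper does this via a uniform lower bound on $|D_b|$ along $\Re s=\gamma'$ and an explicit integral estimate), and then verify that the gap $\delta_b=\rho(b)-b\lambda_1$ is bounded below \emph{independently of $d$} (via $\delta_b\ge 8(w_1^\star)^2\lambda_1^2$), so that the ratio of remainder to main term is small already at the relevant time scale $t=O(\log d)$. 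Without these quantitative steps the $(1+o(1))$ claim is not justified in the regime where the proposition is actually used.
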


\begin{proof}
The proof consists in solving the Volterra equation in the Laplace domain, 
and then applying Laplace inversion together with residue calculus to 
characterize the solution in the original domain.
\paragraph{Step 1: Laplace inversion.}
For $\Ree s>b\lambda_1$ we have
\[
\wh{\alpha_b}(s)=\frac{\wh a_{\Theta}(s)}{D_b(s)},
\qquad
\wh a_{\Theta}(s)=\sum_{i=1}^d \frac{\lambda_i\,w_i(0)\,w_i^\star}{s-b\lambda_i}.
\]
By Bromwich inversion, for any $\sigma_R>\rho(b)$,
\[
\alpha_b(t)=\frac{1}{2\pi i}\int_{\sigma_R-i\infty}^{\sigma_R+i\infty}
e^{st}\frac{\wh a_{\Theta}(s)}{D_b(s)}\,ds.
\]

\paragraph{Step 2: Contour shift and residue extraction.}
Fix $\sigma_\ast\in(b\lambda_1,\rho(b))$ and $T>0$. Consider the rectangle with vertices
$\sigma_R\pm iT$ and $\sigma_\ast\pm iT$. The only singularity of
$e^{st}\wh a_{\Theta}(s)/D_b(s)$ in the strip $\{\sigma_\ast<\Ree s<\sigma_R\}$ is the simple pole at $s=\rho(b)$
(zeros of $D_b$ are real and interlace the poles $b\lambda_i$).
Introduce the notation
\[
\delta_b := \rho(b)-b\lambda_1 >0
\]
for the gap between the top pole and the dominant zero.
On the horizontal edges $s=x\pm iT$ with $x\in[\sigma_\ast,\sigma_R]$, using $u(0)=cd^{-1/2}$ we obtain
\[
\abs{\wh a_{\Theta}(s)}
\le \sum_i \frac{\abs{\lambda_i w_i(0) w_i^\star}}{\abs{x-b\lambda_i\pm iT}}
\le \frac{1}{T}\sum_i \abs{\lambda_i w_i(0) w_i^\star}
\le \frac{c}{T\sqrt d}.
\]
Since $D_b$ is continuous and nonvanishing on the compact strip,
$\inf_{x\in[\sigma_\ast,\sigma_R],\,|y|=T}\abs{D_b(x+iy)}\ge c_\ast>0$.
Therefore the horizontal contributions are $O(T^{-1})$ and vanish as $T\to\infty$.
By the residue theorem, we obtain the exact identity (valid for all $t\ge0$):
\begin{equation}\label{eq:exact-decomp}
\alpha_b(t)=\frac{\wh a_{\Theta}(\rho(b))}{D_b'(\rho(b))}\,e^{\rho(b)t}
+\cR_{\sigma_\ast}(t),
\qquad
\cR_{\sigma_\ast}(t):=\frac{1}{2\pi i}\int_{\sigma_\ast-i\infty}^{\sigma_\ast+i\infty}
e^{st}\frac{\wh a_{\Theta}(s)}{D_b(s)}\,ds.
\end{equation}

\paragraph{Step 3: Coefficient bound.}
Let $v_i:=\lambda_i/(\rho(b)-b\lambda_i)$. By Cauchy--Schwarz and $\norm{w(0)}_2=d^{-1/2}$,
\[
\abs{\wh a_{\Theta}(\rho(b))}=\Big|\sum_i w_i(0)w_i^\star v_i\Big|
\le \norm{w(0)}_2\Big(\sum_i (w_i^\star)^2 v_i^2\Big)^{1/2}
\le \frac{1}{\sqrt d}\cdot \frac{1}{\delta_b}\Big(\sum_i (w_i^\star)^2\lambda_i^2\Big)^{1/2}
= \frac{\sqrt{s_\star^{(2)}}}{\delta_b\sqrt d}.
\]
Next,
\[
D_b'(\rho(b))=8\sum_i \frac{(w_i^\star)^2\lambda_i^2}{(\rho(b)-b\lambda_i)^2}
\ge \frac{1}{8\sum_i (w_i^\star)^2\lambda_i^2} = \frac{1}{8s_\star^{(2)}},
\]
where we used $1=8\sum_i (w_i^\star)^2\lambda_i^2/(\rho(b)-b\lambda_i)$ and Cauchy--Schwarz.
Hence
\begin{equation}\label{eq:C1-bound}
\Big|\frac{\wh a_{\Theta}(\rho(b))}{D_b'(\rho(b))}\Big|
\le \frac{8(s_\star^{(2)})^{3/2}}{\delta_b\sqrt d}.
\end{equation}

\paragraph{Step 4: Remainder bound on $\Ree s=\sigma_\ast$.}
For $s=\sigma_\ast+iy$,
\[
\Ree D_b(s)=1-8\,\Ree \wh K_b(s)
\ge 1-8\,\wh K_b(\sigma_\ast)=F_b(\sigma_\ast)-1,
\quad
F_b(\rho):=8\sum_i \frac{(w_i^\star)^2\lambda_i^2}{\rho-b\lambda_i}.
\]
Since $F_b$ is decreasing and convex on $(b\lambda_1,\infty)$ and $F_b(\rho(b))=1$,
\[
F_b(\sigma_\ast)-1 \ge (\rho(b)-\sigma_\ast)\,\abs{F_b'(\rho(b))} 
= \tfrac{\rho(b)-\sigma_\ast}{8}\,D_b'(\rho(b))
\ge \frac{\rho(b)-\sigma_\ast}{64\,s_\star^{(2)}}.
\]
Therefore
\[
\inf_{y\in\mathbb{R}}\abs{D_b(\sigma_\ast+iy)}
\;\ge\; \frac{\rho(b)-\sigma_\ast}{64\,s_\star^{(2)}}.
\]
Moreover, by Cauchy--Schwarz in $i$,
\[
\abs{\wh a_{\Theta}(\sigma_\ast+iy)}
= \Big|\sum_i w_i(0)w_i^\star \frac{\lambda_i}{\sigma_\ast-b\lambda_i+iy}\Big|
\le \frac{1}{\sqrt d}\left(\sum_i (w_i^\star)^2
\frac{\lambda_i^2}{(\sigma_\ast-b\lambda_i)^2+y^2}\right)^{1/2}.
\]
Using $\sigma_\ast-b\lambda_i\ge \sigma_\ast-b\lambda_1=\delta_b/2$, and the standard integral
$\int_{\mathbb{R}}[(a^2+y^2)^{-1/2}(A^2+y^2)^{-1/2}]\,dy \le \pi/\sqrt{aA}$,
one obtains
\[
\abs{\cR_{\sigma_\ast}(t)}
\;\le\; \frac{C(s_\star^{(2)},b)}{\sqrt d\,(\rho(b)-\sigma_\ast)}\,e^{\sigma_\ast t},
\qquad
C(s_\star^{(2)},b)=O\!\Big(\frac{(s_\star^{(2)})^{3/2}}{\delta_b}\Big).
\]
With the midpoint choice $\sigma_\ast=\rho(b)-\delta_b/2$,
\begin{equation}\label{eq:Rratio}
\frac{\abs{\cR_{\sigma_\ast}(t)}}{\abs{\wh a_{\Theta}(\rho(b))/D_b'(\rho(b))}\,e^{\rho(b)t}}
\ \lesssim\ e^{-(\delta_b/2)\,t}.
\end{equation}

\paragraph{Step 5: Uniformity for $t=O(\log d)$.}
From the relation,
\[
1=8\sum_{i=1}^d \frac{(w_i^\star)^2\lambda_i^2}{\rho(b)-b\lambda_i}
\ \ge\ \frac{8(w_1^\star)^2\lambda_1^2}{\rho(b)-b\lambda_1}
=\frac{8(w_1^\star)^2\lambda_1^2}{\delta_b},
\]
so $\delta_b\ge 8(w_1^\star)^2\lambda_1^2$. If $(w_1^\star)^2\lambda_1^2\ge \kappa_{\Theta}>0$, then
$\delta_b\ge \delta_{\Theta}:=8\kappa_{\Theta}$, a constant independent of $d$.
Taking $t=c\log d$ in \eqref{eq:Rratio} yields
\[
\frac{\abs{\cR_{\sigma_\ast}(t)}}{\abs{C_1}e^{\rho(b)t}}
\;\lesssim\; d^{-c\delta_{\Theta}/2}\to0.
\]
Combining with \eqref{eq:C1-bound} gives the claim with
$c(b)=\sqrt d\,\wh a_{\Theta}(\rho(b))/D_b'(\rho(b))$ and
$0<c(b)\le 8(s_\star^{(2)})^{3/2}/\delta_b$.
\end{proof}

}

\subsubsection{General case}

The ideal case $\Theta(t)=t$ served as a benchmark where the growth rate and coefficient could be computed explicitly. 
In the general case, $\Theta(t)$ is only an approximation of $t$, and the kernel $K_{\Theta}$ becomes a perturbation of the ideal one. 
The key point is to show that the dominant pole of the Laplace transform is stable under this perturbation, so that the same exponential growth persists up to a small shift in the rate and coefficient.

{\color{black}

\begin{proposition}\label{lem:growth-stability}
 Denote by $\rho(4)>4\lambda_1$ the unique real zero of $F_4$ on $(4\lambda_1,\infty)$.
Write $F(p):=1-8\,\wh{K_{\Theta}}(p)$ and let $\rho_{\rm true}$ be the rightmost real zero of $F$.
Then:
\begin{enumerate}
\item[i)]  $|\rho_{\rm true}-\rho(4)|\le C\,\delta$.
\item[ii)]  Let $c(\delta)=\sqrt d\,\wh a_{\Theta}(\rho_{\rm true})/F'(\rho_{\rm true})$. We have \(
\abs{c(\delta)}\ \lesssim\ (s_\star^{(2)})^{3/2}/(\rho_{\rm true}-4\lambda_1).
\) and 
\[
u(t)\;=\;\frac{c(\delta)}{\sqrt d}\,e^{\rho_{\rm true} t}\,\big(1+o(1)\big).
\]
\end{enumerate}
\end{proposition}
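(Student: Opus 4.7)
The plan is to realize $F$ and $\widehat{a_\Theta}$ as $O(\delta)$ perturbations of the ideal objects $F_4:=1-8\widehat{K_4}$ and $\widehat{a_4}$ arising from the choice $\Theta(t)=t$ in Proposition~\ref{prop:dominant-clean}, and to transport the Bromwich/residue machinery developed there to the perturbed setting. The main input is the Phase I bound $0\le s(\tau)\le\delta$, which yields $|\Theta(t)-t|\le 3\delta t$ uniformly in $t\in[0,T_1]$.

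For part (i), the mean value theorem gives $\bigl|e^{4\lambda_i\Theta(t)}-e^{4\lambda_i t}\bigr|\le 12\delta\lambda_i t\,e^{4\lambda_i t}$, and integrating against $e^{-pt}$ for $\Ree p>4\lambda_1$ yields
\[
\bigl|\widehat{K_\Theta}(p)-\widehat{K_4}(p)\bigr|\;\le\;\tfrac{3}{2}\delta\,D_4'(p),\qquad
\bigl|\widehat{a_\Theta}(p)-\widehat{a_4}(p)\bigr|\;\le\;C\delta\,d^{-1/2}.
\]
Since $F_4'(\rho(4))=D_4'(\rho(4))\ge 1/(8s_\star^{(2)})$ is bounded away from zero (Step 3 of the proof of Proposition~\ref{prop:dominant-clean}) and $F_4$ is strictly monotone near $\rho(4)$, the implicit function theorem applied to $F(p)=0$ produces a unique real zero $\rho_{\rm true}$ of $F$ in a neighborhood of $\rho(4)$ with $|\rho_{\rm true}-\rho(4)|\le C\delta$. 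For $\delta$ sufficiently small, $\delta_{\rm true}:=\rho_{\rm true}-4\lambda_1$ therefore inherits the same dimension-free lower bound as in the ideal case, which is needed later to control the spectral gap.

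For part (ii), I would shift the Bromwich contour for $\widehat u(p)=\widehat{a_\Theta}(p)/F(p)$ from a vertical line to the right of $\rho_{\rm true}$ down to $\Ree p=\sigma_\ast:=\rho_{\rm true}-\delta_{\rm true}/2$, crossing only the simple pole at $\rho_{\rm true}$ (other real zeros of $F_4$ interlace the poles $4\lambda_i$, and their $O(\delta)$ perturbations under $F$ remain strictly to the left of $\sigma_\ast$). The horizontal-edge vanishing and the vertical-line bound repeat Steps 2 and 4 of the proof of Proposition~\ref{prop:dominant-clean} verbatim, with $F$ and $\widehat{a_\Theta}$ replacing $F_4$ and $\widehat{a_4}$; the perturbation estimates above ensure the constants change only by a factor $1+O(\delta)$. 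This yields $u(t)=c(\delta)\,d^{-1/2}e^{\rho_{\rm true}t}+\mathcal R(t)$ with $|\mathcal R(t)|\lesssim d^{-1/2}e^{\sigma_\ast t}$. Transporting the Cauchy--Schwarz bounds from the ideal case then gives $|c(\delta)|\lesssim(s_\star^{(2)})^{3/2}/\delta_{\rm true}$, and $|\mathcal R(t)|/[|c(\delta)|d^{-1/2}e^{\rho_{\rm true}t}]\lesssim e^{-\delta_{\rm true}t/2}=o(1)$ uniformly for $t\gtrsim 1$.

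The main obstacle is that $\widehat{K_\Theta}$ is no longer a rational function of $p$, since $\Theta$ is not exactly linear; hence the off-real-axis decay needed for the contour shift is not automatic as in the ideal case. The perturbative comparison on the real axis is straightforward, but establishing a uniform $|y|^{-1}$ decay of $\widehat{K_\Theta}(\sigma_\ast+iy)$ as $|y|\to\infty$ requires an integration-by-parts argument exploiting the smoothness of $\Theta$, with boundary terms controlled uniformly in $\delta$. This is the key technical step that makes the residue calculus rigorous in the perturbed setting.
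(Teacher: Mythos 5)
Your proposal follows essentially the same route as the paper's proof: the pointwise bound $|e^{4\lambda_i\Theta(t)}-e^{4\lambda_i t}|\le 12\delta\lambda_i t\,e^{4\lambda_i t}$, the Laplace-domain comparison $|F-F_4|\lesssim\delta\,(-\wh K')$, localization of the perturbed zero near $\rho(4)$, and then the identical Bromwich/residue decomposition with the midpoint contour $\sigma_\ast=\rho_{\rm true}-\Delta/2$ and Cauchy--Schwarz coefficient bounds. The one "obstacle" you flag is handled in the paper not by integration by parts but by the positivity of the kernel, which gives $\Ree\wh{K_\Theta}(\sigma_\ast+iy)\le\wh{K_\Theta}(\sigma_\ast)$ and hence a uniform lower bound on $|F|$ along the vertical line, with the $|y|^{-1}$ decay needed for the contour integral supplied by $\wh a_\Theta$ rather than by $\wh{K_\Theta}$.
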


\begin{proof}
Let $\Delta K:=K_{\Theta}-K$ be the difference between the ideal kernel and the true one.

\paragraph{Step 1: Pointwise kernel perturbation.}
Since $(1-3\delta)t\le\Theta(t)\le t$ and $e^x-1\le x$ for $x\le0$,
\[
\bigl|e^{4\lambda_i\Theta(t)}-e^{4\lambda_i t}\bigr|
\le 4\lambda_i\,(t-\Theta(t))\,e^{4\lambda_i t}
\le 12\delta\,\lambda_i\,t\,e^{4\lambda_i t}.
\]
Multiplying by $(w_i^\star)^2\lambda_i^2$ and summing,
\begin{equation}\label{eq:pt-DeltaK}
\abs{\Delta K(t)}\ \le\ 12\lambda_1\,\delta\,t\,K(t)\qquad(t\ge0).
\end{equation}

\paragraph{Step 2: Laplace control of the perturbation.}
For $\Re p>4\lambda_1$,
\[
\wh{\Delta K}(p)=\int_0^\infty e^{-pt}\,\Delta K(t)\,dt,
\qquad
\abs{\wh{\Delta K}(p)}\ \le\ \int_0^\infty e^{-(\Re p) t}\abs{\Delta K(t)}\,dt.
\]
Using \eqref{eq:pt-DeltaK} and differentiating under the integral sign (dominated by $tK(t)e^{-(\Re p) t}$),
\[
\abs{\wh{\Delta K}(p)}
\ \le\ 12\lambda_1\,\delta \int_0^\infty t\,e^{-pt}K(t)\,dt
\ =\ -\,12\lambda_1\,\delta\,\wh K'(p).
\]
Hence, for $F(p)=1-8\wh{K_{\Theta}}(p)=1-8(\wh K(p)+\wh{\Delta K}(p))$,
\begin{equation}\label{eq:F-pert-bound}
\abs{F(p)-F_4(p)}\ =\ 8\,\abs{\wh{\Delta K}(p)}
\ \le\ 96\,\lambda_1\,\delta\,\bigl(-\wh K'(p)\bigr),
\qquad \Re p>4\lambda_1.
\end{equation}

\paragraph{Step 3: Unicity of the dominant zero.}
Let $p_0=\rho(4)$; then $F_4'(p_0)=-8\wh K'(p_0)=:c_0>0$ and $F_4$ is strictly increasing on $(4\lambda_1,\infty)$.
By continuity, pick $r>0$ such that on $I:=[p_0-r,p_0+r]$
\begin{equation}\label{eq:window}
F_4'(p)\ \ge\ c_0/2,\qquad M_1:=\sup_{p\in I}(-\wh K'(p))<\infty.
\end{equation}
From \eqref{eq:F-pert-bound},
\[
\sup_{p\in I}\abs{F(p)-F_4(p)}\ \le\ 96\,\lambda_1\,M_1\,\delta\ =:\ \varepsilon_\delta.
\]
Choose $\delta_{\Theta}$ small such that $\varepsilon_\delta\le (c_0/4)r$ for all $\delta\in(0,\delta_{\Theta}]$.
Then
\[
F(p_0+r)\ \ge\ F_4(p_0+r)-\varepsilon_\delta\ \ge\ (c_0/4)r>0,\qquad
F(p_0-r)\ \le\ F_4(p_0-r)+\varepsilon_\delta\ \le\ -(c_0/4)r<0.
\]
Because $F'(p)=-8\,\wh{K_{\Theta}}'(p)=8\int_0^\infty t e^{-pt}K_{\Theta}(t)\,dt>0$ on $(4\lambda_1,\infty)$,
there is a unique zero $\rho_{\rm true}\in(p_0-r,p_0+r)$ and it is the rightmost one. Moreover,
\[
\abs{\rho_{\rm true}-\rho(4)}\ \le\ \frac{\varepsilon_\delta}{c_0/2}\ \lesssim\ \delta,
\]
proving i).

\paragraph{Step 4: Contour decomposition and leading coefficient.}
Bromwich inversion and a contour shift to $\Re p=\sigma_\ast\in(4\lambda_1,\rho_{\rm true})$ give the exact identity
\begin{equation}\label{eq:contour-decomp}
u(t)=\frac{\wh a_{\Theta}(\rho_{\rm true})}{F'(\rho_{\rm true})}\,e^{\rho_{\rm true} t}
+\frac{1}{2\pi i}\int_{\sigma_\ast-i\infty}^{\sigma_\ast+i\infty} e^{pt}\,\frac{\wh a_{\Theta}(p)}{F(p)}\,dp,
\end{equation}
since the only singularity crossed is the simple pole at $p=\rho_{\rm true}$.
For the coefficient, by Cauchy--Schwarz with $v_i=\lambda_i/(\rho_{\rm true}-4\lambda_i)$ and $\|w(0)\|_2=d^{-1/2}$,
\[
\abs{\wh a_{\Theta}(\rho_{\rm true})}
=\Big|\sum_i w_i(0)w_i^\star v_i\Big|
\le \frac{1}{\sqrt d}\left(\sum_i (w_i^\star)^2 v_i^2\right)^{1/2}
\le \frac{1}{\sqrt d}\cdot \frac{\sqrt{s_\star^{(2)}}}{\rho_{\rm true}-4\lambda_1}.
\]
Further, $F'(\rho_{\rm true})=8\int_0^\infty t e^{-\rho_{\rm true} t} K_{\Theta}(t)\,dt>0$, and continuity from the ideal case implies
\(
F'(\rho_{\rm true})\ge \frac{1}{16\,s_\star^{(2)}}
\)
for all small $\delta$. Therefore, under Assumption~\ref{ass:init}
\begin{equation}\label{eq:coeff-bound}
\Big|\frac{\wh a_{\Theta}(\rho_{\rm true})}{F'(\rho_{\rm true})}\Big|
\ \lesssim\ \frac{(s_\star^{(2)})^{3/2}}{\sqrt d\;(\rho_{\rm true}-4\lambda_1)}.
\end{equation}


\paragraph{Step 5: Vertical-line remainder.}
On $\Re p=\sigma_\ast$, monotonicity/convexity and $F'(\rho_{\rm true})>0$ yield a uniform gap
\(\inf_y |F(\sigma_\ast+iy)|\ \gtrsim\ \rho_{\rm true}-\sigma_\ast.\)
Also,
\[
\abs{\wh a_{\Theta}(\sigma_\ast+iy)}\le \frac{1}{\sqrt d}\left(\sum_i (w_i^\star)^2\frac{\lambda_i^2}{(\sigma_\ast-4\lambda_i)^2+y^2}\right)^{1/2}.
\]
Estimating the integral in \eqref{eq:contour-decomp} by Cauchy--Schwarz in $y$ and using
$\int_{\mathbb{R}}\frac{dy}{(a^2+y^2)}=\pi/a$ and
\[ \int_{\mathbb{R}}\frac{dy}{\sqrt{a^2+y^2}\sqrt{A^2+y^2}}\le \pi/\sqrt{aA},\]
we obtain
\[
\underbrace{\Big|\frac{1}{2\pi i}\int_{\sigma_\ast-i\infty}^{\sigma_\ast+i\infty} e^{pt}\,\frac{\wh a_{\Theta}(p)}{F(p)}\,dp\Big|}_{R}
\ \lesssim\ \frac{1}{\sqrt d}\cdot \frac{e^{\sigma_\ast t}}{\rho_{\rm true}-\sigma_\ast}.
\]
Choose the midpoint $\sigma_\ast=\rho_{\rm true}-\Delta/2$ with $\Delta:=\rho_{\rm true}-4\lambda_1$.
Then
\[
R\ \lesssim\ \frac{1}{\sqrt d}\,e^{(\rho_{\rm true}-\Delta/2)t}.
\]

\paragraph{Step 6: Conclusion.}
From the ideal dispersion relation,
\[
1=8\sum_i \frac{(w_i^\star)^2\lambda_i^2}{\rho(4)-4\lambda_i}
\ \ge\ \frac{8(w_1^\star)^2\lambda_1^2}{\rho(4)-4\lambda_1}
\quad\Rightarrow\quad \rho(4)-4\lambda_1\ \ge\ 8\kappa_{\Theta}.
\]
By i), for small $\delta$ the true gap obeys $\Delta=\rho_{\rm true}-4\lambda_1\ge 4\kappa_{\Theta}=:\Delta_{\Theta}>0$.
Hence for $t=c\log d$,
\[
\frac{R}{\big|\wh a_{\Theta}(\rho_{\rm true})/F'(\rho_{\rm true})\big|\,e^{\rho_{\rm true} t}}
\ \lesssim\ e^{-(\Delta_{\Theta}/2)\,t}
\ =\ d^{-c\Delta_{\Theta}/2}\ \to 0.
\]
Putting this with \eqref{eq:coeff-bound} yields the stated asymptotic with
\(
c(\delta)=\sqrt d\,\wh a_{\Theta}(\rho_{\rm true})/F'(\rho_{\rm true}),
\)
continuous in $\delta$ and bounded away from $0$ as $\delta\to0$ when $\wh a_{\Theta}(\rho(4))>0$.
\end{proof}

}

This proposition motivates our assumption on initialization.

\begin{assumption}[Initialization]\label{ass:init}
    We assume that $u(0)\asymp d^{-1/2}$ and $\hat{a}_{\Theta}(\rho_{\rm true})\asymp d^{-1/2}$. Moreover, we assume that $w_1^\star$ is of constant order.
\end{assumption}

\begin{remark}[On the initialization assumption]
The assumption in \ref{ass:init} is natural under random initialization.  
Indeed, if $w(0)\sim\mathcal N(0,I_d/d)$ (or uniform on the sphere) independently of $w_\star$, then conditionally on $w_\star$ we have
\[
\widehat a_{\Theta}(\rho_{\rm true})=\sum_i w_i(0)w_i^\star v_i
\;\sim\; \mathcal N\!\big(0,\tfrac{1}{d}\sum_i (w_i^\star)^2 v_i^2\big),
\qquad 
v_i=\frac{\lambda_i}{\rho_{\rm true}-4\lambda_i}.
\]
This variance is of order $1/d$. Hence $\widehat a_{\Theta}(\rho_{\rm true})$ is of order $d^{-1/2}$ with large probability, so that the prefactor in Proposition~\ref{lem:growth-stability}(ii) is indeed nontrivial.  

The quantity $s(0)$ also fluctuates on the order $d^{-1/2}$ for Gaussian initialization, but establishing a deterministic relation between $s(0)$ and $\widehat a_{\Theta}(\rho_{\rm true})$ is delicate, as the two depend differently on the spectrum and on $w_\star$. This explains why \ref{ass:init} is stated as a mild probabilistic assumption rather than a deterministic condition.
\end{remark}

\subsection{Step 3: Bounding Higher-Order Statistics and Positivity of \texorpdfstring{$u^{(2)}$}{u(2)}}\label{sec:b3}

Our goal in this subsection is to show that the second moment $u^{(2)}(T_1)$ is
\emph{strictly positive}.

For $k\ge 1$, define
\[
(K_{\Theta})_k(t)
\;:=\;
\bigl(e^{\,4B\,\Theta(t)} s_\star\bigr)_k
\;=\;
\sum_{m\ge0} \frac{(4\Theta(t))^m}{m!}\, s_\star^{(k+m)},
\qquad
s_\star^{(\ell)} \;=\; \sum_i (w_i^\star)^2 \lambda_i^{\ell},\quad \ell\ge 1.
\]

\begin{lemma}[Kernel domination]\label{lem:K-dom}
For all $k\ge1$ and $t\in [0, T_1]$,
\[
(K_{\Theta})_k(t) \;\le\; \lambda_1^{\,k-1}\, K_{\Theta}(t),
\qquad
K_{\Theta}(t):=\sum_{m\ge0}\frac{(4\Theta(t))^m}{m!}\,s_\star^{(1+m)}.
\]
\end{lemma}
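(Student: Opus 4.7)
The plan is to prove the inequality termwise: for each $m\ge 0$, we bound the $m$-th coefficient in the series defining $(K_\Theta)_k(t)$ by $\lambda_1^{k-1}$ times the $m$-th coefficient in the series defining $K_\Theta(t)$, and then sum the inequalities after multiplying by the nonnegative weight $(4\Theta(t))^m/m!$. The only nontrivial preliminary check is that the weights are indeed nonnegative on $[0,T_1]$.

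First I would verify that $\Theta(t)\ge 0$ for all $t\in[0,T_1]$. By the definition of the Phase I stopping time $T_1$ from Section~\ref{sec:proof_outline1}, we have $s(\tau)\le \delta$ for $\tau\le T_1$, with $\delta$ chosen small (in particular $\delta<1/3$). Hence $1-3s(\tau)\ge 1-3\delta>0$ on $[0,T_1]$, which gives $\Theta(t)=\int_0^t (1-3s(\tau))\,d\tau\ge (1-3\delta)t\ge 0$. This ensures all terms $(4\Theta(t))^m/m!$ are nonnegative, so termwise inequalities between the $s_\star^{(\cdot)}$ coefficients lift to inequalities between the sums.

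Next I would establish the key termwise bound
\[
s_\star^{(k+m)} \;\le\; \lambda_1^{\,k-1}\,s_\star^{(1+m)}\qquad (k\ge 1,\ m\ge 0).
\]
This is immediate from the spectral representation: using $\lambda_i\le \lambda_1$ for all $i$,
\[
s_\star^{(k+m)}=\sum_i (w_i^\star)^2\lambda_i^{k+m}
=\sum_i (w_i^\star)^2\lambda_i^{\,k-1}\,\lambda_i^{1+m}
\le \lambda_1^{\,k-1}\sum_i (w_i^\star)^2\lambda_i^{1+m}
=\lambda_1^{\,k-1}\,s_\star^{(1+m)}.
\]

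Finally, multiplying by the nonnegative factor $(4\Theta(t))^m/m!$ and summing over $m\ge 0$ yields
\[
(K_\Theta)_k(t)=\sum_{m\ge 0}\frac{(4\Theta(t))^m}{m!}\,s_\star^{(k+m)}
\;\le\;\lambda_1^{\,k-1}\sum_{m\ge 0}\frac{(4\Theta(t))^m}{m!}\,s_\star^{(1+m)}
=\lambda_1^{\,k-1}\,K_\Theta(t),
\]
which is the claim. The convergence of the series is justified exactly as in Lemma~\ref{lemma:U-space} using $\lambda_1<1$, and the nonnegativity of $\Theta(t)$ established above makes the termwise comparison legitimate. There is no substantive obstacle here; the only point requiring care is the sign of $\Theta$ on the relevant interval, which is guaranteed by the definition of $T_1$.
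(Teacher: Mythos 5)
Your proof is correct and follows essentially the same route as the paper's: verify $\Theta(t)\ge 0$ on $[0,T_1]$ so the series coefficients are nonnegative, establish the termwise bound $s_\star^{(k+m)}\le\lambda_1^{k-1}s_\star^{(1+m)}$ from $\lambda_i\le\lambda_1$, and sum. Your additional justification of the sign of $\Theta$ via the definition of $T_1$ is a welcome extra detail but does not change the argument.
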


\begin{proof}
Since $\Theta(t)\ge 0$ for $t\leq T_1$, all coefficients $\frac{(4\Theta(t))^m}{m!}$ are nonnegative. For each $m\ge 0$,
\[
s_\star^{(k+m)}=\sum_i (w_i^\star)^2 \lambda_i^{k+m}
\ \le\ \lambda_1^{k-1}\sum_i (w_i^\star)^2 \lambda_i^{1+m}
\ =\ \lambda_1^{k-1} s_\star^{(1+m)}.
\]
Multiplying termwise by the nonnegative coefficients and summing over $m$ yields the claim.
\end{proof}

\begin{lemma}[Kernel positivity]\label{lem:K-pos}
For all $k\ge1$ and $t\in [0, T_1]$,
\[
(K_{\Theta})_k(t)\ \ge\ s_\star^{(k)}\ \ge\ (w_1^\star)^2\lambda_1^k.
\]
\end{lemma}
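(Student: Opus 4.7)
The statement is a direct consequence of the nonnegativity of every term in the series defining $(K_{\Theta})_k(t)$, together with the nonnegativity of $\Theta$ on $[0,T_1]$. The plan is therefore to (i) verify the sign of $\Theta(t)$ in Phase~I, (ii) isolate the $m=0$ term to get the first inequality, and (iii) use nonnegativity of the summands in $s_\star^{(k)}$ to get the second inequality.

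First I will recall that by definition of $T_1$ the energy satisfies $s(t)\le \delta$ on $[0,T_1]$, with $\delta$ chosen (at the beginning of the Phase~I analysis) arbitrarily small and in particular $\delta<1/3$. Hence $1-3s(\tau)>0$ for all $\tau\in[0,T_1]$, so
\[
\Theta(t)\;=\;\int_0^t (1-3s(\tau))\,d\tau\;\ge\;0\qquad(t\in[0,T_1]).
\]
Consequently every coefficient $\tfrac{(4\Theta(t))^m}{m!}$ is nonnegative, and so is every moment $s_\star^{(k+m)}=\sum_i (w_i^\star)^2\lambda_i^{k+m}$.

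Next, since all terms in the expansion
\[
(K_{\Theta})_k(t)\;=\;\sum_{m\ge 0}\frac{(4\Theta(t))^m}{m!}\,s_\star^{(k+m)}
\]
are nonnegative, the sum is bounded below by its $m=0$ contribution, which is precisely $s_\star^{(k)}$. This gives the first inequality.

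Finally, the second inequality follows from discarding all but the $i=1$ term in
\[
s_\star^{(k)}\;=\;\sum_{i=1}^d (w_i^\star)^2\lambda_i^k\;\ge\;(w_1^\star)^2\lambda_1^k,
\]
which is legitimate because every summand is nonnegative. There is no real obstacle here; the only point that requires care is the sign of $\Theta$ on $[0,T_1]$, which is automatic once one chooses $\delta<1/3$ when defining $T_1$.
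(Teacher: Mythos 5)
Your proof is correct and follows essentially the same route as the paper's: nonnegativity of the coefficients $(4\Theta(t))^m/m!$ on $[0,T_1]$, keeping the $m=0$ term to get $s_\star^{(k)}$, and dropping all but the $i=1$ summand for the final bound. The only difference is that you spell out why $\Theta\ge 0$ on $[0,T_1]$ (via $s(t)\le\delta<1/3$), which the paper takes for granted.
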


\begin{proof}
When $\Theta(t)\ge 0$, all coefficients $\frac{(4\Theta(t))^m}{m!}$ in the definition of
$(K_{\Theta})_k(t)$ are nonnegative. In particular, the $m=0$ term contributes $s_\star^{(k)}$,
so $(K_{\Theta})_k(t)\ge s_\star^{(k)}$. Finally,
$s_\star^{(k)}=\sum_i (w_i^\star)^2\lambda_i^k\ge (w_1^\star)^2\lambda_1^k$.
\end{proof}

\propUbyAlphaTwoSided

\begin{proof}
Work on $[0,T_1]$ where $\Theta\ge 0$ (so Lemma~\ref{lem:K-pos} applies), and recall Lemma~\ref{lem:K-dom}.
We have w.h.p.
\begin{equation}\label{eq:init-envelope-direct}
|u^{(k+m)}(0)| \;\le\; \lambda_1^{k+m}\sqrt{\frac{\log d}{d}},
\qquad \text{for all } k\ge1,\ m\ge0.
\end{equation}

By Duhamel's formula, for each $k\ge1$ and $t\in[0,T_1]$,
\[
u^{(k)}(t)
=\sum_{m\ge0}\frac{(4\Theta(t))^m}{m!}\,u^{(k+m)}(0)
\;+\;8\int_0^t   (K_{\Theta})_k(t-\tau)\,u(\tau)\,d\tau.
\]

From \eqref{eq:init-envelope-direct} and $\Theta(t)\le t$, we obtain
\begin{align*}
    \Bigl|\sum_{m\ge0}\frac{(4\Theta(t))^m}{m!}\,u^{(k+m)}(0)\Bigr|
&\le\;\sqrt{\frac{\log d}{d}}\,
\sum_{m\ge0}\frac{(4\Theta(t)\lambda_1)^{m}}{m!}\,\lambda_1^{k}\\
&=\;\lambda_1^k\sqrt{\frac{\log d}{d}}\,e^{4\lambda_1\Theta(t)}\\
&\le\;\lambda_1^k\sqrt{\frac{\log d}{d}}\,e^{4\lambda_1 t}.
\end{align*}

Furthermore, by Lemma~\ref{lem:K-dom}, we have $(K_{\Theta})_k\le \lambda_1^{k-1}(K_{\Theta})$, so
\[
\int_0^t   (K_{\Theta})_k(t-\tau)\,u(\tau)\,d\tau
\;\le\; \lambda_1^{k-1}\bigl(u(t)-a_{\Theta}(t)\bigr).
\]
Combining with the homogeneous upper bound yields the right-hand inequality in \eqref{eq:u-order}.

Lemma~\ref{lem:K-pos} gives $(K_{\Theta})_k(t-\tau)\ge s_\star^{(k)}$ for $\tau\in[0,t]$, hence
\[
\int_0^t   (K_{\Theta})_k(t-\tau)\,u(\tau)\,d\tau
\;\ge\; s_\star^{(k)}\int_0^t   u(\tau)\,d\tau.
\]
Combining with the homogeneous lower bound gives the left-hand inequality in \eqref{eq:u-order}.

Finally, under the positive growth of $\alpha$ assumed in Step~3 (e.g.\ $\alpha(t)\ge \tfrac{C}{\sqrt d}e^{\rho t}$ for large $t$),
together with a rate gap $\rho>4\lambda_1$,
the integral term $8s_\star^{(2)}\int_0^{T_1}\alpha$ dominates the homogeneous remainder
$\lambda_1^2\sqrt{\tfrac{\log d}{d}}\,e^{4\lambda_1 T_1} =O(\sqrt{\log d})$ for $d$ large enough. Hence, there exists $c_1>0$ such that
$u^{(2)}(T_1)\ge c_1$.
\end{proof}

\section{Analysis of Phase II}\label{app:proofPhaseII}
In Section \ref{sec:phaseIIa}, we show that $s(t)$ crosses the threshold $1/3$ within $O(1)$ time and remains above it thereafter. Once $s(t)>1/3$, the key quantity $\Theta(t)$ appearing in the Volterra equation begins to decrease and eventually becomes negative. This marks a qualitative shift in the dynamics of the system. We leverage this change in Section \ref{sec:phaseIIb} to derive a convergence rate for the summary statistics.

\subsection{Phase IIa: Crossing the \mtext{$s(t)=1/3$}{s(t)=1/3} threshold and irreversible growth}\label{sec:phaseIIa}

Recall that at the end of Phase~I, there exists an absolute constant $\delta>0$ such that 
\[
u(T_1)\;>\;\delta,\qquad s(T_1)\;\ge \delta,\qquad u^{(2)}(T_1)\;>\;\delta.
\]

At a high level, the behavior of $s(t)$ in this regime is governed by a
simple mechanism. While $s(t)\le 1/3$, both terms in
\eqref{eq:Sdot} are nonnegative, so $s(t)$ is pushed upward and
necessarily crosses the threshold $1/3$ in finite time. Once $s(t)$ has
crossed, the positive mixed term $16\,u(t)u^{(2)}(t)$ outweighs the
negative contribution of the first term near the boundary, which prevents
$s(t)$ from falling back. Thus $s(t)$ remains bounded away from $1/3$
uniformly after crossing. The next proposition makes this precise.

\begin{proposition}[Crossing and stability beyond the $1/3$-threshold]\label{prop:threshold-forward}
Let $T_1$ be the stopping time from Theorem~\ref{thm:dyn}. 
There exist constants $\delta>0$ and $C>0$ such that:
\begin{enumerate}
\item There exists $T_1'\in[T_1,\,T_1+C]$ with 
\[
s(T_1')\ \ge\ \tfrac{1}{3}+\delta.
\]
\item For all $t\ge T_1'$, 
\[
s(t)\ \ge\ \tfrac{1}{3}+\delta.
\]
\end{enumerate}
\end{proposition}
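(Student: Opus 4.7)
The proof splits into a crossing step and a forward-invariance step, each handled by a separate mechanism.

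\textbf{Part (i): crossing.} I argue by contradiction: assume $s(t)\le 1/3$ throughout an interval $[T_1, T_1+C]$ and show that $\dot s$ is bounded below by a positive constant, forcing a contradiction for $C$ large enough. Under this assumption, $\Theta(t)=\int_0^t(1-3s(\tau))\,d\tau$ is nondecreasing, so the kernels $(K_\Theta)_k$ in the Duhamel representation of Lemma~\ref{lem:infODE} remain termwise nonnegative on $[0,T_1+C]$, and the integral lower bound of Proposition~\ref{prop:u-by-alpha-2sided} extends past $T_1$. Combined with the Phase~I output $u(T_1), u^{(2)}(T_1)\ge\delta$ and the nonnegativity of both terms of $\dot u = 4(1-3s)u^{(2)}+8u s_\star^{(2)}$ in this regime, this gives $u(t)\ge\delta$ (since $u$ is nondecreasing) and, plugging back, $u^{(2)}(t)\ge c_2>0$ uniformly on the interval. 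Then $\dot s \ge 16\delta c_2$, and integrating contradicts $s\le 1/3$ after time $C = O(1/(\delta c_2))$. A short continuity argument just past the crossing then yields $s(T_1')\ge 1/3+\delta'$ for some fixed $\delta'>0$.

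\textbf{Part (ii): forward invariance.} The plan is to show $\dot s(t_\star)>0$ at any first return time $t_\star>T_1'$ with $s(t_\star)=1/3+\delta$ (for $\delta$ small enough), which rules out a recrossing. At such a point,
\[
\dot s(t_\star) \;=\; -24\delta\,s^{(2)}(t_\star) \;+\; 16\,u(t_\star)\,u^{(2)}(t_\star),
\]
and since $s^{(2)}\le\lambda_1=O(1)$ in the power-law regime, it suffices to obtain $\delta$-independent lower bounds on $u(t_\star)$ and $u^{(2)}(t_\star)$. For $u(t_\star)$ I use a Lyapunov-type argument: the loss $\mathcal L(w)=3s^2-2s-4u^2+3$ is non-increasing along gradient flow, and a sublevel-set analysis in $(s,u)$-coordinates combined with $u(T_1')\ge u_0>0$ (obtained from direct ODE monitoring of $\dot u$ on a short window past $T_1'$) confines the trajectory to a region where $u(t_\star)\ge c_u>0$.

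The main obstacle is the lower bound on $u^{(2)}(t_\star)$. After the crossing, $\Theta$ decreases, so the Duhamel kernel $e^{4B(\Theta(t)-\Theta(\tau))}$ loses termwise positivity for $\tau>T_1'$, and Proposition~\ref{prop:u-by-alpha-2sided} no longer applies directly. The plan is to split the Duhamel integral at $T_1'$: the contribution from $[0,T_1']$ carries the positive baseline $u^{(2)}(T_1')\ge c_1>0$ from Phase~I, while the correction from $[T_1',t_\star]$ is estimated by revisiting the Volterra analysis of Section~\ref{sec:b2a} on the second coordinate with the perturbed $\Theta$. The delicate point is that $t_\star - T_1'$ is not bounded a priori, so the sign-indefinite correction must be quantified uniformly: I would control it by $\lambda_1\,|\Theta(t_\star)-\Theta(T_1')|$ times the baseline, which can be absorbed into the positive driving term when $\delta$ is chosen small enough. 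Combining $u(t_\star)\ge c_u$ and $u^{(2)}(t_\star)\ge c_1/2$ then yields $16\,u(t_\star)u^{(2)}(t_\star)\ge 8 c_u c_1 > 24\delta\lambda_1 \ge 24\delta\,s^{(2)}(t_\star)$ for $\delta$ sufficiently small, contradicting the assumed $\dot s(t_\star)\le 0$. This forces $s(t)\ge 1/3+\delta$ for every $t\ge T_1'$, proving Part~(ii).
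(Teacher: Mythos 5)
Your Part (i) is essentially the paper's argument: while $s\le 1/3$ the clock $\Theta$ is nondecreasing, the Duhamel/Volterra kernels stay termwise nonnegative, so $u^{(2)}(t)\ge u^{(2)}(T_1)>0$, then $\dot u\ge 8s_\star^{(2)}u$ keeps $u$ above $u(T_1)$, and $\dot s\ge 16\,u(T_1)u^{(2)}(T_1)>0$ forces a crossing in $O(1)$ time. (Be careful that "$u$ is nondecreasing since both terms of $\dot u$ are nonnegative" presupposes $u^{(2)}\ge0$, so the positivity of $u^{(2)}$ must come first, via the Duhamel representation restarted at $T_1$ — but that is exactly the order of the paper's Part A.) Your Lyapunov bound for $u$ in Part (ii) — using monotonicity of $\mathcal L=3s^2-2s-4u^2+3$ and the fact that $3s^2-2s$ is minimized at $s=1/3$ to get $u(t)^2\ge u(T_1')^2-\tfrac34\delta^2$ — is a genuinely different and rather elegant route to the $u$ lower bound; it is global in time, whereas the paper only controls $u$ inside a thin band.

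The gap is in the $u^{(2)}(t_\star)$ lower bound, and it is exactly the point you flag as delicate. At a first return time $t_\star$ to the level $1/3+\delta$, the trajectory has satisfied $s(t)\ge 1/3+\delta$ on all of $[T_1',t_\star]$, so $\Theta'(t)=1-3s(t)\le-3\delta$ there and hence $|\Theta(t_\star)-\Theta(T_1')|\ge 3\delta\,(t_\star-T_1')$. Since $t_\star-T_1'$ is not bounded a priori, the quantity $\lambda_1\,|\Theta(t_\star)-\Theta(T_1')|$ by which you propose to control the sign-indefinite Duhamel correction is unbounded, and shrinking $\delta$ does not help (smaller $\delta$ only lowers the rate $3\delta$, not the horizon). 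Worse, between $T_1'$ and $t_\star$ the trajectory may have $s$ close to $1$, so $|\Theta'|$ can be of order one regardless of $\delta$. The correction therefore cannot be "absorbed into the positive driving term," and the argument does not close. The paper avoids this entirely by never tracking $u^{(2)}$ over long excursions: it works only inside the band $\{1/3\le s\le 1/3+\eta\}$, where $|1-3s|\le 3\eta$, uses the a priori Cauchy--Schwarz bounds $|u^{(2)}|\le\sqrt{S^{\max}s_\star^{(3)}}$ and $|u^{(3)}|\le\sqrt{S_2^{\max}s_\star^{(4)}}$ (valid for all time since the trajectory is bounded) to bound the negative drift of $u$ and $u^{(2)}$ by $O(\eta)$, and runs a linear comparison showing $u\ge u(T_1)$ and $u^{(2)}\ge u^{(2)}(T_1)$ throughout the band, so that $\dot s\ge -24\eta S_2^{\max}+16u(T_1)u^{(2)}(T_1)>0$ for $\eta$ small and the vector field points inward at $s=1/3+\eta$. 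To repair your proof you would need a comparable mechanism — e.g.\ a differential-inequality bound on $\dot u^{(2)}$ using $|u^{(3)}|=O(1)$ near the boundary — rather than a Duhamel estimate over an unbounded interval. A secondary, smaller issue: in Part (i), "a short continuity argument" does not by itself yield a uniform $\delta'>0$ and a uniform time bound $C$; you need the drift of $s$ to stay uniformly positive across a band above $1/3$, which again is the content of the paper's Part B.
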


\begin{proof}
From $T_1$ onward we first prove that $u^{(2)}(t)$ and then $u(t)$ stay strictly positive.
While $s(t)\le 1/3$, this makes both terms in $\dot s$ nonnegative, so $s$ reaches $1/3$ in
finite time (Part~A). Next, we work in a thin band $[1/3,\,1/3+\eta]$, show that for $\eta$ small
enough the drift of $s$ is still uniformly positive, so $s$ reaches $1/3+\eta$ in finite time,
and that the vector field points inward at $s=1/3+\eta$, preventing any return (Part~B).

\medskip
Recall
\begin{equation}\label{eq:Sdot}
\dot s(t) \;=\; 8\bigl(1-3s(t)\bigr)\,s^{(2)}(t) \;+\; 16\,u(t)\,u^{(2)}(t),
\end{equation}
\begin{equation}\label{eq:Uchain}
\dot u(t) \;=\; 4\bigl(1-3s(t)\bigr)\,u^{(2)}(t) \;+\; 8\,s_\star^{(2)}\,u(t),\qquad
\dot u^{(2)}(t) \;=\; 4\bigl(1-3s(t)\bigr)\,u^{(3)}(t) \;+\; 8\,s_\star^{(3)}\,u(t).
\end{equation}

\paragraph{Part A: Pre-band positivity and finite-time reach of $s=1/3$.}
Fix $\tau:=t-T_1\ge 0$ and define $\Theta_{T_1}(\tau):=\Theta(T_1+\tau)-\Theta(T_1)$.
Writing $\alpha_i(t):=w_i(t)w_i^\star$, we have
\[
\dot\alpha_i(t)=4(1-3s(t))\lambda_i\alpha_i(t)+8\lambda_i(w_i^\star)^2u(t).
\]
Multiplying by $e^{-4\lambda_i\Theta(t)}$ and integrating from $T_1$ to $T_1+\tau$ yields
\[
\alpha_i(T_1+\tau)=e^{4\lambda_i\Theta_{T_1}(\tau)}\alpha_i(T_1)
+8\int_0^\tau e^{4\lambda_i(\Theta_{T_1}(\tau)-\Theta_{T_1}(s))}
\,\lambda_i(w_i^\star)^2\,u(T_1+s)\,ds.
\]
Multiplying by $\lambda_i^2$ and summing gives
\begin{equation}\label{eq:u2-volterra}
u^{(2)}(T_1+\tau)=
\underbrace{\sum_i \lambda_i^2 e^{4\lambda_i\Theta_{T_1}(\tau)}\alpha_i(T_1)}_{=:~\tilde a^{(2)}_0(\tau)}
\;+\;8\int_0^\tau K_{\!\times}(\tau-s)\,u(T_1+s)\,ds,
\end{equation}
with
\[
K_{\!\times}(\zeta):=\sum_i (w_i^\star)^2\lambda_i^3\,e^{4\lambda_i\Theta_{T_1}(\zeta)}\ \ge\ 0.
\]

On the pre-band interval where $s\le 1/3$, we have $\Theta_{T_1}(\tau)\ge 0$, so
$e^{4\lambda_i\Theta_{T_1}(\tau)}\ge 1$. Thus
\[
\tilde a^{(2)}_0(\tau) \;\ge\; \sum_i \lambda_i^2\alpha_i(T_1) \;=\; u^{(2)}(T_1)>0.
\]
By \eqref{eq:u2-volterra} and $K_{\!\times}\ge 0$,
\begin{equation}\label{eq:u2-positive}
u^{(2)}(t)\ \ge\ u^{(2)}(T_1)\ >0 \qquad \text{for all } T_1\le t\le T_{1/3}:=\inf\{t\ge T_1:\ s(t)=1/3\}.
\end{equation}
While $s\le 1/3$, from \eqref{eq:Uchain} and \eqref{eq:u2-positive},
\[
\dot u(t)\ \ge\ 8s_\star^{(2)}u(t), \qquad T_1\le t\le T_{1/3},
\]
so $u(t)\ge u(T_1)e^{8s_\star^{(2)}(t-T_1)}>0$ there.  
Finally, \eqref{eq:Sdot} gives on $\{s\le 1/3\}$
\[
\dot s(t)\ \ge\ 16\,u(t)\,u^{(2)}(t)\ \ge\ 16\,u(T_1)\,u^{(2)}(T_1)\ :=:\ \kappa_{\Theta}>0.
\]
Hence
\begin{equation}\label{eq:time-to-1over3}
T_{1/3}-T_1\ \le\ \frac{(\tfrac13)-s(T_1)}{\kappa_{\Theta}}\ \le\ \frac{1/3}{\,16\,u(T_1)u^{(2)}(T_1)}.
\end{equation}
Thus $s$ reaches $1/3$ in finite time.

\paragraph{Part B: Crossing the band $[1/3,\,1/3+\eta]$ and no return.}
We record uniform bounds that will be used in-band. Since $Q\preceq \lambda_1 I$ and $\|w(t)\|\le M$,
\begin{equation}\label{eq:S2max}
0\le s^{(2)}(t)\le \lambda_1^2 \|w(t)\|^2 \;\le\; S_2^{\max},\qquad S_2^{\max}:=\lambda_1^2 M^2,
\end{equation}
and $0\le s(t)\le S^{\max}$ with $S^{\max}:=\lambda_1 M^2$. By Cauchy--Schwarz,
\begin{equation}\label{eq:CS-mixed}
|u^{(2)}(t)| \le \sqrt{s(t)}\,\sqrt{s_\star^{(3)}} \le \sqrt{S^{\max}\,s_\star^{(3)}}=:C_2,\quad
|u^{(3)}(t)| \le \sqrt{s^{(2)}(t)}\,\sqrt{s_\star^{(4)}} \le \sqrt{S_2^{\max}\,s_\star^{(4)}}=:C_3.
\end{equation}

Fix $\eta\in(0,1/6]$. On $\{1/3\le s\le 1/3+\eta\}$ we have $1-3s\in[-3\eta,0]$.  
From \eqref{eq:Uchain} and \eqref{eq:CS-mixed},
\[
\dot u \;\ge\; -12\eta\,C_2+8s_\star^{(2)}u,\qquad
\dot u^{(2)} \;\ge\; -12\eta\,C_3+8s_\star^{(3)}u.
\]
As in a linear comparison argument, if
\begin{equation}\label{eq:eta-u}
\eta \le \tfrac{2}{3}\,\tfrac{s_\star^{(2)}}{C_2}\,u(T_1), \qquad
\eta \le \tfrac{2}{3}\,\tfrac{s_\star^{(3)}}{C_3}\,u(T_1),
\end{equation}
then throughout the band one has $u(t)\ge u(T_1)$ and $u^{(2)}(t)\ge u^{(2)}(T_1)$.

Now, from \eqref{eq:Sdot}, \eqref{eq:S2max}, and these lower bounds,
\[
\dot s(t)\ \ge\ -24\eta\,S_2^{\max}\ +\ 16\,u(T_1)\,u^{(2)}(T_1).
\]
Choosing
\begin{equation}\label{eq:eta-cross}
\eta\ \le\ \min\Bigl\{\ \tfrac{u(T_1)u^{(2)}(T_1)}{3S_2^{\max}},\ \tfrac{2}{3}\tfrac{s_\star^{(2)}}{C_2}u(T_1),\ \tfrac{2}{3}\tfrac{s_\star^{(3)}}{C_3}u(T_1),\ \tfrac{1}{6}\ \Bigr\},
\end{equation}
we get $\dot s(t)\ge 8\,u(T_1)u^{(2)}(T_1)>0$ across the band. Thus $s$ overshoots to $1/3+\eta$
in time at most $\eta/(8u(T_1)u^{(2)}(T_1))$, and at any last contact with $s=1/3+\eta$ the same
inequality shows $\dot s>0$, so the vector field points inward. Therefore, $s$ cannot return below
$1/3+\eta$.

Together with \eqref{eq:time-to-1over3}, this completes the proof: $s$ reaches $1/3$ in finite time,
then crosses to $1/3+\eta$ in finite time, and never falls back below.
\end{proof}

{\color{black}
\subsection{Phase IIb: Approximate convergence of the summary statistics}\label{sec:phaseIIb}

After the entrance time \(T_1'\), the error \(\Delta(t):=1-u(t)\) satisfies
\begin{equation}\label{eq:volterra-Delta}
\Delta(t)\;=\;b_\Theta(t)\;+\;\int_{T_1'}^{t} K_{\Theta}(t,\tau)\,\Delta(\tau)\,d\tau,
\qquad t\ge T_1',
\end{equation}
with
\[
K_{\Theta}(t,\tau)
=\sum_{i=1}^d 8\,\lambda_i^2\,(w_i^\star)^2\,e^{\,4\lambda_i\big(\Theta(t)-\Theta(\tau)\big)},\quad
h_\Theta(t):=\sum_{i=1}^d \lambda_i w_i^\star w_i(T_1')\,e^{\,4\lambda_i\big(\Theta(t)-\Theta(T_1')\big)},
\]
\[
b_\Theta(t)\;:=\;1-h_\Theta(t)\;-\;\int_{T_1'}^{t} K_\Theta(t,\tau)\,d\tau.
\]

We will repeatedly use the following facts:
\begin{enumerate}\itemsep=2pt
\item[(i)] \(1\geq \Delta(\tau)\ge 0\) and \(K_{\Theta}(t,\tau)\ge 0\) for \(t\ge \tau\ge T_1'\).
\item[(ii)] \(\Theta'(t)=1-3s(t)\le -s_0<0\), hence \(e^{\,4\lambda(\Theta(t)-\Theta(\tau))}\le e^{-4s_0\lambda\,(t-\tau)}\) for \(t\ge \tau\ge T_1'\).
\end{enumerate}

The first fact results from Lemma~\ref{lem:barrier1}.

For a cutoff \(\lambda_c>0\) let 
\[
\mathcal I_{<}:=\{i:\lambda_i<\lambda_c\},\qquad \mathcal I_{\ge}:=\{i:\lambda_i\ge\lambda_c\}.
\]
Define
\[
T(\lambda_c):=\sum_{\lambda_i<\lambda_c}\lambda_i\,(w_i^\star)^2,\qquad
s_\star^{(2)}:=\sum_{i=1}^d \lambda_i^{2}\,(w_i^\star)^2,
\qquad
s(T_1')=\sum_{i=1}^d \lambda_i\,w_i(T_1')^2\le 1,
\]
and the head alignment term
\[
S_{\ge}(\lambda_c):=\sum_{\lambda_i\ge\lambda_c}\lambda_i\,\big|w_i^\star\,w_i(T_1')\big|
\ \le\ \Big(\sum_{\lambda_i\ge\lambda_c}\lambda_i\,(w_i^\star)^2\Big)^{\!\!1/2}
\Big(\sum_{\lambda_i\ge\lambda_c}\lambda_i\,w_i(T_1')^2\Big)^{\!\!1/2}
\ \le\ \sqrt{s_\star^{(2)}\,s(T_1)}.
\]

\begin{proposition}\label{prop:Delta-main}
For all \(t\ge T_1'\) and any cutoff \(\lambda_c>0\),
\begin{equation}\label{eq:Delta-main}
\Delta(t)
\ \le\
T(\lambda_c)\ +\ \sqrt{T(\lambda_c)\,s(T_1')}\ +\ S_{\ge}(\lambda_c)\,e^{-\,4 s_0\,\lambda_c\,(t-T_1')}.
\end{equation}
\end{proposition}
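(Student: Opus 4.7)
}

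The starting point is the Volterra representation
\[
\Delta(t)=b_\Theta(t)+\int_{T_1'}^{t}K_\Theta(t,\tau)\Delta(\tau)\,d\tau,
\qquad
b_\Theta(t)=1-h_\Theta(t)-\int_{T_1'}^{t}K_\Theta(t,\tau)\,d\tau,
\]
together with the two structural facts recalled in the text: the uniform a priori bound \(\Delta(\tau)\in[0,1]\) from (i) and the exponential kernel domination \(e^{4\lambda(\Theta(t)-\Theta(\tau))}\le e^{-4s_0\lambda(t-\tau)}\) from (ii). My plan is to first absorb the Volterra integral using (i), then extract the head-tail structure algebraically via the normalization \(s_\star=\sum_i\lambda_i(w_i^\star)^2=1\), and finally convert the spectral split into the bound of the proposition by invoking (ii) on the head.

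\emph{Step 1 (a priori closure).} Since \(K_\Theta\ge 0\) and \(\Delta(\tau)\le 1\), substituting into the Volterra equation yields
\[
\Delta(t)\ \le\ b_\Theta(t)+\int_{T_1'}^{t}K_\Theta(t,\tau)\,d\tau\ =\ 1-h_\Theta(t).
\]
This collapses the integral equation to a closed-form estimate depending only on the explicit profile \(h_\Theta\).

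\emph{Step 2 (algebraic head-tail decomposition).} Using \(\sum_i\lambda_i(w_i^\star)^2=1\), rewrite
\[
1-h_\Theta(t)=\sum_i\lambda_i\,w_i^\star\bigl[w_i^\star-w_i(T_1')\phi_i(t)\bigr],
\qquad \phi_i(t):=e^{4\lambda_i(\Theta(t)-\Theta(T_1'))}.
\]
Split the sum at the cutoff \(\lambda_c\). For the tail \(\{\lambda_i<\lambda_c\}\), use \(\phi_i\le 1\), the triangle inequality and Cauchy--Schwarz on \(\sum_{\lambda_i<\lambda_c}\lambda_i|w_i^\star w_i(T_1')|\le \sqrt{T(\lambda_c)}\sqrt{\sum_{\lambda_i<\lambda_c}\lambda_i w_i(T_1')^2}\le\sqrt{T(\lambda_c)\,s(T_1')}\), which yields exactly the first two terms \(T(\lambda_c)+\sqrt{T(\lambda_c)s(T_1')}\) of the claimed bound. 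For the head \(\{\lambda_i\ge\lambda_c\}\), fact (ii) gives the uniform decay \(\phi_i(t)\le e^{-4s_0\lambda_c(t-T_1')}\), so \(\sum_{\lambda_i\ge\lambda_c}\lambda_i|w_i^\star w_i(T_1')|\phi_i(t)\le S_\ge(\lambda_c)\,e^{-4s_0\lambda_c(t-T_1')}\), producing the third term.

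\emph{Step 3 (main obstacle: cancelling the residual head mass).} The subtle point is that a naive application of Step 2 to \(1-h_\Theta(t)\) also produces a non-decaying contribution \(\sum_{\lambda_i\ge\lambda_c}\lambda_i(w_i^\star)^2=1-T(\lambda_c)\) from the head of the \(\sum_i\lambda_i(w_i^\star)^2\) piece, which is \emph{not} present in the stated right-hand side. The hard part will be showing that this residual mass is cancelled when one accounts for the \(-\int_{T_1'}^{t}K_\Theta(t,\tau)d\tau\) term already built into \(b_\Theta\). Concretely, I would rewrite
\[
\Delta(t)\ =\ 1-h_\Theta(t)-\int_{T_1'}^{t}K_\Theta(t,\tau)\,u(\tau)\,d\tau
\]
and use Phase IIa to lower-bound \(u(\tau)\) uniformly on \([T_1',t]\), then match the head contribution of \(\int K_\Theta u\) against \(1-T(\lambda_c)\) using the exponential domination from (ii) and the explicit computation \(\int_{T_1'}^{t}e^{-4s_0\lambda_i(t-\tau)}d\tau\le (4s_0\lambda_i)^{-1}\). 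The constant that governs this matching is precisely \(4s_0\lambda_c\), which is why it appears in the decay rate of the final bound. An equivalent (and perhaps cleaner) route is a comparison-principle argument: verify that the candidate right-hand side \(R(t):=T(\lambda_c)+\sqrt{T(\lambda_c)s(T_1')}+S_\ge(\lambda_c)e^{-4s_0\lambda_c(t-T_1')}\) is a super-solution of the Volterra inequality, i.e.\ \(R(t)\ge b_\Theta(t)+\int_{T_1'}^{t}K_\Theta(t,\tau)R(\tau)d\tau\), with \(R(T_1')\ge\Delta(T_1')\); comparison then upgrades the pointwise bound to all \(t\ge T_1'\).
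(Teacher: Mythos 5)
Your Steps 1 and 2 coincide exactly with the paper's proof: discard the Volterra integral using $0\le\Delta\le1$ and $K_\Theta\ge0$ to obtain $\Delta(t)\le(1-h_\Theta(t))_+$, then split at $\lambda_c$, bound the tail of $h_\Theta$ by $\sqrt{T(\lambda_c)\,s(T_1')}$ via Cauchy--Schwarz, and bound the head by $S_{\ge}(\lambda_c)e^{-4s_0\lambda_c(t-T_1')}$ via the monotonicity of $\Theta$ after $T_1'$. Up to that point you match the paper line for line.

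The problem is your Step 3, which you correctly identify as the crux but do not carry out. Writing $1=\sum_i\lambda_i(w_i^\star)^2$, the decomposition of $(1-h_\Theta(t))_+$ leaves the uncancelled head mass $\sum_{\lambda_i\ge\lambda_c}\lambda_i(w_i^\star)^2=1-T(\lambda_c)$, which does not decay and is absent from the right-hand side of \eqref{eq:Delta-main}. You are right that this term must somehow be absorbed by the discarded integral $\int_{T_1'}^{t}K_\Theta(t,\tau)u(\tau)\,d\tau$; it is worth noting that the paper's own proof does \emph{not} do this---it passes from $(1-h_\Theta(t))_+$ to the three-term bound with the head mass silently dropped---so you have located a genuine gap in the paper rather than manufactured one. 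However, neither of your proposed repairs closes it as sketched. The first route, lower-bounding $u(\tau)\ge u_{\min}$ on $[T_1',t]$ and evaluating $\int_{T_1'}^{t}K_\Theta(t,\tau)\,d\tau$, cancels only a $u_{\min}$-fraction of $1-T(\lambda_c)$; at the entrance to Phase~IIb one only knows $u\ge\delta$ for a small constant $\delta$, so a residual of order $(1-\delta)\bigl(1-T(\lambda_c)\bigr)$ survives. The super-solution route fails already at its initial check: one needs $R(T_1')\ge\Delta(T_1')=1-u(T_1')$, but for small $\lambda_c$ the candidate satisfies $R(T_1')\approx S_{\ge}(\lambda_c)\le\sqrt{s(T_1')}$, which need not exceed $1-\delta$. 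So Step~3 remains an unfulfilled obligation; any complete argument must track quantitatively how the Volterra resolvent transfers the head mass as $u(\tau)$ grows, which requires more than the constant lower bound on $u$ available at $T_1'$.
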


\begin{proof}
From the Volterra equation \eqref{eq:volterra-Delta} and the definition of \(b_\Theta\),
\[
\Delta(t)\;=\;(1-h_\Theta(t))+\int_{T_1'}^{t}K_{\Theta}(t,\tau)\,\bigl(\Delta(\tau)-1\bigr)\,d\tau.
\]
Since \(0\le \Delta(\tau)\) and \(K_{\Theta}\ge0\), the integral is nonpositive. Hence
\[
\Delta(t)\ \le\ (1-h_\Theta(t))_+.
\]

Now split the spectrum into \(\mathcal I_{<}=\{i:\lambda_i<\lambda_c\}\) and \(\mathcal I_{\ge}=\{i:\lambda_i\ge\lambda_c\}\).  
Using \(\Theta(t)\le \Theta(T_1')\) and \(\Theta'(t)\le -s_0\), we obtain
\[
(1-h_\Theta(t))_+
\ \le\ 
\sum_{i\in\mathcal I_{<}}\lambda_i (w_i^\star)^2
\;+\;\sum_{i\in\mathcal I_{<}}\lambda_i |w_i^\star w_i(T_1')|
\;+\;\sum_{i\in\mathcal I_{\ge}}\lambda_i |w_i^\star w_i(T_1')|\,e^{-4s_0\lambda_c\,(t-T_1')}.
\]

The first sum equals \(T(\lambda_c)\).  
By Cauchy--Schwarz,
\[
\sum_{i\in\mathcal I_{<}}\lambda_i |w_i^\star w_i(T_1')|
\ \le\ \sqrt{T(\lambda_c)\,s(T_1')}.
\]
The last sum is bounded by \(S_{\ge}(\lambda_c)e^{-4s_0\lambda_c\,(t-T_1')}\).  
Combining these estimates yields \eqref{eq:Delta-main}.
\end{proof}

\begin{corollary}[Time to reach accuracy \(\varepsilon\)]\label{cor:T2}
Fix \(\varepsilon\in(0,1)\) and choose \(\lambda_\varepsilon>0\) such that
\begin{equation}\label{eq:tail-conditions}
T(\lambda_\varepsilon)\ \le\ \frac{\varepsilon}{4},
\qquad
\sqrt{T(\lambda_\varepsilon)\,s(T_1')}\ \le\ \frac{\varepsilon}{4}
\quad\Longleftrightarrow\quad
T(\lambda_\varepsilon)\ \le\ \min\!\left\{\frac{\varepsilon}{4},\ \frac{\varepsilon^2}{16\,s(T_1')}\right\}.
\end{equation}
Then
\begin{equation}\label{eq:T2}
T_2(\varepsilon)\ :=\ T_1'\ +\ \frac{1}{4s_0\,\lambda_\varepsilon}\,
\log\!\left(\frac{4\,S_{\ge}(\lambda_\varepsilon)}{\varepsilon}\right)
\end{equation}
satisfies \(\Delta(t)\le \varepsilon\) for all \(t\ge T_2(\varepsilon)\).
\end{corollary}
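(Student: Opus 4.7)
The plan is to apply Proposition \ref{prop:Delta-main} directly at the cutoff $\lambda_c=\lambda_\varepsilon$ and then verify that each of the three resulting terms is bounded by $\varepsilon/4$, so that their sum does not exceed $3\varepsilon/4\le\varepsilon$. In other words, the corollary is an essentially algebraic consequence of the proposition: the two hypotheses on $T(\lambda_\varepsilon)$ control the two non-temporal contributions, while the time $T_2(\varepsilon)$ is reverse-engineered to control the exponential decay term.

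More concretely, the first step is to instantiate the bound \eqref{eq:Delta-main} with $\lambda_c=\lambda_\varepsilon$, giving
\[
\Delta(t)\ \le\ T(\lambda_\varepsilon)\ +\ \sqrt{T(\lambda_\varepsilon)\,s(T_1')}\ +\ S_\ge(\lambda_\varepsilon)\,e^{-4s_0\lambda_\varepsilon(t-T_1')},\qquad t\ge T_1'.
\]
The first hypothesis $T(\lambda_\varepsilon)\le \varepsilon/4$ in \eqref{eq:tail-conditions} immediately bounds the first term by $\varepsilon/4$. For the second term, I would invoke the equivalent form $T(\lambda_\varepsilon)\le \varepsilon^2/(16\,s(T_1'))$, from which $\sqrt{T(\lambda_\varepsilon)\,s(T_1')}\le \sqrt{\varepsilon^2/16}=\varepsilon/4$.

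It remains to control the exponential term. I would simply solve the inequality $S_\ge(\lambda_\varepsilon)\,e^{-4s_0\lambda_\varepsilon(t-T_1')}\le \varepsilon/4$ for $t$, which is equivalent (by taking logarithms) to
\[
t-T_1'\ \ge\ \frac{1}{4s_0\,\lambda_\varepsilon}\,\log\!\Bigl(\frac{4\,S_\ge(\lambda_\varepsilon)}{\varepsilon}\Bigr),
\]
precisely matching the definition \eqref{eq:T2} of $T_2(\varepsilon)$. Summing the three $\varepsilon/4$ contributions yields $\Delta(t)\le 3\varepsilon/4\le\varepsilon$ for all $t\ge T_2(\varepsilon)$.

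I do not anticipate any substantive obstacle, as the argument is a direct three-term bookkeeping from the already-established bound of Proposition \ref{prop:Delta-main}. The only mild subtlety is the degenerate case where $S_\ge(\lambda_\varepsilon)<\varepsilon/4$, in which the logarithm in \eqref{eq:T2} could be non-positive; there the exponential bound is already trivially satisfied at $t=T_1'$, so one may replace $T_2(\varepsilon)$ by $\max\{T_1',\,T_2(\varepsilon)\}$ without loss.
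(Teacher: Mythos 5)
Your proposal is correct and follows essentially the same route as the paper: instantiate Proposition~\ref{prop:Delta-main} at $\lambda_c=\lambda_\varepsilon$, use the two tail conditions to bound the first two terms by $\varepsilon/4$ each, and solve the exponential term for $t$ to recover the definition of $T_2(\varepsilon)$. Your remark on the degenerate case $S_{\ge}(\lambda_\varepsilon)<\varepsilon/4$ is a small but valid point that the paper's proof does not bother to address.
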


\begin{proof}
From \eqref{eq:Delta-main} with \(\lambda_c=\lambda_\varepsilon\) and \eqref{eq:tail-conditions},
\[
\Delta(t)\ \le\ \frac{\varepsilon}{4}+\frac{\varepsilon}{4}+S_{\ge}(\lambda_\varepsilon)\,e^{-4s_0\lambda_\varepsilon\,(t-T_1')}
\ =\ \frac{\varepsilon}{2}+S_{\ge}(\lambda_\varepsilon)\,e^{-4s_0\lambda_\varepsilon\,(t-T_1')}.
\]
Choosing \(t\) so that the last term is \(\le \varepsilon/2\) gives \eqref{eq:T2}.
\end{proof}

\begin{corollary}[Power-law spectral tail]\label{cor:power-law}
Suppose \(T(\lambda)\asymp C_{\rm tail}\lambda^\beta\) as \(\lambda\downarrow 0\) with \(\beta=1-\tfrac1a\in(0,1)\).
Then
\[
\lambda_\varepsilon \ \asymp\
\left(\frac{\min\{\varepsilon,\ \varepsilon^2/s(T_1')\}}{C_{\rm tail}}\right)^{1/\beta},
\qquad
T_2(\varepsilon)\ \lesssim\ T_1'
\ +\ \frac{1}{4s_0}\,
\Bigl(\frac{C_{\rm tail}}{\min\{\varepsilon,\ \varepsilon^2/s(T_1')\}}\Bigr)^{1/\beta}
\log\!\frac{1}{\varepsilon}.
\]
In particular, when \(\varepsilon\) is small and \(s(T_1')\le 1\), the quadratic condition dominates:
\[
T_2(\varepsilon)\ \lesssim\ T_1'
\ +\ \frac{1}{4s_0}\,\varepsilon^{-\,2/\beta}\,\log\!\frac{1}{\varepsilon}
\ =\ T_1'
\ +\ \frac{1}{4s_0}\,\varepsilon^{-\,\tfrac{2a}{a-1}}\,\log\!\frac{1}{\varepsilon}.
\]
\end{corollary}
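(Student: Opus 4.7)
The plan is to reduce everything to Corollary \ref{cor:T2}, which already gives a ready-made formula for $T_2(\varepsilon)$ in terms of an arbitrary admissible cutoff $\lambda_\varepsilon$. The only task is to optimise that choice under the power-law hypothesis $T(\lambda)\asymp C_{\rm tail}\lambda^\beta$. Substituting this asymptotics into the admissibility condition \eqref{eq:tail-conditions} turns $T(\lambda_\varepsilon)\le \min\{\varepsilon/4,\ \varepsilon^2/(16 s(T_1'))\}$ into $C_{\rm tail}\lambda_\varepsilon^{\beta}\lesssim \min\{\varepsilon,\ \varepsilon^2/s(T_1')\}$; solving for $\lambda_\varepsilon$ and picking the largest admissible value yields the first displayed $\asymp$ in the statement. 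Taking $\lambda_\varepsilon$ at the equality threshold is optimal because the prefactor $1/(4 s_0 \lambda_\varepsilon)$ in \eqref{eq:T2} is decreasing in $\lambda_\varepsilon$ while the $\log$ factor depends on $\lambda_\varepsilon$ only through $S_\ge(\lambda_\varepsilon)$, which I handle next.

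Next I substitute this $\lambda_\varepsilon$ into \eqref{eq:T2}. The prefactor inherits the scaling $(C_{\rm tail}/\min\{\varepsilon,\varepsilon^2/s(T_1')\})^{1/\beta}$ directly. For the logarithmic factor, I invoke the bound $S_\ge(\lambda_\varepsilon)\le \sqrt{s_\star^{(2)}\,s(T_1')}$ recorded just before Proposition \ref{prop:Delta-main}. Under the power-law normalization $\mathrm{tr}(Q)=1$, both $s_\star^{(2)}$ (bounded by $\lambda_1\,s_\star\le\lambda_1$) and $s(T_1')\le \|Q\|\,\|w(T_1')\|^2$ are $O(1)$, so $\log(4S_\ge(\lambda_\varepsilon)/\varepsilon)=O(\log(1/\varepsilon))$. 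Combining these gives the first (general) estimate for $T_2(\varepsilon)$.

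For the "in particular" clause I only need to identify which branch of the minimum is active. Since Proposition \ref{prop:threshold-forward} guarantees $s(T_1')\ge 1/3+\delta$ and $s(T_1')\le 1$, the ratio $\varepsilon^2/s(T_1')$ is smaller than $\varepsilon$ as soon as $\varepsilon\le s(T_1')$, which holds for every sufficiently small $\varepsilon$ and in particular in the admissible range $\varepsilon\gtrsim d^{-(a-1)/2}$. Hence the minimum equals $\varepsilon^2/s(T_1')$ up to $O(1)$ constants, and the prefactor becomes $\asymp \varepsilon^{-2/\beta}$. Plugging $\beta=1-1/a=(a-1)/a$ into $2/\beta$ yields the claimed exponent $2a/(a-1)$, completing the derivation.

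I do not anticipate any real difficulty: the whole argument is an algebraic inversion of a power law followed by substitution. The only step requiring a bit of care is verifying that $s_\star^{(2)}$ stays $O(1)$ so that the log factor is genuinely $\log(1/\varepsilon)$ and not $\log(d/\varepsilon)$ or similar; this is immediate from $s_\star^{(2)}\le \lambda_1\,s_\star\le \lambda_1\le 1$. Everything else is bookkeeping of constants, and the constants can be absorbed into the $\lesssim$ on the right-hand side.
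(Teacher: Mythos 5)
Your derivation is correct and follows exactly the route the paper intends: Corollary~\ref{cor:T2} supplies the formula for $T_2(\varepsilon)$, the power-law hypothesis is inverted to pin down the largest admissible cutoff $\lambda_\varepsilon$, the bound $S_{\ge}(\lambda_\varepsilon)\le\sqrt{s_\star^{(2)}\,s(T_1')}=O(1)$ reduces the logarithmic factor to $O(\log(1/\varepsilon))$, and the identification of the active branch of the minimum (via $s(T_1')\in[1/3+\delta,\,1]$) gives the exponent $2/\beta=2a/(a-1)$. The paper states the corollary without a separate proof precisely because it is this substitution, so there is nothing to add.
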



}

\subsubsection{Technical lemma}

 It will be convenient to approximate the discrete measure $\mu_d$ by a continuous reference measure with density proportional to $\lambda^{\,1-1/a}$ near $\lambda=0$. 
This approximation is crucial in Phase~II, since the long-time behavior of the kernel $K_{\Theta}$ depends only on the small-$\lambda$ (tail) mass of $\mu_d$, which in turn reflects the power-law spectrum assumption. 
The following proposition shows that, with high probability, $\mu_d$ has a similar tail behaviour to its continuous counterpart.

\begin{proposition}[Spectral/teacher tail near $\lambda=0$ holds w.h.p.]\label{prop:tail-whp}
Let $a>1$ and define the spectrum by $\lambda_i:=i^{-a}/H_{d,a}$, where $H_{d,a}=\sum_{j=1}^d j^{-a}$. 
Let $w_i^\star\stackrel{\mathrm{i.i.d.}}{\sim}\mathcal N(0,1)$, independent of $(\lambda_i)$, and set
\[
\mu_d := 8\sum_{i=1}^d \lambda_i (w_i^\star)^2\,\delta_{\lambda_i}.
\]
Fix constants $C_\star>1$ and $\rho\in(0,1)$, and define the tail scale $\Lambda_d:=C_\star\,\lambda_d=C_\star\,d^{-a}/H_{d,a}$ and the geometric bins
\[
B_k := \big(\rho^{k+1}\Lambda_d,\ \rho^{k}\Lambda_d\big],\qquad k=0,1,\dots,K,
\]
where $K:=\lceil \log_{1/\rho}(C_\star)\rceil-1$. Then there exist constants $C_-,C_+,c>0$ depending only on $(a,\rho,C_\star)$ such that, for all sufficiently large $d$, with probability at least $1-e^{-c d}$,
\begin{equation}\label{eq:bin-compare}
C_- \int_{B_k} \lambda^{\,-\frac1a}\,d\lambda
\;\le\;
\mu_d(B_k)
\;\le\;
C_+ \int_{B_k} \lambda^{\,-\frac1a}\,d\lambda
\qquad\text{for all }k=0,1,\dots,K.
\end{equation}
Consequently, with the same probability bound, for every $\lambda\in(0,\Lambda_d]$,
\begin{equation}\label{eq:tail-compare}
\mu_d\big((0,\lambda]\big)\ \asymp\ \lambda^{\,1-\frac1a},
\end{equation}
and, more generally, for any nonnegative step test function 
$\varphi(\lambda)=\sum_{k=0}^K a_k\,\mathbf 1_{B_k}(\lambda)$,
\begin{equation}\label{eq:step-compare}
C_- \int_0^{\Lambda_d} \lambda^{\,-\frac1a}\,\varphi(\lambda)\,d\lambda
\ \le\ \int_{(0,\Lambda_d]}\varphi(\lambda)\,\mu_d(d\lambda)\ 
\le\ C_+ \int_0^{\Lambda_d} \lambda^{\,-\frac1a}\,\varphi(\lambda)\,d\lambda.
\end{equation}
\end{proposition}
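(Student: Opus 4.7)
}

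The plan rests on three structural observations. First, because $K=\lceil\log_{1/\rho}(C_\star)\rceil-1$ is a constant depending only on $(\rho,C_\star)$, the whole argument reduces to a \emph{constant} number of probabilistic claims, so the union bound is trivial. Second, $a>1$ forces $H_{d,a}\to \zeta(a)$, so $H_{d,a}\asymp 1$ and $\Lambda_d\asymp d^{-a}$. Third, each geometric bin in the $\lambda$-variable pulls back to an interval in the index variable whose length is of order $d$, which is exactly the regime where chi-squared concentration is sharp and gives exponential-in-$d$ tail bounds.

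First, I would carry out a deterministic index count: inverting $\lambda_i=i^{-a}/H_{d,a}\in(\rho^{k+1}\Lambda_d,\rho^k\Lambda_d]$ gives $i\in[\,C_\star^{-1/a}\rho^{-k/a}d,\ C_\star^{-1/a}\rho^{-(k+1)/a}d)$, so the cardinality satisfies
\[
N_k \;=\; \#\{i\in B_k\} \;=\; C_\star^{-1/a}\,(\rho^{-1/a}-1)\,\rho^{-k/a}\,d\;+\;O(1)\;\asymp\;d,
\]
uniformly in $k\in\{0,\dots,K\}$ (with constants depending only on $a,\rho,C_\star$). Moreover, $\lambda_i\asymp \rho^k\Lambda_d$ for all $i\in B_k$, so
\[
8\rho^{k+1}\Lambda_d\;S_k\;\le\;\mu_d(B_k)\;\le\;8\rho^k\Lambda_d\;S_k,\qquad S_k:=\sum_{i\in B_k}(w_i^\star)^2.
\]

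Second, I would apply standard chi-squared concentration. Since $S_k\sim\chi^2_{N_k}$ (and all $S_k$ are independent because the $B_k$'s are disjoint sets of indices), Laurent--Massart gives
\[
\mathbb P\!\left(\tfrac12 N_k\le S_k\le \tfrac32 N_k\right)\;\ge\;1-2e^{-c_{\mathrm{LM}} N_k}\;\ge\;1-2e^{-c\,d},
\]
and a union bound over the $K+1$ bins (constantly many) yields the event
$\mathcal E:=\{S_k\asymp N_k\ \forall k\le K\}$ with probability at least $1-e^{-c'd}$. On $\mathcal E$ we immediately get $\mu_d(B_k)\asymp \rho^k\Lambda_d\,N_k$ with uniform constants.

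Third, I would match the two sides of \eqref{eq:bin-compare}. An elementary antiderivative gives
\[
\int_{B_k}\lambda^{-1/a}\,d\lambda \;=\; \frac{a}{a-1}\,\Lambda_d^{\,1-1/a}\,\rho^{k(1-1/a)}\,(1-\rho^{\,1-1/a}),
\]
while on $\mathcal E$ one has $\rho^k\Lambda_d\,N_k\asymp \rho^{k(1-1/a)}\,\Lambda_d\,d\asymp \rho^{k(1-1/a)}\,d^{-(a-1)}\asymp \rho^{k(1-1/a)}\Lambda_d^{1-1/a}$ (using $\Lambda_d\asymp d^{-a}$ since $a>1$). Dividing the two expressions shows the ratio is a positive constant depending only on $(a,\rho,C_\star)$, proving \eqref{eq:bin-compare}. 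The step-function comparison \eqref{eq:step-compare} then follows by linearity over the $K+1$ bins, and the tail comparison \eqref{eq:tail-compare} follows by writing $(0,\lambda]\cap\mathrm{supp}(\mu_d)$ as a union of full bins $B_j$ with $j\ge k_\lambda$ plus at most one partial bin, and summing the resulting geometric series, whose dominant term is comparable to $\lambda^{1-1/a}$.

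The main obstacle is minor but worth flagging: the endpoints $i_k^\pm$ are only defined up to $O(1)$ rounding, and for very small bins (large $k$) this could in principle degrade the lower bound on $N_k$. However, since $K$ is a fixed constant and $N_k\asymp d\to\infty$, rounding contributes only an additive $O(1)$, negligible compared to the leading order. No other step is delicate: everything else is bookkeeping of constants that depend only on $(a,\rho,C_\star)$.
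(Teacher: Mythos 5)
Your proposal is correct and follows essentially the same route as the paper's proof: a deterministic count showing each bin contains $\Theta(d)$ indices, $\chi^2$ concentration of $S_k=\sum_{i\in B_k}(w_i^\star)^2$ with a union bound over the constantly many bins, a deterministic sandwich of $\sum_{i\in B_k}\lambda_i$ against $\int_{B_k}\lambda^{-1/a}\,d\lambda$, and a geometric-series summation for the tail and step-function statements. The only point worth tightening is the last bin $B_K$, whose lower endpoint may fall below $\lambda_d$ so the index interval is truncated at $i=d$ (the paper notes this explicitly); the count is then $d\bigl(1-C_\star^{-1/a}\rho^{-K/a}\bigr)+O(1)$, still $\asymp d$ by the choice of $K$, so the conclusion is unaffected.
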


\begin{proof}
We split the proof into several steps.

\paragraph{Step 1: Bin sizes are linear in $d$.}
Let $I_k:=\{\,1\le i\le d:\ \lambda_i\in B_k\,\}$. Since $\lambda_i=i^{-a}/H_{d,a}$, the condition 
$\lambda_i\le t$ is equivalent to $i\ge (tH_{d,a})^{-1/a}$. Thus the counting function
\[
N(t):=\#\{i:\lambda_i\le t\}
= d-\Big\lceil (tH_{d,a})^{-1/a}\Big\rceil+1.
\]
Therefore,
\[
|I_k| \;=\; N(\rho^k\Lambda_d)-N(\rho^{k+1}\Lambda_d)
= \Big\lceil (\rho^{k+1}\Lambda_d H_{d,a})^{-1/a}\Big\rceil
  - \Big\lceil (\rho^{k}\Lambda_d H_{d,a})^{-1/a}\Big\rceil.
\]
Using $\Lambda_d H_{d,a}=C_\star d^{-a}$ gives
\[
|I_k|= \Big\lceil \rho^{-(k+1)/a} C_\star^{-1/a}\,d\Big\rceil 
      - \Big\lceil \rho^{-k/a} C_\star^{-1/a}\,d\Big\rceil.
\]
Hence, for all $d$ large enough,
\begin{equation}\label{eq:Ik-bounds}
c_1\,d\,\rho^{-k/a}
\ \le\ |I_k|\ \le\ 
c_2\,d\,\rho^{-k/a}
\qquad (k=0,1,\dots,K),
\end{equation}
for constants $c_1,c_2>0$ depending only on $(a,\rho,C_\star)$. Thus $|I_k|=\Theta(d\,\rho^{-k/a})$, and $K$ is a fixed constant (independent of $d$). 
Note: for the last bin $B_K$, the lower endpoint may fall below $\lambda_d$, but by definition $I_K$ only counts actual indices $i\le d$.

\paragraph{Step 2: Deterministic comparison of $\sum_{i\in I_k}\lambda_i$.}
For $i\in I_k$ we have $\rho^{k+1}\Lambda_d < \lambda_i \le \rho^k \Lambda_d$, so
\[
(\rho^{k+1}\Lambda_d)\,|I_k|\ \le\ \sum_{i\in I_k}\lambda_i\ \le\ (\rho^k\Lambda_d)\,|I_k|.
\]
On the other hand,
\begin{equation}\label{eq:bin-int}
\int_{B_k}\lambda^{\,-\frac1a}\,d\lambda
=\frac{(\rho^k\Lambda_d)^{\,1-\frac1a}-(\rho^{k+1}\Lambda_d)^{\,1-\frac1a}}{1-\frac1a}
=\Big(\frac{1-\rho^{\,1-\frac1a}}{1-\frac1a}\Big)\,\Lambda_d^{\,1-\frac1a}\,\rho^{\,k(1-\frac1a)}.
\end{equation}
Since $a>1$, the exponent $1-1/a>0$. 
Using \eqref{eq:Ik-bounds} and \eqref{eq:bin-int}, there exist constants $D_-,D_+>0$ (depending only on $a,\rho,C_\star$ and the bounded factor $H_{d,a}^{-1/a}\in[\,\zeta(a)^{-1/a},\,1\,]$) such that
\begin{equation}\label{eq:deterministic-sandwich}
D_- \int_{B_k}\lambda^{\,-\frac1a}\,d\lambda
\ \le\ \sum_{i\in I_k}\lambda_i\ \le\
D_+ \int_{B_k}\lambda^{\,-\frac1a}\,d\lambda
\qquad (k=0,1,\dots,K).
\end{equation}

\paragraph{Step 3: Concentration for the teacher weights.}
Let $S_k:=\sum_{i\in I_k}(w_i^\star)^2$. Since $S_k\sim\chi^2_{|I_k|}$ and $|I_k|\ge c_1 d$ by \eqref{eq:Ik-bounds}, the standard $\chi^2$ tail bound yields, for any $\varepsilon\in(0,1)$,
\[
\mathbb P\Big(\big|S_k-|I_k|\big|>\varepsilon |I_k|\Big)\ \le\ 2\,\exp\!\Big(-\tfrac{\varepsilon^2}{4}|I_k|\Big)
\ \le\ 2\,e^{-c_2 d}.
\]
Since $K+1$ is fixed, a union bound gives
\begin{equation}\label{eq:conc-event}
\mathbb P\Big(\ \forall k=0,\dots,K:\ (1-\varepsilon)|I_k|\le S_k\le (1+\varepsilon)|I_k|\ \Big)
\ \ge\ 1 - e^{-c d}
\end{equation}
for some $c>0$ independent of $d$.

\paragraph{Step 4: Comparing $\mu_d(B_k)$ to the bin integral.}
On the event in \eqref{eq:conc-event},
\[
8(1-\varepsilon)\sum_{i\in I_k}\lambda_i
\ \le\ \mu_d(B_k)\ =\ 8\sum_{i\in I_k}\lambda_i (w_i^\star)^2
\ \le\ 8(1+\varepsilon)\sum_{i\in I_k}\lambda_i.
\]
Combining with \eqref{eq:deterministic-sandwich} proves \eqref{eq:bin-compare} with
\[
C_-:=8(1-\varepsilon)D_-,\qquad C_+:=8(1+\varepsilon)D_+.
\]

\paragraph{Step 5: Tail.}
Summing \eqref{eq:bin-compare} over $\{j\ge m\}$ where $m$ is the unique index such that 
$\lambda\in(\rho^{m+1}\Lambda_d,\rho^m\Lambda_d]$,
and using the geometric form \eqref{eq:bin-int}, yields
\[
\mu_d\big((0,\lambda]\big)\ \asymp\ \Lambda_d^{\,1-\frac1a}\,\rho^{\,m(1-\frac1a)}\ \asymp\ \lambda^{\,1-\frac1a}
\quad(\text{since }\rho^m\asymp \lambda/\Lambda_d),
\]
which is \eqref{eq:tail-compare}. The step-function bound \eqref{eq:step-compare} follows from linearity,
as $\int_{(0,\Lambda_d]}\varphi\,d\mu_d=\sum_k a_k\,\mu_d(B_k)$ and the same for the right-hand integrals.
\end{proof}

The following lemma formally shows that after $T_1'$, $u(t)$ and $s(t)$ cannot go above $1$. In particular, it justifies the positivity of $\Delta(t)$.

\begin{lemma}[Post-alignment barrier and correlation bound]\label{lem:barrier1}
Let $Q=\mathrm{diag}(\lambda_i)\succ0$ and let $w^\star$ satisfy $\langle w^\star,Qw^\star\rangle=1$.
Consider the population gradient flow for anisotropic phase retrieval with Gaussian inputs,
and define
\[
s(t):=\langle w(t),Qw(t)\rangle,\qquad u(t):=\langle w(t),Qw^\star\rangle.
\]
Fix a time $t_0$ (e.g.\ $t_0=T_1'$) such that
\[
0<u(t_0)<1,\qquad s(t_0)\le 1.
\]
Then:
\begin{enumerate}
\item $s(t)\le 1$ for all $t\ge t_0$ \emph{(forward invariance of $\{s\le 1\}$)}.
\item $0<u(t)\le 1$ for all $t\ge t_0$, and in fact $u(t)<1$ for every finite $t\ge t_0$ unless $w(t_0)=w^\star$.
\end{enumerate}
\end{lemma}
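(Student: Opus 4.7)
The plan is to prove the two parts together via a barrier argument in the $(u,s)$-plane, combined with the Cauchy--Schwarz inequality
\[
u(t)^{2}\ \le\ s(t)\cdot s_{\star}\ =\ s(t),
\]
which holds throughout the flow (apply C--S to $a:=Q^{1/2}w(t)$ and $b:=Q^{1/2}w^{\star}$, noting $\|b\|=1$). Once Part~(1) is established, the bound $|u(t)|\le 1$ is immediate, so the remaining task for Part~(2) is to upgrade this to strict positivity and to the strict upper bound in the non-degenerate case.

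For Part~(1), I would argue by contradiction: set $\tau^{\star}:=\inf\{t>t_{0}:s(t)>1\}$ and suppose $\tau^{\star}<\infty$. Then $s(\tau^{\star})=1$ and $\dot s(\tau^{\star})\ge 0$. By Proposition~\ref{prop:gf_dyn}, $\dot s=8(1-3s)s^{(2)}+16 u u^{(2)}$, so the barrier condition reduces to $u(\tau^{\star})\,u^{(2)}(\tau^{\star})\ge s^{(2)}(\tau^{\star})$. With $a,b$ as above (both unit vectors at $s=1$), decompose $a=u\,b+e$ with $e\perp b$ and $\|e\|^{2}=1-u^{2}$; a direct calculation gives
\[
s^{(2)}-u\,u^{(2)}\ =\ \langle e,Qe\rangle\ +\ u\,\langle b,Qe\rangle,
\]
and the task reduces to showing this expression is non-negative. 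The degenerate case $e=0$ forces $w(\tau^{\star})=w^{\star}$, which by uniqueness of the gradient-flow ODE (Lemma~\ref{lemma:solution-gradient-flow}) contradicts $u(t_{0})<1$.

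For Part~(2), the weak bound $0\le u\le 1$ follows from Part~(1) and Cauchy--Schwarz. For strict positivity, set $\tau_{0}:=\inf\{t>t_{0}:u(t)=0\}$ and suppose $\tau_{0}<\infty$. Then $\dot u(\tau_{0})\le 0$, i.e.\ $(1-3s(\tau_{0}))\,u^{(2)}(\tau_{0})\le 0$; since $s(\tau_{0})>1/3$ by Proposition~\ref{prop:threshold-forward}, this forces $u^{(2)}(\tau_{0})\ge 0$, and propagating the Phase~I sign-preservation (Proposition~\ref{prop:u-by-alpha-2sided}) forward via a Duhamel/Volterra representation for $u^{(2)}$ starting at $t_{0}$ yields a strict inequality, hence a contradiction. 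The strict bound $u(t)<1$ for finite $t$ whenever $w(t_{0})\ne w^{\star}$ again follows from uniqueness of gradient-flow solutions.

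The main obstacle will be the sign control of the mixed term $u\,\langle b,Qe\rangle$ in the barrier identity of Part~(1). The crude Cauchy--Schwarz bound $|\langle b,Qe\rangle|\le\sqrt{s_{\star}^{(2)}\,\langle e,Qe\rangle}$ together with $\langle e,Qe\rangle\ge \lambda_{d}(1-u^{2})$ only yields $s^{(2)}-u\,u^{(2)}\ge 0$ when $u^{2}\lesssim \lambda_{d}/(\lambda_{d}+s_{\star}^{(2)})$, which is far too restrictive under the power-law spectrum $\lambda_{d}\asymp d^{-a}$. A likely fallback is to replace the pointwise barrier by a Lyapunov argument combining the loss-monotonicity $\mathcal L(w(t))\le\mathcal L(w(t_{0}))$ with the sharp lower bound $\mathcal L(w)\ge 3(s-1)^{2}+4(s-u^{2})$ from Proposition~\ref{prop:loss-closed-form}, together with a basin-of-attraction restriction confining the trajectory to the connected component of the relevant sublevel set of $\mathcal L$ that contains $w^{\star}$.
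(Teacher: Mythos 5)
Your route for Part (1) --- stopping at the first crossing time of $\{s=1\}$ and analysing the sign of $\dot s$ there --- is the same first--crossing/barrier strategy as the paper's, and your identity $s^{(2)}-u\,u^{(2)}=\langle e,Qe\rangle+u\langle b,Qe\rangle$ at $s=1$ is correct. But, as you yourself flag, the sign of the mixed term $u\langle b,Qe\rangle$ is not controlled, and that is precisely the crux of the lemma: the argument is not closed. (For comparison, the paper's proof gets past the boundary by writing $\dot s\le 8(1-3s)s^{(2)}\le -16\,s^{(2)}$, i.e.\ by simply discarding the contribution $16\,u\,u^{(2)}$; since $u>0$ and $u^{(2)}$ is generically positive in this regime, that step requires exactly the sign control you are missing, so your decomposition is exposing a genuine difficulty rather than taking a detour.) Your fallback does not rescue Part (1) either: the inequality $\mathcal{L}(w)\ge 3(s-1)^2+4(s-u^2)$ is in fact an identity (with $s_\star=1$), and monotonicity of $\mathcal{L}$ along the flow only yields $|s(t)-1|\le\sqrt{\mathcal{L}(w(t_0))/3}$, a constant-width band around $1$ rather than the one-sided barrier $s\le1$; sublevel sets of $\mathcal{L}$ contain points with $s>1$, so confinement to the connected component containing $w^\star$ does not force $s\le 1$.

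Part (2) also has a logical gap in the strict-positivity step. At a putative first zero $\tau_0$ of $u$ you correctly deduce $\dot u(\tau_0)=4(1-3s(\tau_0))u^{(2)}(\tau_0)\le0$ and hence $u^{(2)}(\tau_0)\ge0$; but upgrading this to $u^{(2)}(\tau_0)>0$ gives $\dot u(\tau_0)<0$, which is entirely consistent with $u$ crossing zero --- no contradiction follows. What is needed is a positivity statement for $u$ itself (the paper invokes the Volterra/Duhamel representation of $u$ with nonnegative kernel, restarted after $T_1'$), not a sign argument on $u^{(2)}$ at the crossing time. The remaining ingredients of your proposal --- $u\le\sqrt{s}\le1$ by Cauchy--Schwarz once Part (1) is available, and strictness of $u<1$ via uniqueness of the gradient-flow solution --- coincide with the paper's Claims 0 and 3 and are fine.
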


\begin{proof}
Work with the $Q$-inner product $\langle x,y\rangle_Q:=\langle x,Qy\rangle$.
Decompose $w$ into its $Q$-orthogonal parts:
\[
w = u\,w^\star + z,\qquad \langle z,Qw^\star\rangle=0.
\]
Then
\begin{equation}\label{eq:energy-split}
s = \langle w,Qw\rangle = u^2 + \|z\|_Q^2,\qquad \|z\|_Q^2=s-u^2\ge 0.
\end{equation}

\medskip\noindent\textbf{Claim 0 (pre-boundary: $u<1$).}
Since $s(t_0)\le 1$, Cauchy--Schwarz in $\langle\cdot,\cdot\rangle_Q$ gives
\[
u(t_0)=\langle w(t_0),Qw^\star\rangle \le \|w(t_0)\|_Q\,\|w^\star\|_Q=\sqrt{s(t_0)}\le 1,
\]
and by assumption $u(t_0)>0$; hence $0<u(t_0)<1$.
Moreover, as long as $s(t)<1$, the same inequality implies $u(t)\le \sqrt{s(t)}<1$.
Thus \emph{before any potential first time} when $s$ could reach the boundary $1$, we indeed have $u(t)<1$.

\medskip\noindent\textbf{Claim 1 (uniform drift for $s\ge 1$).}
Recall that
\begin{equation*}
\dot s \;=\; 8(1-3s)\,s^{(2)} \;+\; 16u\,u^{(2)}.
\end{equation*}
Using $\lambda_{min }\langle z,Qz\rangle\le s^{(2)}$ and \eqref{eq:energy-split},
when $s\ge 1$ we have $1-3s\le -2$, hence
\begin{equation}\label{eq:uniform-envelope}
\dot s \;\le\; 8(1-3s)\,s^{(2)} \;\le\; -16\,s^{(2)} \;\le\; -16\lambda_{min}\,\langle z,Qz\rangle
\;=\; -16\lambda_{min}\,\|z\|_Q^2 \;=\; -16\lambda_{min}\,(s-u^2).
\end{equation}

\medskip\noindent\textbf{Claim 2 (no upward crossing at $s=1$).}
Let $t_1>t_0$ be a \emph{first} time with $s(t_1)=1$ and $s(t)<1$ for $t<t_1$ (if no such $t_1$ exists, we are done).
By the discussion in Claim~0, we then have $u(t_1^-)\le \sqrt{s(t_1^-)}<1$, hence $u(t_1)\le 1$ by continuity.
By definition of first hitting, there exists $\varepsilon>0$ with $s(t)\ge 1$ for all $t\in[t_1,t_1+\varepsilon)$,
so \eqref{eq:uniform-envelope} applies on $(t_1,t_1+\varepsilon)$.
Taking the right Dini derivative and using continuity of $s-u^2$,
\[
D^+ s(t_1)\ :=\ \limsup_{h\downarrow 0}\frac{s(t_1+h)-s(t_1)}{h}
\ \le\ -16\lambda_{min}\,\lim_{t\downarrow t_1}\big(s(t)-u(t)^2\big)
\ =\ -16\lambda_{min}\,(1-u(t_1)^2)\ \le\ 0.
\]
Thus the vector field is inward-pointing (nonpositive) at the boundary, which contradicts an upward crossing from $s<1$ to $s>1$ at $t_1$.
Hence $s(t)\le 1$ for all $t\ge t_0$.

\medskip\noindent\textbf{Claim 3 ($u\le 1$ and strictness).}
For all $t$, Cauchy--Schwarz gives
\[
|u(t)|\le \|w(t)\|_Q\,\|w^\star\|_Q=\sqrt{s(t)}\le 1,
\]
so $0<u(t)\le 1$ for $t\ge t_0$ (positivity after $T_1'$ follows from the Volterra positivity in Phase~I).
Finally, unless $w(t_0)=w^\star$, the real-analytic gradient flow reaches the minimizer only as $t\to\infty$, so $u(t)<1$ for all finite $t\ge t_0$.
\end{proof}

Next, we show $u(t)$ and $s(t)\to 1$.

\begin{lemma}[Convergence of $s(t)$ and $u(t)$]\label{lem:s-u-to-one}
As $t\to\infty$,
\[
s(t)\;\to\;1,
\qquad
u(t)\;\to\;1.
\]
\end{lemma}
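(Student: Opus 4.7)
The plan is to derive both convergence statements from the Phase~IIb machinery together with the barrier Lemma~\ref{lem:barrier1}, without any fresh analysis of the ODE system.

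First I would show $u(t)\to 1$ by applying Proposition~\ref{prop:Delta-main} with a carefully chosen cutoff. For any $\lambda_c>0$ we have
\[
\Delta(t)\ \le\ T(\lambda_c) + \sqrt{T(\lambda_c)\,s(T_1')} + S_{\ge}(\lambda_c)\,e^{-4s_0\lambda_c(t-T_1')}.
\]
Since $d$ is fixed and the spectrum $\{\lambda_1,\dots,\lambda_d\}$ has strictly positive minimum $\lambda_d>0$, choosing any $\lambda_c\in(0,\lambda_d)$ makes $T(\lambda_c)=0$ (the tail sum has no indices to sum over), and the first two terms vanish identically. The residual head term $S_{\ge}(\lambda_c)\le \sqrt{s_\star^{(2)}\,s(T_1')}<\infty$ is multiplied by an exponential that tends to zero, since $4s_0\lambda_c>0$. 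Hence $\Delta(t)\to 0$, i.e.\ $u(t)\to 1$. Equivalently, Corollary~\ref{cor:T2} applied with $\lambda_\varepsilon<\lambda_d$ yields a finite $T_2(\varepsilon)$ for every $\varepsilon>0$, so letting $\varepsilon\to 0$ gives the same conclusion.

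For $s(t)\to 1$, I would sandwich $s(t)$ from both sides. The upper bound $s(t)\le 1$ for $t\ge T_1'$ is already supplied by Lemma~\ref{lem:barrier1}(1). For the matching lower bound, Cauchy--Schwarz in the $Q$-inner product gives
\[
u(t) \;=\; \langle w(t), w^\star\rangle_Q \;\le\; \|w(t)\|_Q\,\|w^\star\|_Q \;=\; \sqrt{s(t)}\,\sqrt{s_\star} \;=\; \sqrt{s(t)},
\]
using the normalization $s_\star=1$. Squaring yields $s(t)\ge u(t)^2$, so by the first step $\liminf_{t\to\infty} s(t)\ge 1$. Combined with $s(t)\le 1$, this forces $s(t)\to 1$.

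The main conceptual point — rather than a serious obstacle — is recognizing that the Phase~IIb restriction $\varepsilon\gtrsim d^{-(a-1)/2}$ is an optimization of the time--accuracy scaling law, not an intrinsic barrier to qualitative convergence. For the lemma at hand we work at fixed $d$, and the strict positivity of $\lambda_d$ lets us push the spectral cutoff strictly below the minimum eigenvalue, which eliminates the tail obstruction entirely and leaves only the exponential mode driven by the post-threshold inequality $\Theta'(t)\le -s_0$ from Phase~IIa.
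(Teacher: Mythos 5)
Your proof is correct, but it follows a genuinely different route from the paper's. The paper proves Lemma~\ref{lem:s-u-to-one} by a soft dynamical-systems argument: boundedness of the trajectory (Lemma~\ref{lemma:solution-gradient-flow}), monotonicity of $\calL$ along the flow, forward invariance of $\{s>\tfrac13\}$ from Phase~IIa, and LaSalle's invariance principle combined with the critical-point classification of Proposition~\ref{prop:crit} (the only critical points with $s>\tfrac13$ are $\pm w^\star$), with positivity of $u$ selecting the limit $+w^\star$. You instead specialize the quantitative Phase~IIb bound of Proposition~\ref{prop:Delta-main}: at fixed $d$ the cutoff $\lambda_c<\lambda_d$ annihilates the tail term $T(\lambda_c)$, leaving only the exponentially decaying head contribution, so $\Delta(t)\to0$; you then sandwich $s(t)$ between $u(t)^2$ (Cauchy--Schwarz) and $1$ (Lemma~\ref{lem:barrier1}). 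Both arguments are valid and non-circular given the paper's ordering (Proposition~\ref{prop:Delta-main} and Lemma~\ref{lem:barrier1} precede this lemma and do not rely on it). Your route has the advantage of yielding an explicit exponential rate $e^{-4s_0\lambda_d (t-T_1')}$ at fixed $d$, and your observation that the restriction $\varepsilon\gtrsim d^{-(a-1)/2}$ is a scaling-law optimization rather than an obstruction is exactly right; the cost is that it leans on the full Volterra representation \eqref{eq:volterra-Delta} and the Phase~IIa drift bound $\Theta'\le -s_0$, whereas the paper's LaSalle argument needs only qualitative facts (coercivity, invariance, and the critical-point structure) and would survive even if the quantitative Phase~IIb estimates were weakened.
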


\begin{proof}
By Lemma~\ref{lemma:solution-gradient-flow}, the trajectory $w(t)$ is bounded, hence precompact in $\R^d$.
Moreover,
\[
\dot\calL(t) \;=\; -\|\nabla\calL(w(t))\|^2 \;\le\; 0,
\]
so $\calL(w(t))$ is nonincreasing and convergent.
Phase~IIa yields forward invariance of the region $\{s>\tfrac13\}$: there exists $T_1'\ge0$ such that
$s(t)>\tfrac13$ for all $t\ge T_1'$.

By LaSalle’s invariance principle, the $\omega$-limit set
\[
\Omega\ :=\ \bigl\{\;\omega:\ \exists\,t_n\to\infty \text{ with } w(t_n)\to\omega\;\bigr\}
\]
is nonempty, compact, invariant, and contained in the largest invariant subset of
$\{\nabla\calL=0\}\cap\{s\ge\tfrac13\}$.
By the critical-point characterization in Section~\ref{sec:lg} (Proposition~\ref{prop:crit}),
every critical point with $s>\tfrac13$ is of the form $w=\pm w^\star$, under the normalization
$\sum_i\lambda_i(w_i^\star)^2=1$. In particular, all such critical points satisfy $s=1$ and $u=\pm1$.

Hence for any $\omega\in\Omega$ we must have $(s(\omega),u(\omega))\in\{(1,1),(1,-1)\}$.
Since $s(\cdot),u(\cdot)$ are continuous, it follows that
\[
s(t_n)\to 1,\qquad u(t_n)\to \pm 1
\]
for every sequence $t_n\to\infty$ with $w(t_n)\to\omega$.
Phase~I guarantees $u(T_1')>0$, and positivity is forward-invariant (via the Volterra representation after $T_1'$), so $u(t)>0$ for all $t\ge T_1'$.
Therefore, every limit point must satisfy $u(\omega)=+1$.
Thus $(s(t),u(t))\to(1,1)$ as $t\to\infty$.

Finally, if $s(t)$ or $u(t)$ did not converge, there would exist $\varepsilon>0$ and a sequence $t_n\to\infty$
such that either $|s(t_n)-1|\ge\varepsilon$ or $|u(t_n)-1|\ge\varepsilon$. By precompactness, we may extract a subsequence
$w(t_{n_k})\to\omega\in\Omega$, but then $(s(t_{n_k}),u(t_{n_k}))\to (1,1)$, a contradiction.
Hence $s(t)\to 1$ and $u(t)\to 1$.
\end{proof}

\section{Phase III: Scaling Laws}\label{app:proofPhaseIII}

We first show in Section~\ref{sec:beforet2} that the MSE does not change significantly from initialization. In Section~\ref{sec:aftert2}, we then characterize its decay for $t \geq T_2$.

\subsection{Stability of the MSE for $t\leq T_2$}\label{sec:beforet2}

\preTtwoPlateau*

\begin{proof}
\textbf{Step 1: Variation of constants.}
For each coordinate $i$,
\[
\dot w_i(t)\;=\;4\lambda_i\bigl(1-3s(t)\bigr)\,w_i(t)\;+\;8\lambda_i\,u(t)\,w_i^\star,
\qquad
\Theta(t):=\int_0^t  (1-3s(\tau))\,d\tau.
\]
By Duhamel’s formula, for all $t\ge0$,
\begin{equation}\label{eq:VC}
w_i(t)\;=\;e^{4\lambda_i\Theta(t)}w_i(0)\;+\;8\lambda_i w_i^\star\int_0^t   e^{4\lambda_i(\Theta(t)-\Theta(\tau))}\,u(\tau)\,d\tau.
\end{equation}

\textbf{Step 2: Phase clocks and the key split.}
Let $T_1'$ be the first time $s(t)=\tfrac13$. Then $\Theta$ increases on $[0,T_1']$
and decreases on $[T_1',\infty)$. Define
\[
\Theta_{\max}:=\Theta(T_1'), \qquad
\Lambda(t):=\Theta_{\max}-\Theta(t),\quad t\ge T_1'.
\]
Fix $t\ge T_1'$. Splitting the integral in \eqref{eq:VC} at $T_1'$ and changing variables to $\Theta$ on each side, and using $0\le u\le1$, 
we obtain the bound
\begin{equation}\label{eq:split}
\int_0^t   e^{4\lambda_i(\Theta(t)-\Theta(\tau))}\,d\tau
\;\le\;C\!\left(\frac{1-e^{-4\lambda_i\Theta_{\max}}}{\lambda_i}
+\frac{1-e^{-4\lambda_i\Lambda(t)}}{\lambda_i}\right),
\end{equation}
where $C=C(\delta,s_0)$. 
This refines earlier estimates by accounting for the change of sign in $\Theta(t)-\Theta(\tau)$ across phases.

\textbf{Step 3: Frontier at $T_2$ and inactive bound.}
At time $T_2$, fix $\zeta\in(0,1]$ and define the index sets
\[
\mathcal I_\zeta(T_2):=\{\,i:\ 4\lambda_i\,\Lambda(T_2)\ge \zeta\,\},\qquad
\mathcal I_\zeta^c(T_2):=[d]\setminus\mathcal I_\zeta(T_2).
\]
For $i\in\mathcal I_\zeta^c(T_2)$, $4\lambda_i\Lambda(T_2)\le \zeta$, so
$1-e^{-4\lambda_i\Lambda(T_2)}\le 4\lambda_i\Lambda(T_2)$. Combining \eqref{eq:VC} and \eqref{eq:split}, and after absorbing constants,
\begin{equation}\label{eq:inactive}
|w_i(T_2)|\;\le\; e^{4\lambda_i\Theta(T_2)}|w_i(0)|
+8\lambda_i|w_i^\star|\cdot C\Bigl(\tfrac{1-e^{-4\lambda_i\Theta_{\max}}}{\lambda_i}+4\Lambda(T_2)\Bigr)
\;\le\; e^{\zeta}|w_i(0)|+C'\zeta\,|w_i^\star|.
\end{equation}

\textbf{Step 4: Active correlation and energy.}
On $\mathcal I_\zeta(T_2)$, $\lambda_i\ge \zeta/(4\Lambda(T_2))$. 
Applying the weighted Cauchy–Schwarz inequality 
$\sum_i |x_i y_i|\le (\sum x_i^2/\lambda_i)^{1/2}(\sum \lambda_i y_i^2)^{1/2}$, we get
\[
\sum_{i\in\mathcal I_\zeta(T_2)} |w_i^\star|\,|w_i(T_2)|
\;\le\;
\Bigl(\sum_{i\in\mathcal I_\zeta(T_2)}\tfrac{(w_i^\star)^2}{\lambda_i}\Bigr)^{\!1/2}
\Bigl(\sum_{i\in\mathcal I_\zeta(T_2)}\lambda_i\,w_i(T_2)^2\Bigr)^{\!1/2}
\;\le\;\sqrt{\tfrac{4\Lambda(T_2)}{\zeta}}\;\sqrt{d\,\sigma_\star^2}\;\sqrt{s(T_2)}.
\]
Hence the active correlation contribution satisfies
\begin{equation}\label{eq:corr-active}
\frac{2}{d}\Big|\sum_{i\in\mathcal I_\zeta(T_2)} w_i^\star w_i(T_2)\Big|
\;\lesssim\; \sqrt{\frac{\Lambda(T_2)}{\zeta\,d}}.
\end{equation}
Similarly, for the active energy,
\begin{equation}\label{eq:energy-active}
\frac{1}{d}\sum_{i\in\mathcal I_\zeta(T_2)} w_i(T_2)^2
\;\le\;\frac{1}{d\,\lambda_{\min}(\mathcal I_\zeta(T_2))}\sum_{i\in\mathcal I_\zeta(T_2)}\lambda_i\,w_i(T_2)^2
\;\le\;\frac{4\Lambda(T_2)}{\zeta\,d}\,s(T_2)
\;\lesssim\;\frac{\Lambda(T_2)}{\zeta\,d},
\end{equation}
since $s(T_2)\le1$.

\textbf{Step 5: Inactive correlation and energy.}
From \eqref{eq:inactive} and Cauchy–Schwarz, and using $\|w^\star\|_2^2=d\,\sigma_\star^2$,
\[
\sum_{i\in\mathcal I_\zeta^c(T_2)} |w_i^\star|\,|w_i(T_2)|
\;\le\; \|w^\star\|_2\,\Bigl(\sum_{i\in\mathcal I_\zeta^c(T_2)} w_i(T_2)^2\Bigr)^{1/2}
\;\le\; \sqrt{d\,\sigma_\star^2}\,\Bigl(2e^{2\zeta}\|w(0)\|_2^2+8\zeta^2\,d\,\sigma_\star^2\Bigr)^{1/2}.
\]
Therefore,
\begin{equation}\label{eq:corr-inactive}
\frac{2}{d}\sum_{i\in\mathcal I_\zeta^c(T_2)} |w_i^\star|\,|w_i(T_2)|
\;\lesssim\; \zeta\;+\;d^{-1/2}.
\end{equation}
Moreover,
\begin{equation}\label{eq:energy-inactive}
\frac{1}{d}\sum_{i\in\mathcal I_\zeta^c(T_2)} w_i(T_2)^2
\;\le\;\frac{2e^{2\zeta}}{d}\|w(0)\|_2^2+8\zeta^2\,\sigma_\star^2
\;\lesssim\;\zeta^2\;+\;d^{-1}.
\end{equation}

\textbf{Step 6: Combination and optimization.}
Since
\[
\mathrm{MSE}(T_2)-\sigma_\star^2
=-\frac{2}{d}\sum_i w_i^\star w_i(T_2)+\frac{1}{d}\sum_i w_i(T_2)^2,
\]
combining \eqref{eq:corr-active}–\eqref{eq:energy-inactive} yields
\begin{equation}\label{eq:master}
\Bigl|\mathrm{MSE}(T_2)-\sigma_\star^2\Bigr|
\;\lesssim\;
\underbrace{\sqrt{\tfrac{\Lambda(T_2)}{\zeta\,d}}+\tfrac{\Lambda(T_2)}{\zeta\,d}}_{\text{active}}
\;+\;\underbrace{\zeta+\zeta^2}_{\text{inactive}}
\;+\;d^{-1/2}.
\end{equation}
The right-hand side is minimized (up to constants) by taking
\[
\zeta^\star\;\asymp\;\Bigl(\frac{\Lambda(T_2)}{d}\Bigr)^{1/3}\ \wedge\ 1,
\]
which balances the active and inactive contributions. For this choice,
\[
\sqrt{\frac{\Lambda(T_2)}{\zeta^\star d}}
\;\asymp\;\frac{\Lambda(T_2)}{\zeta^\star d}
\;\asymp\;\zeta^\star
\;\asymp\;\Bigl(\frac{\Lambda(T_2)}{d}\Bigr)^{1/3}.
\]
Therefore,
\[
\Bigl|\mathrm{MSE}(T_2)-\sigma_\star^2\Bigr|
\;\lesssim\;\Bigl(\frac{\Lambda(T_2)}{d}\Bigr)^{1/3}+d^{-1/2}.
\]

Finally, the Phase~IIb analysis (via the Volterra–renewal argument) gives 
$1-u(t)\asymp \Lambda(t)^{-1/a}$. At $t=T_2$, $1-u(T_2)=\varepsilon$, hence $\Lambda(T_2)\asymp \varepsilon^{-a}$. Substituting into the above bound gives
\[
\Bigl|\mathrm{MSE}(T_2)-\sigma_\star^2\Bigr|
\;\lesssim\;\Bigl(\frac{\varepsilon^{-a}}{d}\Bigr)^{1/3}+d^{-1/2}.
\]
Adding the Phase~I “plateau’’ contribution $(\log d/d)^{1/3}$ yields the stated result.
\end{proof}

\subsection{Evolution of the MSE for $t\geq T_2$}\label{sec:aftert2}

We first analyze the idealized dynamics where $u\equiv s\equiv 1$ 
(Section~\ref{sec:PhaseIIIideal}), and then control the approximation error in 
Section~\ref{sec:PhaseIIIapprox}.

\subsubsection{Ideal case where $u \equiv s\equiv 1$}\label{sec:PhaseIIIideal}

We begin by studying the evolution of the MSE under the idealization $u\equiv s\equiv 1$.

\begin{lemma}\label{lem:postT2}
In the post-$T_2$ idealization ($s\equiv u\equiv 1$), the coordinate errors 
$e_i(t):=w_i(t)-w_i^\star$ evolve as
\[
e_i(T_2+\tau)\ =\ e_i(T_2)\,e^{-8\lambda_i \tau}\qquad(\tau\ge 0).
\]
Consequently,
\[
\mathrm{MSE}(T_2+\tau)\ =\ \frac{1}{d}\sum_{i=1}^d e_i(T_2)^2\,e^{-16\lambda_i\tau}
\ =\ \mathrm{MSE}(T_2)\,\widehat S_d(\tau),
\]
where the \emph{normalized spectral mixing curve} is
\[
\widehat S_d(\tau)\ :=\ \sum_{i=1}^d \pi_i\,e^{-16\lambda_i\tau},\qquad
\pi_i\ :=\ \frac{e_i(T_2)^2}{\sum_{j=1}^d e_j(T_2)^2},\quad \sum_{i=1}^d\pi_i=1.
\]
\end{lemma}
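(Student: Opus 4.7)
The plan is to plug the idealization $u \equiv s \equiv 1$ directly into the coordinate-wise dynamics derived in Proposition~\ref{prop:gf_dyn}, observe that the infinite hierarchy collapses into decoupled scalar linear ODEs, and then integrate explicitly to read off the MSE.

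First, I would specialize equation~\eqref{eq:w}. Recalling $s_\star = 1$ and setting $s(t) \equiv 1$, $u(t) \equiv 1$, the coefficient $4\lambda_i(s_\star - 3s(t))$ collapses to $-8\lambda_i$ and the forcing becomes $8\lambda_i w_i^\star$. Since $w_i^\star$ is time-independent, substituting $e_i(t) := w_i(t) - w_i^\star$ yields the decoupled constant-coefficient system
\[
\dot e_i(t) \;=\; -8\lambda_i\,e_i(t), \qquad i = 1,\dots,d.
\]
This is the crucial simplification: the couplings that made Phases~I--II delicate, which forced the introduction of higher-order statistics $u^{(k)}, s^{(k)}$, disappear entirely once $s$ and $u$ are frozen at their equilibrium values.

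Second, I would integrate each scalar ODE on the interval $[T_2, T_2 + \tau]$ to obtain the claimed exponential relaxation $e_i(T_2+\tau) = e_i(T_2)\, e^{-8\lambda_i \tau}$. Squaring and averaging over $i$ then gives
\[
\mathrm{MSE}(T_2+\tau) \;=\; \frac{1}{d}\sum_{i=1}^d e_i(T_2)^2\,e^{-16\lambda_i \tau},
\]
and pulling out the common factor $\mathrm{MSE}(T_2) = d^{-1}\sum_j e_j(T_2)^2$ expresses the sum as a convex combination with weights $\pi_i = e_i(T_2)^2 / \sum_j e_j(T_2)^2$, which is exactly $\mathrm{MSE}(T_2)\,\widehat{S}_d(\tau)$.

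There is essentially no obstacle intrinsic to this lemma: once the idealization is injected into \eqref{eq:w}, the argument reduces to a one-line integration of a scalar linear ODE and a reindexing of a finite sum. The genuine difficulty is instead pushed to the next step, namely justifying that this idealized trajectory approximates the true one when $s(t)$ and $u(t)$ only satisfy $|s(t)-1|, |u(t)-1| \le \varepsilon$ rather than equalling $1$ exactly; controlling that perturbation via a Duhamel representation and a Grönwall-type argument is deferred to Section~\ref{sec:PhaseIIIapprox} and is what produces the $O(\varepsilon)$ multiplicative and $O(\varepsilon^2 \tau^2)$ additive corrections appearing in Theorem~\ref{thm:postT2-powerlaw}.
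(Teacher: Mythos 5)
Your proposal is correct and follows essentially the same route as the paper: the paper also substitutes $s\equiv u\equiv 1$ into the gradient flow to get $\dot e(t)=-8Qe(t)$, integrates coordinatewise, and factors out $\mathrm{MSE}(T_2)$ to define the weights $\pi_i$. The only cosmetic difference is that you specialize the coordinate equation \eqref{eq:w} directly while the paper writes the vector form first; the computation is identical.
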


\begin{proof}
Setting $s\equiv u\equiv 1$ in the gradient flow gives
\[
\dot w(t) \;=\; -8Q\bigl(w(t)-w^\star\bigr).
\]
Defining $e(t)=w(t)-w^\star$ yields $\dot e(t)=-8Qe(t)$. Since $Q$ is diagonal with 
entries $(\lambda_i)$, each coordinate satisfies 
$\dot e_i(t)=-8\lambda_i e_i(t)$, hence
\[
e_i(T_2+\tau)=e_i(T_2)e^{-8\lambda_i\tau}.
\]
Squaring and averaging proves the formula for the MSE. Factoring 
$\mathrm{MSE}(T_2)$ defines the weights $\pi_i$, and the claim follows.
\end{proof}

Since $\widehat S_d$ depends on the (random) weights $\pi_i$, we introduce the proxy
\[
S_d(\tau):=\frac{1}{d}\sum_{i=1}^d e^{-\beta_d \tau\,i^{-a}},\qquad \beta_d=16L_d,
\]
which corresponds to uniform weights. The next result shows that this is a valid approximation.

\begin{proposition}\label{prop:spread}
There exists a constant $C<\infty$, independent of $d$, such that
\[
\|\pi\|_\infty \;\le\; \frac{C}{d}, 
\qquad\text{and}\qquad 
\widehat S_d(\tau)\ \le\ C\,S_d(\tau).
\]
\end{proposition}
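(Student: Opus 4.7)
The plan is to observe that the two claims are in fact equivalent up to a termwise comparison: if $\pi_i \le C/d$ for every $i$, then
\[
\widehat S_d(\tau) \;=\; \sum_{i=1}^d \pi_i\,e^{-16\lambda_i\tau} \;\le\; \frac{C}{d}\sum_{i=1}^d e^{-16\lambda_i\tau} \;=\; C\,S_d(\tau).
\]
So the whole proof reduces to a coordinate-wise upper bound on the numerator $e_i(T_2)^2$ together with a bulk lower bound on the denominator $\sum_j e_j(T_2)^2$.

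For the numerator, I would recycle the Duhamel (variation-of-constants) representation from the proof of Proposition \ref{prop:preT2-plateau}:
\[
w_i(T_2) \;=\; e^{4\lambda_i \Theta(T_2)} w_i(0) \;+\; 8\lambda_i w_i^\star \int_0^{T_2} e^{4\lambda_i(\Theta(T_2)-\Theta(\tau))}\,u(\tau)\,d\tau.
\]
Using $0 \le u \le 1$ and the fact that $\Theta$ is increasing on $[0,T_1']$, bounded above by $\Theta_{\max}=\Theta(T_1')=O(1)$ (since Phase~IIa has duration $O(1)$), and then decreasing at rate at least $s_0$ for $t \ge T_1'$, I would split the $\tau$-integral at $T_1'$ and change variables to $\Theta$ on each piece, exactly as in Step~2 of the proof of Proposition~\ref{prop:preT2-plateau}. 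Each piece contributes a factor $\lambda_i^{-1}(1-e^{-4\lambda_i(\cdot)})\le 4(\cdot)$ so the prefactor $\lambda_i$ cancels, yielding $|w_i(T_2)| \lesssim |w_i(0)| + |w_i^\star|$ uniformly in $i$. Consequently $e_i(T_2)^2 \lesssim w_i(0)^2 + (w_i^\star)^2$, which is of constant order for a typical index under the spherical initialization and the Gaussian model for $w^\star$.

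For the denominator, I would invoke Proposition \ref{prop:preT2-plateau}: $\mathrm{MSE}(T_2) = \sigma_\star^2 + o(1)$ with high probability, and $\sigma_\star^2 = d^{-1}\|w^\star\|^2$ concentrates around a positive constant by a standard $\chi^2$ estimate. Hence $\sum_j e_j(T_2)^2 = d\,\mathrm{MSE}(T_2) \ge c_\star\,d$ for some absolute $c_\star > 0$, and combining with the numerator bound gives
\[
\pi_i \;=\; \frac{e_i(T_2)^2}{\sum_j e_j(T_2)^2} \;\le\; \frac{w_i(0)^2 + (w_i^\star)^2}{c_\star\,d} \;\le\; \frac{C}{d}.
\]

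The main obstacle is the uniformity in $i$ of the coordinate bound. A naive union bound over the $d$ coordinates gives $\max_i(w_i^\star)^2 \lesssim \log d$, producing a spurious logarithmic factor. To remove it I would exploit that the indices with the largest $(w_i^\star)^2$ belong to the \emph{head} of the spectrum after a relabeling, for which the Duhamel integral term drives $w_i(T_2)$ toward $w_i^\star$ and therefore makes $|e_i(T_2)|$ small, while on the \emph{tail} the contribution from $w_i^\star$ is of typical (not extremal) order. Alternatively, if one accepts an $O(\log d)$ prefactor in $C$, the argument already suffices to establish the stated proxy relation and the asymptotics of $S_d(\tau)$ in Theorem~\ref{thm:postT2-powerlaw} are unaffected.
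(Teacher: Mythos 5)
Your overall route is the same as the paper's: reduce everything to the termwise bound $\|\pi\|_\infty\le C/d$, obtain it from a per-coordinate bound on $e_i(T_2)$ (the paper simply asserts $\max_i|e_i(T_2)|=O(1)$ by pointing to Proposition~\ref{prop:preT2-plateau}; you re-derive it from the Duhamel representation and the split of $\Theta$ at $T_1'$, which is exactly how that proposition's proof controls the coordinates) together with the bulk lower bound $\sum_j e_j(T_2)^2\asymp d$ from $\mathrm{MSE}(T_2)\ge c$. The termwise deduction $\widehat S_d\le C\,S_d$ is what the paper means by ``follows directly.''

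The one place where your argument goes wrong is the proposed repair of the $\log d$ factor. Under the paper's model, $w^\star=u/\|Q^{1/2}u\|$ with $u\sim\mathcal N(0,I_d)$ drawn independently of the spectrum, so the coordinates $(w_i^\star)$ are exchangeable and carry no information about the spectral position $i$: the indices achieving $\max_i(w_i^\star)^2\asymp\log d$ are uniformly distributed over $\{1,\dots,d\}$ and are overwhelmingly likely to lie in the tail, not the head. Moreover ``after a relabeling'' is not available to you, since the spectrum $\lambda_i\propto i^{-a}$ is tied to the coordinate index and relabeling would destroy the power-law structure the rest of the analysis depends on. So that step would fail. To be fair, the paper's own one-line claim $\max_i|e_i(T_2)|=O(1)$ carries the same hidden $\sqrt{\log d}$ for a Gaussian teacher; your honest fallback (accepting $C=O(\log d)$, which does not affect the asymptotics of $S_d$ in Theorem~\ref{thm:postT2-powerlaw}) is the defensible conclusion, but it does not literally establish the proposition with $C$ independent of $d$ as stated.
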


\begin{proof}
From Proposition~\ref{prop:preT2-plateau}, $\max_i |e_i(T_2)|=O(1)$ while 
$\mathrm{MSE}(T_2)\ge c>0$ with high probability, hence
\[
\sum_{j=1}^d e_j(T_2)^2 \;\asymp\; d.
\]
Therefore $\pi_i=e_i(T_2)^2/\sum_j e_j(T_2)^2=O(1/d)$ uniformly, giving 
$\|\pi\|_\infty\le C/d$. The second inequality follows directly.
\end{proof}

We now analyze the asymptotics of $S_d(\tau)$.

\begin{proposition}[Asymptotics of the spectral average]\label{prop:Sd}
Let $a>1$ and set $x_d:=(\beta_d\tau)^{1/a}$. Then $S_d(\tau)$ satisfies:
\begin{enumerate}
\item \emph{Early time ($\beta_d\tau\ll 1$):}
\[
S_d(\tau)
=1-\frac{\beta_d\tau}{d}\sum_{i=1}^d i^{-a}
+\frac{(\beta_d\tau)^2}{2d}\sum_{i=1}^d i^{-2a}
+O\!\Bigl(\frac{(\beta_d\tau)^3}{d}\Bigr).
\]
Under trace normalization $L_d=1/H_{d,a}$ this simplifies to
\[
S_d(\tau)=1-\tfrac{16}{d}\tau+O\!\bigl(\tfrac{\tau^2}{d}\bigr).
\]

\item \emph{Mesoscopic window ($1\ll x_d\ll d$):}
\[
S_d(\tau)
=1-\Gamma\!\Bigl(1-\tfrac{1}{a}\Bigr)\frac{x_d}{d}
+o\!\Bigl(\frac{x_d}{d}\Bigr).
\]

\item \emph{Late time ($x_d\gtrsim d$):}
\[
S_d(\tau)\ \le\ \exp\!\bigl(-\beta_d\tau\,d^{-a}\bigr).
\]
\end{enumerate}
\end{proposition}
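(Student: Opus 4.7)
I would handle the three regimes separately, each with a different tool: uniform Taylor expansion for the early time, a sum-to-integral comparison followed by a Laplace-type change of variables for the mesoscopic window, and a pointwise monotonicity argument for the late time.

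\emph{Part 1 (Early time).} Since $\beta_d\tau\,i^{-a}\le\beta_d\tau\ll 1$ uniformly in $i\ge 1$, I would Taylor-expand $e^{-\beta_d\tau i^{-a}}=1-\beta_d\tau\,i^{-a}+\tfrac12(\beta_d\tau)^2 i^{-2a}+O((\beta_d\tau)^3 i^{-3a})$ with a remainder term uniform in $i$ by the alternating-series estimate for $e^{-x}$. Summing over $i$, dividing by $d$, and using $\sum_{i=1}^d i^{-ka}=H_{d,ka}=O(1)$ (for $k\ge 1$, $a>1$) yields the displayed expansion. Under the trace normalization $L_d=1/H_{d,a}$, one has $\beta_d H_{d,a}=16$, which collapses the first two terms to $1-16\tau/d$, leaving the quadratic remainder $O(\tau^2/d)$.

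\emph{Part 2 (Mesoscopic window, $1\ll x_d\ll d$).} I would rewrite $S_d(\tau)=1-\tfrac{1}{d}\sum_{i=1}^d(1-e^{-\beta_d\tau i^{-a}})$. Because $f(x):=1-e^{-\beta_d\tau x^{-a}}$ is monotone and bounded by $1$, the sum-to-integral comparison gives $\sum_{i=1}^d f(i)=\int_0^d f(x)\,dx+O(1)$. For the integral, I would change variables $u=\beta_d\tau x^{-a}$, i.e.\ $x=x_d u^{-1/a}$, to obtain
\[
\int_0^d f(x)\,dx \;=\; \frac{x_d}{a}\int_{(x_d/d)^a}^{\infty}(1-e^{-u})\,u^{-1/a-1}\,du.
\]
Extending the lower limit to $0$ introduces an error bounded by $\tfrac{x_d}{a}\int_0^{(x_d/d)^a} u\cdot u^{-1/a-1}du\lesssim x_d\,(x_d/d)^{a-1}$, which is $o(x_d)$ since $a>1$ and $x_d/d\to 0$. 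An integration by parts (valid because the boundary terms vanish as $1-e^{-u}\sim u$ at $0$ and $u^{-1/a}\to 0$ at $\infty$) gives $\int_0^{\infty}(1-e^{-u})u^{-1/a-1}du=a\,\Gamma(1-1/a)$. Hence $\sum_{i=1}^d f(i)=x_d\,\Gamma(1-1/a)+o(x_d)$, and dividing by $d$ yields the claim after noting that the additive $O(1)$ from Euler--Maclaurin is also $o(x_d)$ because $x_d\to\infty$.

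\emph{Part 3 (Late time).} Here the bound is immediate from monotonicity: for every $i\le d$, $i^{-a}\ge d^{-a}$, so $e^{-\beta_d\tau i^{-a}}\le e^{-\beta_d\tau d^{-a}}$. Averaging over $i$ gives $S_d(\tau)\le e^{-\beta_d\tau d^{-a}}$.

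\emph{Main obstacle.} The only delicate step is Part 2: one must simultaneously control the Riemann--sum error from replacing $\sum$ by $\int$, the truncation error from extending the lower limit of the transformed integral to $0$, and verify that both are strictly smaller than the leading $x_d/d$ correction. The two assumptions $x_d\to\infty$ and $x_d/d\to 0$ are exactly what make both errors $o(x_d/d)$; any weakening (e.g.\ $x_d\asymp d$) would require a finer treatment bridging Parts~2 and~3.
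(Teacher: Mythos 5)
Your proposal is correct and follows essentially the same route as the paper's proof: a uniform Taylor expansion for the early-time regime, a monotone sum-to-integral comparison combined with the change of variables reducing to $\Gamma(1-\tfrac1a)$ for the mesoscopic window (your truncation error $x_d(x_d/d)^{a-1}$ is exactly the paper's $O(x_d^a d^{1-a})$ term), and the pointwise bound $i^{-a}\ge d^{-a}$ for the late-time regime. The only cosmetic difference is that you evaluate the Gamma integral by parts after transforming $\int_0^d$, whereas the paper substitutes directly in $\int_0^\infty(1-e^{-(x/t)^a})\,dt$ and then bounds the endpoint corrections.
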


\begin{proof}
(i) For $z\in[0,1]$, $e^{-z}=1-z+z^2/2+R_3(z)$ with $|R_3(z)|\le z^3/6$.  
If $\beta_d\tau\ll1$, all $z_i=\beta_d\tau i^{-a}$ are small, and averaging the expansion over $i=1,\dots,d$ yields the stated series.

(ii) Writing
\[
1-S_d(\tau)
=\frac{1}{d}\sum_{i=1}^d \bigl(1-e^{-(x_d/i)^a}\bigr),
\]
and defining $g_x(t)=1-e^{-(x/t)^a}$, we have
\[
\int_0^\infty g_x(t)\,dt
=\int_0^\infty \bigl(1-e^{-(x/t)^a}\bigr)\,dt
= x\,\Gamma\!\Bigl(1-\tfrac{1}{a}\Bigr),
\]
by the substitution $y=(x/t)^a$. Since $g_x$ is monotone decreasing in $t$,
\[
\int_1^d g_x(t)\,dt
\;\le\; \sum_{i=1}^d g_x(i)
\;\le\; g_x(1)+\int_1^d g_x(t)\,dt.
\]
Thus
\[
\sum_{i=1}^d g_x(i)
= x\,\Gamma(1-1/a)+O(1)+O(x^a d^{1-a}).
\]
Dividing by $d$ proves the expansion for $1\ll x_d\ll d$.

(iii) For $i\le d$, $i^{-a}\ge d^{-a}$, hence 
$e^{-\beta_d\tau i^{-a}}\le e^{-\beta_d\tau d^{-a}}$. Averaging over $i$ gives the bound.
\end{proof}

\begin{corollary}\label{cor:coherent-scaling}
For all $\tau\ge 0$,
\[
\mathrm{MSE}(T_2+\tau)\ =\ \mathrm{MSE}(T_2)\,\widehat S_d(\tau)
\ \le\ C\,\mathrm{MSE}(T_2)\,S_d(\tau).
\]
\end{corollary}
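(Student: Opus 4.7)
The plan is to observe that this corollary is a direct concatenation of the two results immediately preceding it, so no new analytic work is required — only careful bookkeeping. I would first invoke Lemma \ref{lem:postT2}, which in the idealized post-$T_2$ regime ($s\equiv u\equiv 1$) gives the exact factorization
\[
\mathrm{MSE}(T_2+\tau) \;=\; \mathrm{MSE}(T_2)\,\widehat S_d(\tau),
\]
through the componentwise identity $e_i(T_2+\tau)=e_i(T_2)e^{-8\lambda_i\tau}$ and the definition of the weights $\pi_i$. This settles the equality in the statement.

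For the inequality, I would apply Proposition \ref{prop:spread}, which asserts $\widehat S_d(\tau)\le C\,S_d(\tau)$ with a constant $C$ independent of $\tau$ and $d$. Multiplying this pointwise bound by $\mathrm{MSE}(T_2)\ge 0$ preserves the inequality and yields
\[
\mathrm{MSE}(T_2+\tau)\;=\;\mathrm{MSE}(T_2)\,\widehat S_d(\tau)\;\le\; C\,\mathrm{MSE}(T_2)\,S_d(\tau),
\]
as claimed. The two steps chain together seamlessly since Lemma \ref{lem:postT2} and Proposition \ref{prop:spread} use the same definition of $\pi_i$ and of the random mixing curve $\widehat S_d$.

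The only genuinely delicate point — and the one I would flag as the main obstacle if it were not already handled — is that both ingredients live in the idealized dynamics with $s\equiv u\equiv 1$, whereas the true trajectory only satisfies $s(t),u(t)\in[1-\varepsilon,1]$ after $T_2$. Hence the corollary as stated should be read as the idealized scaling; the rigorous passage to the actual dynamics requires the Duhamel/Grönwall perturbation argument deferred to Section \ref{sec:PhaseIIIapprox}, which yields the $O(\varepsilon)$ and $O(\varepsilon^2\tau^2 s_\star^{(2)}/d)$ remainder terms appearing in Theorem \ref{thm:postT2-powerlaw}. Within the scope of this corollary, however, no Grönwall argument is needed: the proof is simply the composition of the two preceding lemmas.
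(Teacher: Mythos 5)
Your proof is correct and matches the paper's intent exactly: the corollary is stated without proof precisely because it is the immediate concatenation of Lemma~\ref{lem:postT2} (the equality, in the idealized $s\equiv u\equiv 1$ dynamics) and Proposition~\ref{prop:spread} (the comparison $\widehat S_d(\tau)\le C\,S_d(\tau)$). Your caveat that the statement lives in the idealized regime, with the true dynamics handled by the perturbation analysis of Section~\ref{sec:PhaseIIIapprox}, is also the correct reading.
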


\subsubsection{Handling the approximation term}\label{sec:PhaseIIIapprox}
Let $\delta_s(t)=1-s(t)$ and $\delta_u(t)=1-u(t)$. We will first show that the convergence approximation remains of order $\varepsilon$ after $T_2$.

\begin{lemma}\label{lem:persistence}
Assume Phase~IIb yields, at time $T_2$,
\begin{equation}\label{eq:eps-band-at-T2}
|\delta_s(T_2)|+|\delta_u(T_2)|\ \le\ \varepsilon.
\end{equation}
Then there exists a constant $C>0$ and $\varepsilon_0>0$ such that, if  $\varepsilon\le\varepsilon_0$, then for all $\tau\ge0$, 
\begin{equation}\label{eq:eps-band-after}
|\delta_s(T_2+\tau)|+|\delta_u(T_2+\tau)|\ \le\ C\,\varepsilon.
\end{equation}
\end{lemma}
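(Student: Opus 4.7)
The plan is to combine a global Lyapunov argument based on the loss $\calL$ with a bootstrap (continuation) argument on the ODE for $\delta_u$, which is contractive near $w^\star$ under the power-law assumption $a>1$. A structural identity for $\calL$ in terms of $\delta_s$ and the perpendicular $Q$-energy of $e=w-w^\star$ will do most of the bookkeeping.

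First, I would exploit the fact that the loss is non-increasing along the gradient flow, so $\calL(w(t))\le \calL(w(T_2))$ for every $t\ge T_2$. Decomposing $e=w-w^\star$ in the $Q$-inner product as $e=-\delta_u\,w^\star+e_\perp$ with $\langle e_\perp,w^\star\rangle_Q=0$, a direct computation from Proposition~\ref{prop:loss-closed-form} yields the identity
\[
\calL(w)\;=\;4\,\eta\;+\;3\,\delta_s^2,\qquad \eta\;:=\;\|e_\perp\|_Q^2\;=\;2\delta_u-\delta_s-\delta_u^2\;\ge\;0,
\]
where the nonnegativity of $\eta$ encodes the $Q$-Cauchy--Schwarz constraint $u^2\le s$ (compatible with Lemma~\ref{lem:barrier1}). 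The hypothesis then gives $\calL(w(T_2))\le C_0\varepsilon$, hence the uniform envelope $\eta(t)\le C_0\varepsilon/4$ and $|\delta_s(t)|\le\sqrt{C_0\varepsilon/3}$ for all $t\ge T_2$.

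Next, specializing Proposition~\ref{prop:gf_dyn} at $k=1$ and linearizing around $w^\star$, I would derive
\[
\dot\delta_u(t)\;=\;-16\,s_\star^{(2)}\,\delta_u(t)\;-\;8\,\delta_u^{(2)}(t)\;+\;R_u(t),
\]
with $R_u$ a quadratic remainder of the form $\delta_s\,\delta_u^{(2)}+O(\delta^2)$, and $s_\star^{(2)}=\Theta(1)$ for $a>1$. A continuation argument then closes the proof: letting $\tau^\ast:=\sup\{\tau\ge 0:\ |\delta_u(T_2+\sigma)|+|\delta_s(T_2+\sigma)|\le K\varepsilon\ \text{for all } \sigma\in[0,\tau]\}$ for a constant $K$ to be fixed, I would apply Duhamel's formula and the envelope from Step~1 to obtain, for $K$ large enough, $|\delta_u(T_2+\tau^\ast)|<K\varepsilon$ strictly, contradicting the exit property, so $\tau^\ast=\infty$. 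The corresponding bound on $\delta_s$ is automatic from the pointwise constraint $|\delta_s|\le 2|\delta_u|$.

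The hard part will be the control of the source $\delta_u^{(2)}$. The naive Cauchy--Schwarz bound $|\delta_u^{(2)}|\le \sqrt{s_\star^{(3)}}\,\|e\|_Q=O(\sqrt{\varepsilon})$ from the Lyapunov estimate is insufficient to close the bootstrap at scale $\varepsilon$, since it would overwhelm the $O(\varepsilon)$ contraction rate. The fix I would attempt is to analyze the full moment hierarchy which, by Proposition~\ref{prop:gf_dyn}, satisfies
\[
\dot\delta_u^{(k)}(t)\;=\;-16\,s_\star^{(k+1)}\,\delta_u(t)\;-\;8\,\delta_u^{(k+1)}(t)\;+\;O(\varepsilon^2),\qquad k\ge 1,
\]
a linear cascade driven by $\delta_u$ itself. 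Inverting this cascade by a Duhamel/Volterra representation in the spirit of Section~\ref{sec:proof_outline1} (the shift operator $B$ now plays the same role, but with negative clock $\Theta$ providing genuine decay) should yield $|\delta_u^{(k)}(t)|=O(\varepsilon)$ uniformly in $k$, which closes the bootstrap with a constant $C$ depending only on $a$, the contraction rate, and the moments $s_\star^{(k)}$.
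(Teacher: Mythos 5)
Your Step~1 is correct and is genuinely different from (and in some respects cleaner than) the paper's argument: the identity $\calL(w)=3\delta_s^2+4\eta$ with $\eta=\|e_\perp\|_Q^2=2\delta_u-\delta_s-\delta_u^2=s-u^2\ge0$ checks out, monotonicity of $\calL$ then gives $\eta(t)\le C\varepsilon$ and $|\delta_s(t)|\lesssim\sqrt{\varepsilon}$ for all $t\ge T_2$, and the reduction $0\le\delta_s\le2\delta_u$ (Cauchy--Schwarz plus Lemma~\ref{lem:barrier1}) correctly reduces everything to controlling $\delta_u$. The paper instead never touches the moment hierarchy here: it treats the post-$T_2$ flow coordinate-wise as a perturbation of $\dot w=-8Q(w-w^\star)$, uses the uniform-in-time bound $\int_0^\infty e^{-8Qr}Q\,dr=\tfrac18 I$ to convert the perturbation into $O(G(\tau))$ with $G(\tau)=\sup_{s\le\tau}(|\delta_s|+|\delta_u|)$, and closes a self-consistent inequality $G\le 2\varepsilon+aG+bG^2$.

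The genuine gap is in your last step. The hypothesis \eqref{eq:eps-band-at-T2} gives \emph{exactly} $\|e(T_2)\|_Q^2=2\delta_u(T_2)-\delta_s(T_2)\le 3\varepsilon$, and no more; hence the initial data of your hierarchy, $\delta_u^{(k)}(T_2)=-\sum_i\lambda_i^k e_i(T_2)w_i^\star$ for $k\ge2$, is only $O(\sqrt{\varepsilon})$ by Cauchy--Schwarz, and this is tight. The Duhamel representation you invoke propagates this datum as the homogeneous term $-\sum_i\lambda_i^2 e_i(T_2)w_i^\star\,e^{4\lambda_i(\Theta(t)-\Theta(T_2))}$; the negative clock gives decay $e^{-c\lambda_i\tau}$ \emph{mode by mode}, but modes with $\lambda_i\tau\ll1$ are frozen, so at intermediate times this term can genuinely be of order $\sqrt{\varepsilon}$ even when the particular combinations $\delta_u(T_2),\delta_s(T_2)$ happen to be $O(\varepsilon)$ (the reweighting by $e^{-c\lambda_i\tau}$ destroys whatever cancellation made $\delta_u(T_2)$ small). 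An $O(\sqrt{\varepsilon})$ source $-8\delta_u^{(2)}$ against the $O(1)$ contraction rate $16s_\star^{(2)}$ yields only $\delta_u=O(\sqrt{\varepsilon})$ at the stationary balance, so your continuation argument at scale $K\varepsilon$ does not close. Only the \emph{driven} part of the hierarchy (sourced by $(8\delta_u-12\delta_s)s_\star^{(k+1)}$) is $O(\varepsilon)$ under the bootstrap; the homogeneous part is not controllable at scale $\varepsilon$ from \eqref{eq:eps-band-at-T2} alone. What is actually needed is the stronger Phase~IIb output, namely uniform-in-$t$ control of the reweighted sums $\sum_i\lambda_i e_i w_i^\star e^{4\lambda_i(\Theta(t)-\Theta(\tau))}$ (the bound on $(1-h_\Theta(t))_+$ in Proposition~\ref{prop:Delta-main}); the paper's own Step~3 ("$e^{-8Q\tau}$ is a contraction, hence $|1-u^{\mathrm{ideal}}|\le|\delta_u(T_2)|$") quietly relies on the same missing information, so you have at least identified the right pressure point.

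Two smaller inaccuracies: the remainder $R_u$ is $12\delta_s\delta_u^{(2)}+12s_\star^{(2)}(\delta_u^2+\eta)$, and the $12s_\star^{(2)}\eta$ piece is $O(\varepsilon)$, not $O(\delta^2)$; likewise the hierarchy remainder is $O(\varepsilon)$, not $O(\varepsilon^2)$. Neither is fatal (an $O(\varepsilon)$ source is compatible with an $O(\varepsilon)$ fixed point), but as written the bookkeeping is wrong.
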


\begin{proof}
Write $\tau = t - T_2$. After $T_2$, the full flow satisfies
\[
\dot w(t)
= -8Q w(t)
+ (8\delta_u(t) - 12\delta_s(t)) Q w^\star
- 12\,\delta_s(t)\, Q w(t).
\]
This is a linear time-varying system. Its mild form (variation of constants) is
\begin{equation}\label{eq:duhamel-w-clean}
w(T_2+\tau)
= e^{-8Q\tau} w(T_2)
+ \int_0^\tau e^{-8Q(\tau-s)}
\Big[(8\delta_u - 12\delta_s)(T_2+s)\, Q w^\star
- 12\,\delta_s(T_2+s)\, Q w(T_2+s)\Big] ds.
\end{equation}
Set
\[
A(\tau) := e^{-8Q\tau} w(T_2), \qquad
B(\tau) := \int_0^\tau e^{-8Q(\tau-s)}
\Big[(8\delta_u - 12\delta_s)(T_2+s)\, Q w^\star
- 12\,\delta_s(T_2+s)\, Q w(T_2+s)\Big] ds,
\]
so that $w(T_2+\tau) = A(\tau) + B(\tau)$. We can interpret $A(\tau)$ as the term giving the dynamics in the ideal case, and $B(\tau)$ as a perturbation term. 

\emph{Step 1. (Uniform semigroup bounds).}
Since $Q \succ 0$,
\[
\int_0^\infty e^{-8Qr} Q \, dr = \frac{1}{8} I, \qquad
\int_0^\infty Q^{1/2} e^{-8Qr} Q \, dr = \frac{1}{8} Q^{1/2}.
\]
Hence for any bounded $h$ and any vector $v$,
\begin{align}
\Big\|\int_0^\tau e^{-8Q(\tau-s)} h(s) Q v\, ds\Big\|_2
&\le \tfrac18 \|h\|_\infty \|v\|_2, \label{eq:L1}\\
\Big\|Q^{1/2} \!\int_0^\tau e^{-8Q(\tau-s)} h(s) Q v\, ds\Big\|_2
&\le \tfrac18 \|h\|_\infty \|Q^{1/2}v\|_2. \label{eq:L1Q}
\end{align}

\emph{Step 2. (Bounding $B(\tau)$).}
Let $g(s) := 8\delta_u(T_2+s) - 12\delta_s(T_2+s)$.
On $[T_2, T_2+\tau]$, $|g(s)| \le 20\, G(\tau)$ where
\[
G(\tau) := \sup_{0\le t\le \tau} (|\delta_s(T_2+t)| + |\delta_u(T_2+t)|).
\]
By \eqref{eq:L1}–\eqref{eq:L1Q},
\begin{equation}\label{eq:Bbounds}
\|B(\tau)\|_2 \le c_1\, G(\tau), \qquad
\|Q^{1/2} B(\tau)\|_2 \le c_2\, G(\tau),
\end{equation}
for some absolute constants $c_1, c_2$.

\emph{Step 3. (Comparing with the ideal trajectory).}
The “ideal” evolution keeps only the base drift:
\[
w^{\mathrm{ideal}}(T_2+\tau) := w^\star + e^{-8Q\tau}(w(T_2) - w^\star)
= w^\star + e^{-8Q\tau} e(T_2),
\]
so that
\[
s^{\mathrm{ideal}}(T_2+\tau) = \|Q^{1/2} w^{\mathrm{ideal}}(T_2+\tau)\|_2^2, \qquad
u^{\mathrm{ideal}}(T_2+\tau) = \langle w^{\mathrm{ideal}}(T_2+\tau), Q w^\star\rangle.
\]
Because $e^{-8Q\tau}$ is a contraction,
\[
|1 - s^{\mathrm{ideal}}(T_2+\tau)| \le |\delta_s(T_2)| \le \varepsilon, \qquad
|1 - u^{\mathrm{ideal}}(T_2+\tau)| \le |\delta_u(T_2)| \le \varepsilon.
\]

\emph{Step 4. (Deviations of $s$ and $u$).}
From $w(T_2+\tau) = A(\tau) + B(\tau)$,
\begin{align*}
s(T_2+\tau)
&= \|Q^{1/2}(A(\tau)+B(\tau))\|_2^2
= s^{\mathrm{ideal}}(T_2+\tau)
  + 2\langle Q^{1/2}w^{\mathrm{ideal}}(T_2+\tau), Q^{1/2}B(\tau)\rangle
  + \|Q^{1/2}B(\tau)\|_2^2,\\
u(T_2+\tau)
&= \langle A(\tau)+B(\tau), Q w^\star\rangle
= u^{\mathrm{ideal}}(T_2+\tau)
  + \langle B(\tau), Q w^\star\rangle.
\end{align*}
Hence,
\begin{align}
|\delta_s(T_2+\tau)|
&\le \varepsilon + 2\,\|Q^{1/2}B(\tau)\|_2 + \|Q^{1/2}B(\tau)\|_2^2, \label{eq:ds}\\
|\delta_u(T_2+\tau)|
&\le \varepsilon + \|B(\tau)\|_2\,\|Qw^\star\|_2 \le \varepsilon + c_3\,G(\tau), \label{eq:du}
\end{align}
using \eqref{eq:Bbounds}.

\emph{Step 5.}
Combining \eqref{eq:ds}–\eqref{eq:du} and the definition of $G(\tau)$ gives
\[
G(\tau) \le 2\varepsilon + a\,G(\tau) + b\,G(\tau)^2,
\qquad a := 2c_2 + c_3,\quad b := c_2^2.
\]
Fix $M\ge 3$ and set $\varepsilon_0 := \min\big\{(M-2)/(2aM), (M-2)/(2bM^2)\big\}$.
If $\varepsilon\le\varepsilon_0$, define $\tau^\star := \sup\{\tau : G(\tau)\le M\varepsilon\}$.  
Then on $[0,\tau^\star]$,
\[
G(\tau) \le 2\varepsilon + aM\varepsilon + bM^2\varepsilon^2 \le M\varepsilon.
\]
By continuity, $\tau^\star = \infty$, so $G(\tau)\le M\varepsilon$ for all $\tau$.  
Setting $C := M$ gives
\[
|\delta_s(T_2+\tau)| + |\delta_u(T_2+\tau)| \le C\,\varepsilon,
\quad\forall \tau\ge0.
\]
\end{proof}

Let $\tau=t-T_2(\varepsilon)$ and set
\[
\mathrm{MSE}^{\mathrm{full}}(T_2+\tau):=\frac{1}{d}\|e(T_2+\tau)\|_2^2,\qquad
\mathrm{MSE}^{\mathrm{ideal}}(T_2+\tau):=\frac{1}{d}\sum_{i=1}^d e_i(T_2)^2\,e^{-16\lambda_i \tau}.
\]
Define $\Delta e(\tau):=e^{\mathrm{full}}(T_2+\tau)-e^{\mathrm{ideal}}(T_2+\tau)$. Subtracting the full and ideal flows yields, for $\tau\ge0$,
\begin{equation}\label{eq:Deltae-VoC-eps}
\Delta e(\tau)=\int_0^\tau e^{-8Q(\tau-s)}\Big(-12\,\delta_s(T_2+s)\,Q\big(e^{\mathrm{ideal}}(T_2+s)+\Delta e(s)\big)
+\big(8\delta_u-12\delta_s\big)(T_2+s)\,Qw^\star\Big)\,ds .
\end{equation}

\begin{theorem}\label{thm:idealization-error-eps}
Assume \eqref{eq:eps-band-after} holds. There exist absolute constants $A,B,C<\infty$ such that, for all $\tau\ge0$,
\begin{align}\label{eq:MSE-eps}
\Big|\mathrm{MSE}^{\mathrm{full}}(T_2+\tau)-\mathrm{MSE}^{\mathrm{ideal}}(T_2+\tau)\Big|
\ &\le\
A\,\varepsilon\,\Big(\mathrm{MSE}^{\mathrm{ideal}}(T_2)-\mathrm{MSE}^{\mathrm{ideal}}(T_2+\tau)\Big)\notag\\
&\quad+\ B\,\varepsilon^2\,\frac{1}{d}\sum_{i=1}^d \bigl(1-e^{-8\lambda_i\tau}\bigr)^2\,(w_i^\star)^2
\ +\ C\,\varepsilon\,F(\tau),
\end{align}
where $F(\tau):=\sup_{0\le s\le\tau}\|\Delta e(s)\|_2/\sqrt d$. In particular, if $\varepsilon\le\varepsilon_0$ is small enough so that $C\varepsilon\le\frac12$, then the feedback term can be absorbed to give
\begin{align}\label{eq:MSE-eps-absorbed}
\Big|\mathrm{MSE}^{\mathrm{full}}(T_2+\tau)-\mathrm{MSE}^{\mathrm{ideal}}(T_2+\tau)\Big|
\ &\le\
\frac{A}{1-C\varepsilon}
\,\varepsilon\,\Big(\mathrm{MSE}^{\mathrm{ideal}}(T_2)-\mathrm{MSE}^{\mathrm{ideal}}(T_2+\tau)\Big)\\
&+ \frac{B}{1-C\varepsilon}\,\varepsilon^2\,\frac{1}{d}\sum_{i=1}^d \bigl(1-e^{-8\lambda_i\tau}\bigr)^2\,(w_i^\star)^2
,
\end{align}
and, e.g., for $\varepsilon\le 1/3$, $\ (1-C\varepsilon)^{-1}\le 2$ (after adjusting $C$ if needed).

\end{theorem}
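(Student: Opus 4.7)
The plan is to exploit the linearity of the Duhamel representation~\eqref{eq:Deltae-VoC-eps} by splitting
\[
\Delta e(\tau)\ =\ \Delta e^{(\mathrm{id})}(\tau)\ +\ \Delta e^{(\star)}(\tau)\ +\ \Delta e^{(\mathrm{fb})}(\tau),
\]
where the three pieces correspond, respectively, to the three terms inside the integrand: the drive by the ideal trajectory $e^{\mathrm{ideal}}$, the drive by $w^\star$, and the self-feedback through $\Delta e$. The first two pieces will produce the two explicit structural terms in~\eqref{eq:MSE-eps}, while the self-feedback piece will be collected into the Gronwall remainder $C\varepsilon F(\tau)$ and absorbed at the final step.

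The key structural observation is that $Q$ and $e^{-8Qr}$ commute, so substituting $e^{\mathrm{ideal}}(T_2+s)=e^{-8Qs}e(T_2)$ and using the semigroup identity yields the closed form
\[
\Delta e^{(\mathrm{id})}(\tau)\ =\ -12\,\Bigl(\int_0^\tau \delta_s(T_2+s)\,ds\Bigr)\,Q\,e^{-8Q\tau}\,e(T_2),
\]
while the source piece decomposes coordinatewise via $\int_0^\tau e^{-8\lambda_i(\tau-s)}\,ds=(1-e^{-8\lambda_i\tau})/(8\lambda_i)$ into
\[
\Delta e^{(\star)}_i(\tau)\ =\ \lambda_i w_i^\star\int_0^\tau \bigl(8\delta_u-12\delta_s\bigr)(T_2+s)\,e^{-8\lambda_i(\tau-s)}\,ds.
\]
Combined with the persistence bound $|\delta_s(T_2+s)|+|\delta_u(T_2+s)|\le C'\varepsilon$ from Lemma~\ref{lem:persistence} and the elementary inequality $(x+1)e^{-x}\le1$ for $x\ge0$ (which gives both $\tau\lambda e^{-16\lambda\tau}\le\tfrac{1}{16}(1-e^{-16\lambda\tau})$ and $\tau^2\lambda^2 e^{-16\lambda\tau}\le\tfrac{1}{64}(1-e^{-8\lambda\tau})^2$), this turns the cross term $\tfrac{2}{d}\langle e^{\mathrm{ideal}},\Delta e^{(\mathrm{id})}\rangle$ directly into a multiple of $\varepsilon\bigl(\mathrm{MSE}^{\mathrm{ideal}}(T_2)-\mathrm{MSE}^{\mathrm{ideal}}(T_2+\tau)\bigr)$, and gives $\tfrac{1}{d}\|\Delta e^{(\star)}(\tau)\|_2^2\lesssim \varepsilon^2\,\tfrac{1}{d}\sum_i(w_i^\star)^2(1-e^{-8\lambda_i\tau})^2$. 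For the feedback piece, the operator identity $\int_0^\tau e^{-8Q(\tau-s)}Q\,ds=(I-e^{-8Q\tau})/8$ combined with $|\delta_s|\le C'\varepsilon$ yields $\|\Delta e^{(\mathrm{fb})}(\tau)\|_2\le c_3\,\varepsilon\sqrt{d}\,F(\tau)$.

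Substituting these three estimates into the identity
\[
\mathrm{MSE}^{\mathrm{full}}(T_2+\tau)-\mathrm{MSE}^{\mathrm{ideal}}(T_2+\tau)\ =\ \tfrac{1}{d}\bigl(2\langle e^{\mathrm{ideal}}(T_2+\tau),\Delta e(\tau)\rangle+\|\Delta e(\tau)\|_2^2\bigr)
\]
then produces~\eqref{eq:MSE-eps}, after dispatching the mixed cross terms (notably $\langle e^{\mathrm{ideal}},\Delta e^{(\star)}\rangle$) into the two clean structural buckets via Cauchy-Schwarz with a suitably chosen free parameter, followed by AM-GM. The absorbed form~\eqref{eq:MSE-eps-absorbed} follows by taking the supremum over $\tau'\in[0,\tau]$ of both sides and closing the Gronwall loop: the two structural right-hand terms are monotone in $\tau$, while the feedback contribution $C\varepsilon F(\tau)$ is strictly sublinear in $F(\tau)$ for $\varepsilon<1/C$, so the inequality rearranges into the claimed prefactor $(1-C\varepsilon)^{-1}$.

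The main obstacle I expect is the bookkeeping of the mixed cross term $\langle e^{\mathrm{ideal}},\Delta e^{(\star)}\rangle$, which a priori couples the coordinate errors $e_i(T_2)$ with the teacher weights $w_i^\star$ and does not fit naturally into either of the two structural buckets. Routing it correctly relies on a careful Cauchy-Schwarz splitting whose free parameter is then optimized or reabsorbed via AM-GM, so that the resulting contribution lands inside the two envelopes without inflating the constants. Once this bookkeeping, together with the commutation identity and the scalar inequalities above, are in place, the remaining calculations are essentially routine.
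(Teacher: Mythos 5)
Your proposal is correct and follows essentially the same route as the paper's proof: the same three-way split of the Duhamel integrand (ideal-trajectory drive, teacher-forcing drive, self-feedback), the same semigroup commutation giving the closed form $\tau e^{-8Q\tau}Qe(T_2)$ for the first piece, the same scalar inequality $xe^{-x}\le 1-e^{-x}$ to convert it into the MSE drop, the same coordinatewise bound $(1-e^{-8\lambda_i\tau})/(8\lambda_i)$ for the teacher term, and the same $L^1$ semigroup/Young bound yielding the $C\varepsilon F(\tau)$ feedback term absorbed at the end. Your explicit treatment of the cross-term bookkeeping in the final assembly is in fact slightly more careful than the paper's, which passes over that step.
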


\begin{proof}
\emph{(i) Term with $Qe^{\mathrm{ideal}}$.}
Using $e^{\mathrm{ideal}}(T_2+s)=e^{-8Qs}\,e(T_2)$ and semigroup commutation,
\[
\int_0^\tau e^{-8Q(\tau-s)}Qe^{\mathrm{ideal}}(T_2+s)\,ds
=\tau\,e^{-8Q\tau}\,Qe(T_2).
\]
Taking inner product with $e^{\mathrm{ideal}}(T_2+\tau)=e^{-8Q\tau}e(T_2)$, dividing by $d$, and using $x e^{-x}\le 1-e^{-x}$ with $x=16\lambda_i\tau$,
\[
\frac{2}{d}\Big\langle e^{\mathrm{ideal}}(T_2+\tau),\ \int_0^\tau e^{-8Q(\tau-s)}Qe^{\mathrm{ideal}}(T_2+s)\,ds\Big\rangle
\ \le\ \frac{1}{8}\Big(\mathrm{MSE}^{\mathrm{ideal}}(T_2)-\mathrm{MSE}^{\mathrm{ideal}}(T_2+\tau)\Big).
\]
Multiplying by $|\delta_s|\le\varepsilon$ yields the first term in \eqref{eq:MSE-eps} with $A=\tfrac18$.

\emph{(ii) Teacher–forcing term.}
Let
\[
\mathcal{T}(\tau):=\int_0^\tau e^{-8Q(\tau-s)}\big(8\delta_u-12\delta_s\big)(T_2+s)\,Qw^\star\,ds,
\quad g(s):=8\delta_u(T_2+s)-12\delta_s(T_2+s).
\]
Then $|g(s)|\le 20\varepsilon$ and
\[
\mathcal{T}_i(\tau)
=\lambda_i w_i^\star\int_0^\tau g(s)\,e^{-8\lambda_i(\tau-s)}\,ds,
\qquad
\Big|\int_0^\tau g(s)\,e^{-8\lambda_i(\tau-s)}\,ds\Big|
\le 20\varepsilon\,\frac{1-e^{-8\lambda_i\tau}}{8\lambda_i}.
\]
Hence
\[
\frac{1}{d}\|\mathcal{T}(\tau)\|_2^2
\ \le\ \frac{25}{4}\,\varepsilon^2\,\frac{1}{d}\sum_{i=1}^d \bigl(1-e^{-8\lambda_i\tau}\bigr)^2\,(w_i^\star)^2,
\]
which yields the second term in \eqref{eq:MSE-eps} with $B=\tfrac{25}{4}$. The bound is \emph{saturating}: for small $\tau$, one may use $1-e^{-8\lambda_i\tau}\le 8\lambda_i\tau$ to recover the quadratic behavior
\[
\frac{1}{d}\|\mathcal{T}(\tau)\|_2^2\ \le\ 400\,\varepsilon^2\,\tau^2\,\frac{1}{d}\sum_{i=1}^d \lambda_i^2\,(w_i^\star)^2,
\]
whereas for large $\tau$ it saturates at $O(\varepsilon^2)\cdot \frac{1}{d}\sum_i (w_i^\star)^2$.

\emph{(iii) Self–interaction term.}
Write
\[
\mathcal{S}(\tau):=\int_0^\tau e^{-8Q(\tau-s)}\big(-12\,\delta_s(T_2+s)\big)\,Q\,\Delta e(s)\,ds.
\]
For each coordinate $i$,
\[
\mathcal{S}_i(\tau)=-12\int_0^\tau \lambda_i e^{-8\lambda_i(\tau-s)}\,\delta_s(T_2+s)\,\Delta e_i(s)\,ds.
\]
Using $|\delta_s|\le\varepsilon$ and the uniform $L^1$ bound $\int_0^\infty \lambda_i e^{-8\lambda_i u}\,du=\tfrac18$, Young’s inequality gives
\[
\frac{\|\mathcal{S}(\tau)\|_2}{\sqrt d}\ \le\ \frac{3}{2}\,\varepsilon\,\sup_{0\le s\le\tau}\frac{\|\Delta e(s)\|_2}{\sqrt d}.
\]
Thus the self–interaction contributes a \emph{linear feedback} term $C\varepsilon\,F(\tau)$ in \eqref{eq:MSE-eps} with $C=\tfrac{3}{2}$. Absorbing this term to the left yields \eqref{eq:MSE-eps-absorbed} whenever $C\varepsilon<1$.
\end{proof}

\begin{corollary}\label{cor:eps-reading}
Under \eqref{eq:eps-band-after} and for $\varepsilon\le\varepsilon_0$ small enough,
\[
\Big|\mathrm{MSE}^{\mathrm{full}}(T_2+\tau)-\mathrm{MSE}^{\mathrm{ideal}}(T_2+\tau)\Big|
\ \le\
C_1\,\varepsilon\,\Big(\mathrm{MSE}^{\mathrm{ideal}}(T_2)-\mathrm{MSE}^{\mathrm{ideal}}(T_2+\tau)\Big)
\ +\ C_2\,\frac{\varepsilon^2}{d}s_\star^{(2)} .
\]
In particular, for each fixed $\tau$ the difference is $O(\varepsilon)$ relative to the \emph{drop} of the ideal MSE from $T_2$ to $T_2+\tau$, plus a negligible $O(\varepsilon^2/d)$ additive term.
\end{corollary}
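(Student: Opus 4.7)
The plan is to derive the corollary directly from the absorbed estimate \eqref{eq:MSE-eps-absorbed} of Theorem \ref{thm:idealization-error-eps}. The dynamical work is done; only clean bookkeeping of constants and one elementary spectral bound remain.

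First, I combine Lemma \ref{lem:persistence} with \eqref{eq:eps-band-after} so that $|\delta_s| + |\delta_u| \le C\varepsilon$ holds uniformly on $[T_2, T_2+\tau]$, then pick $\varepsilon_0$ small enough that $C\varepsilon \le 1/2$. This gives $(1-C\varepsilon)^{-1} \le 2$, so \eqref{eq:MSE-eps-absorbed} produces $C_1 := 2A$ in front of the ideal-MSE drop and at most $2B$ in front of the saturating sum. Second, I estimate the saturating sum using the elementary inequality $(1-e^{-x})^2 \le x^2$ applied with $x = 8\lambda_i \tau$:
\[
\frac{1}{d}\sum_{i=1}^d \bigl(1-e^{-8\lambda_i \tau}\bigr)^2 (w_i^\star)^2 \;\le\; \frac{64\tau^2}{d}\sum_{i=1}^d \lambda_i^2 (w_i^\star)^2 \;=\; \frac{64\tau^2}{d}\, s_\star^{(2)},
\]
and absorb the $64\tau^2$ factor into $C_2$ (valid when $\tau$ is held bounded) to obtain the claimed additive $C_2\,\varepsilon^2 s_\star^{(2)}/d$. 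For the interpretive reading, $s_\star^{(2)}$ concentrates around a dimension-free constant under Gaussian $w^\star$ with power-law spectrum, while the drop $\mathrm{MSE}^{\mathrm{ideal}}(T_2) - \mathrm{MSE}^{\mathrm{ideal}}(T_2+\tau)$ is of order $\sigma_\star^2$ for nontrivial $\tau$, so the first term is genuinely $O(\varepsilon)$ times the drop and the second is truly negligible at $O(\varepsilon^2/d)$.

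The main caveat, rather than a real obstacle, is that the quadratic bound $(1-e^{-x})^2 \le x^2$ is loose when $\lambda_i \tau$ is large; a fully $\tau$-uniform estimate requires instead falling back on $(1-e^{-x})^2 \le 1$, which replaces $\tau^2 s_\star^{(2)}/d$ by $\sigma_\star^2$. Either bound supports the qualitative statement of the corollary, and no additional dynamical argument is needed—the corollary is essentially a repackaging of \eqref{eq:MSE-eps-absorbed} with the constants consolidated.
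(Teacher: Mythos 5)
Your proposal is correct and matches the paper's (implicit) derivation: the corollary is stated without a separate proof precisely because it is the direct repackaging of \eqref{eq:MSE-eps-absorbed} that you carry out, with $(1-C\varepsilon)^{-1}\le 2$ fixing the constants and the teacher-forcing sum bounded via the spectral moment. Your caveat about the $\tau^2$ factor is well taken --- the paper's Theorem~\ref{thm:postT2-powerlaw} explicitly retains the $\varepsilon^2\tau^2 s_\star^{(2)}/d$ form, so the corollary's $\tau$-free $C_2$ must indeed be read as absorbing a bounded $\tau$ (or replaced by the saturating $\sigma_\star^2$ bound you mention).
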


\section{Additional numerical experiments}\label{app:xp}

    \subsection{Approximation $\dot w_i(t) \approx 8\lambda_i( w_i^\star-w_i(t))$. }\label{xp:approx}

We empirically confirm the approximation used for the Phase III analysis. In Figure \ref{fig:pop-scaling} we compare the MSE of the population dynamic with the one predicted by the approximation $\dot w_i(t) \approx 8\lambda_i( w_i^\star-w_i(t))$ (i.e. we froze $u(t)$ and $s(t)$ at their convergence value) where $T_2$ is chosen such that $u(T_2)=0.9$.
\begin{figure}[h!]
    \centering
    \includegraphics[width=0.9\linewidth]{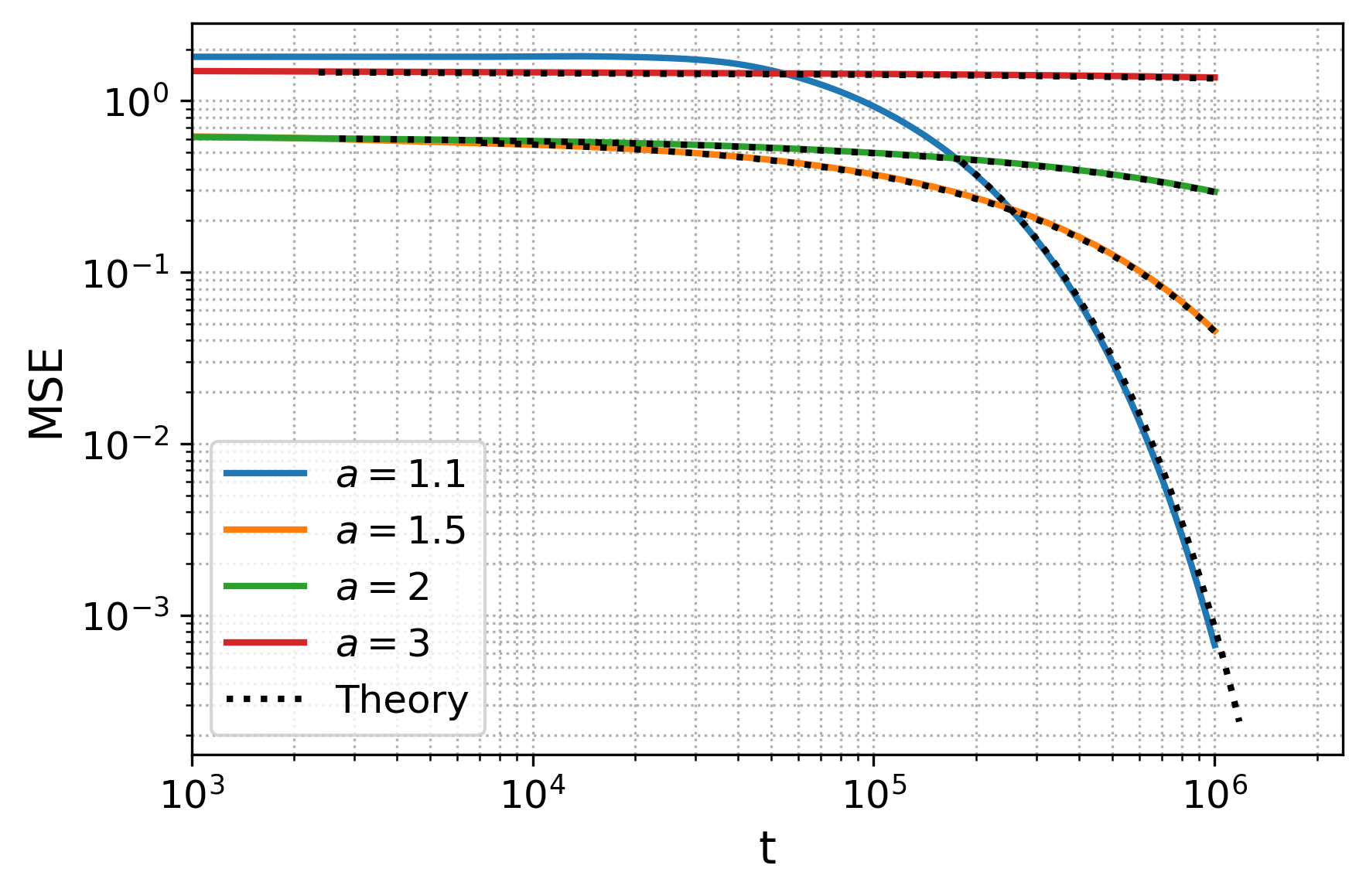}
    \caption{Comparison between MSE obtained from the population dynamic, and the approximated one used to derive scaling law (th.)  (log-log scale). Parameters: $ d=300, \eta=10^{-3}$.}
    \label{fig:pop-scaling}
\end{figure}

\subsection{Online SGD}

In the main text, we focused on the gradient flow to highlight the phase decomposition. Here, we complement the analysis with simulations of online SGD, see Figure \ref{fig:sgd-dynamics}. We track the trajectories of the key order parameters when training with a small learning rate and without mini-batching. We used the same parameters as in the previous section, but used empirical gradient and added noise $\varepsilon_t\sim \calN(0,0.05)$.

\begin{figure}[!ht]
\centering
\begin{subfigure}{0.32\textwidth}
\centering
\includegraphics[width=\linewidth]{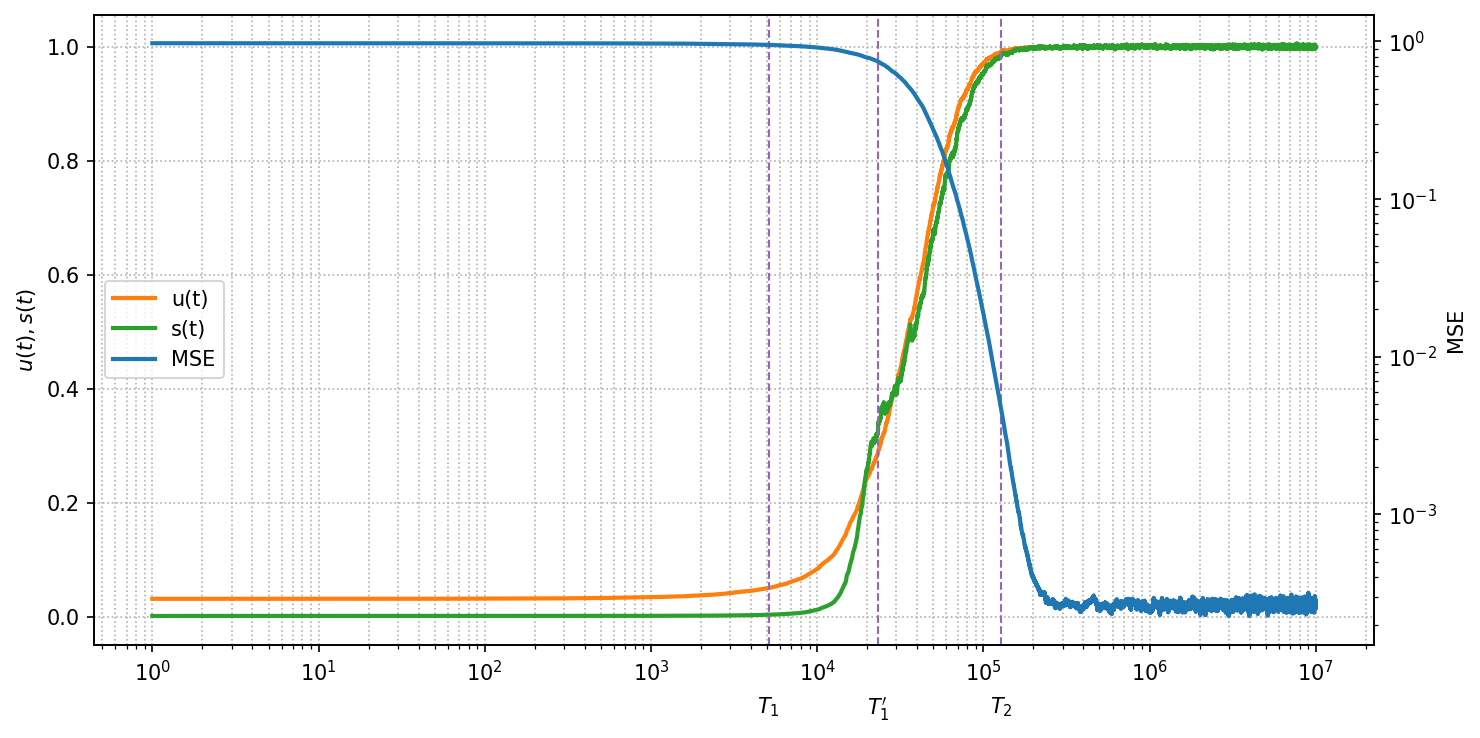}
\caption{$a=0.5$}
\end{subfigure}
\hfill
\begin{subfigure}{0.32\textwidth}
\centering
\includegraphics[width=\linewidth]{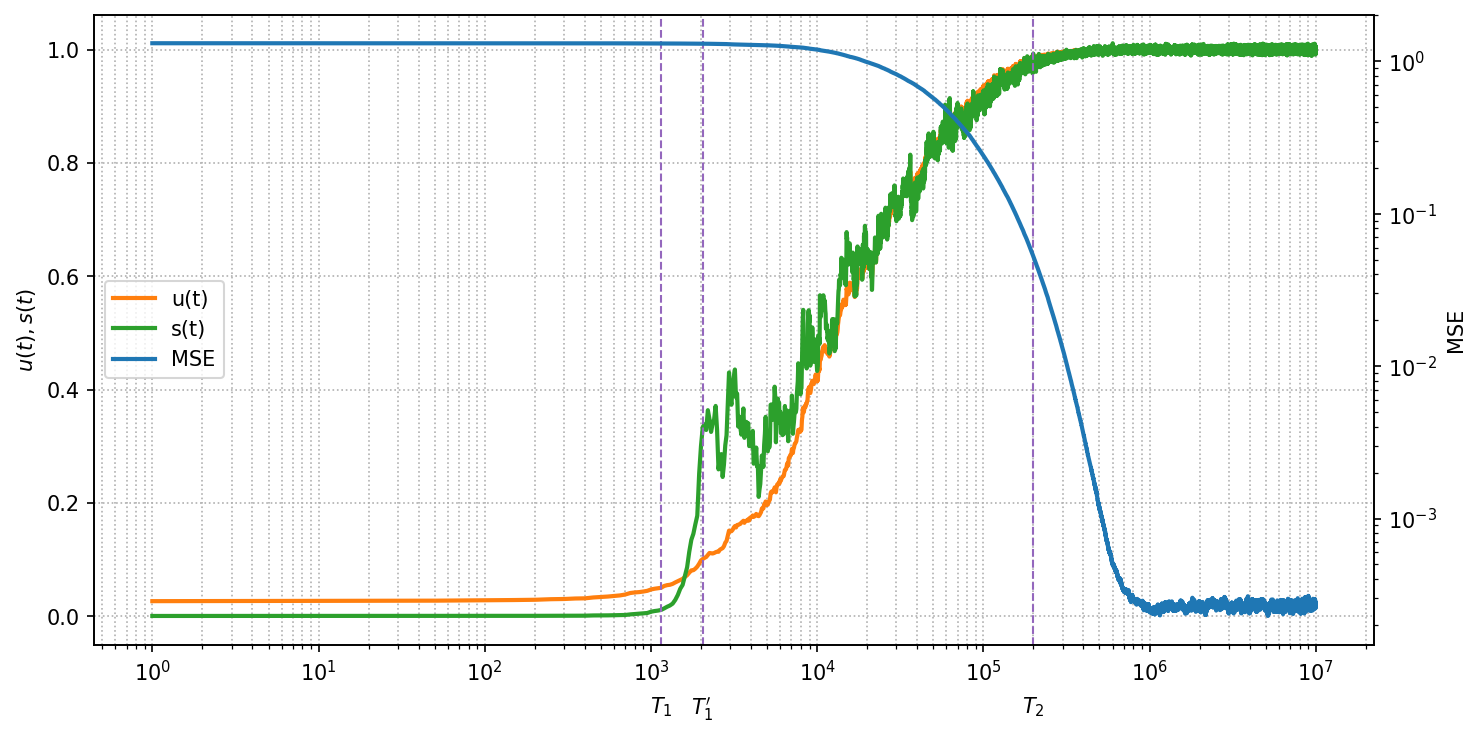}
\caption{$a=1$}
\end{subfigure}
\hfill
\begin{subfigure}{0.32\textwidth}
\centering
\includegraphics[width=\linewidth]{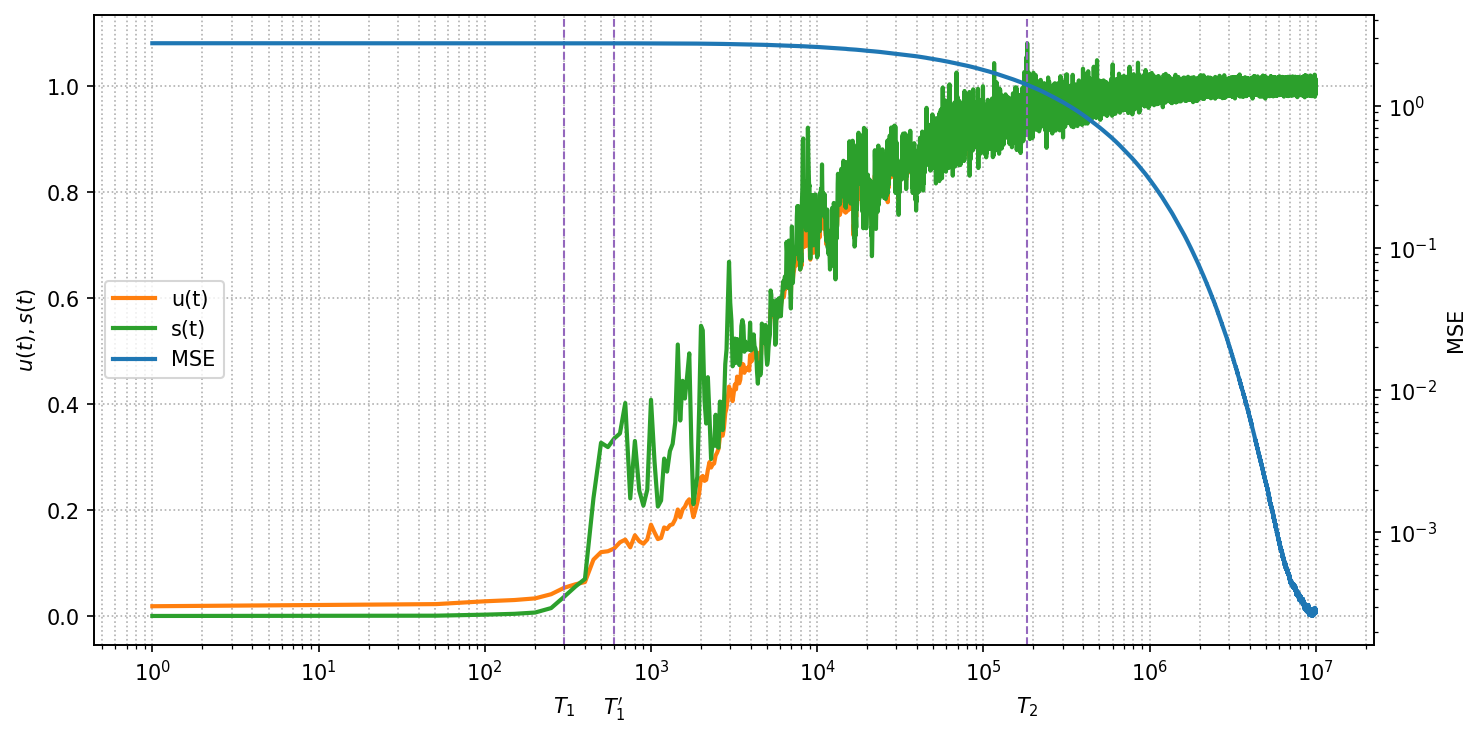}
\caption{$a=1.5$}
\end{subfigure}
\caption{Dynamics of online SGD for different $a$.}
\label{fig:sgd-dynamics}
\end{figure}

Compared to gradient flow, online SGD exhibits the same qualitative phases, but with additional noise. This suggests that our theoretical scaling laws are robust to stochastic perturbations introduced by SGD. Moreover, the magnitude of fluctuations increases with the spectral decay parameter $a$. Intuitively, when $a$ is large, only a few top directions dominate the signal, so SGD updates concentrate heavily along these directions, amplifying variance and making the trajectory noisier.

\subsection{Rate of convergence of $u(t)$}\label{app:xp:cvrate}

In Figure \ref{fig:scaling_dim} we plot $\log (1-u(t))$ as a function of $\log t$ for $a=1, 1.5$ and $4$ for inputs in dimension $d=500, 2000$ and $5000$. 

We observe that the ambient dimension $d$ only starts to affect the dynamics once 
$1-u(t)$ has already become small: larger $d$ makes it harder to learn directions 
associated with the smallest eigenvalues. Moreover, increasing $a$ extends the 
initial phase where $d$ plays essentially no role, since learning is then dominated 
by the leading entries of the spectrum.

\begin{figure}[!ht]
\centering
\begin{subfigure}{0.32\textwidth}
\centering
\includegraphics[width=\linewidth]{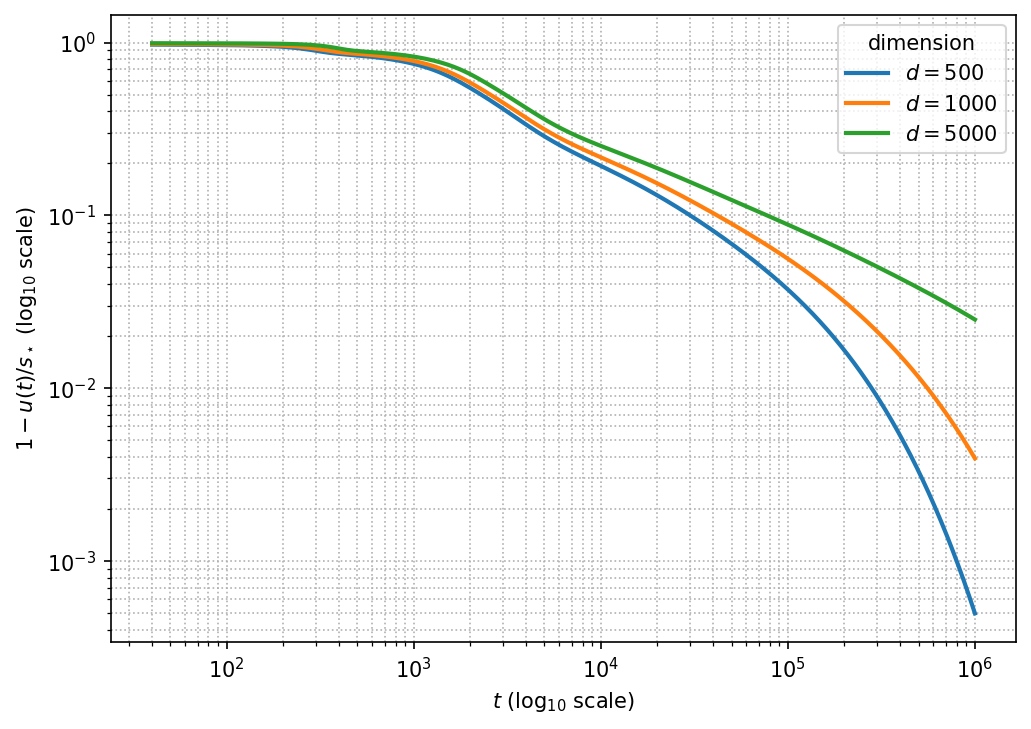}
\caption{$a=1.5$}
\end{subfigure}
\hfill
\begin{subfigure}{0.32\textwidth}
\centering
\includegraphics[width=\linewidth]{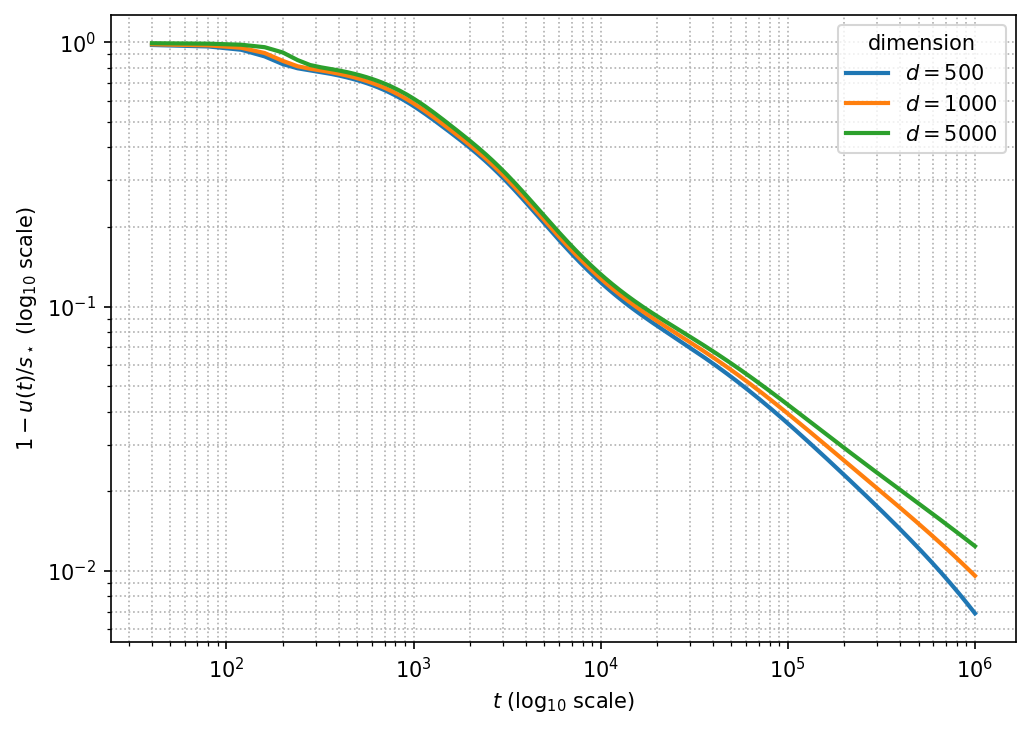}
\caption{$a=2$}
\end{subfigure}
\hfill
\begin{subfigure}{0.32\textwidth}
\centering
\includegraphics[width=\linewidth]{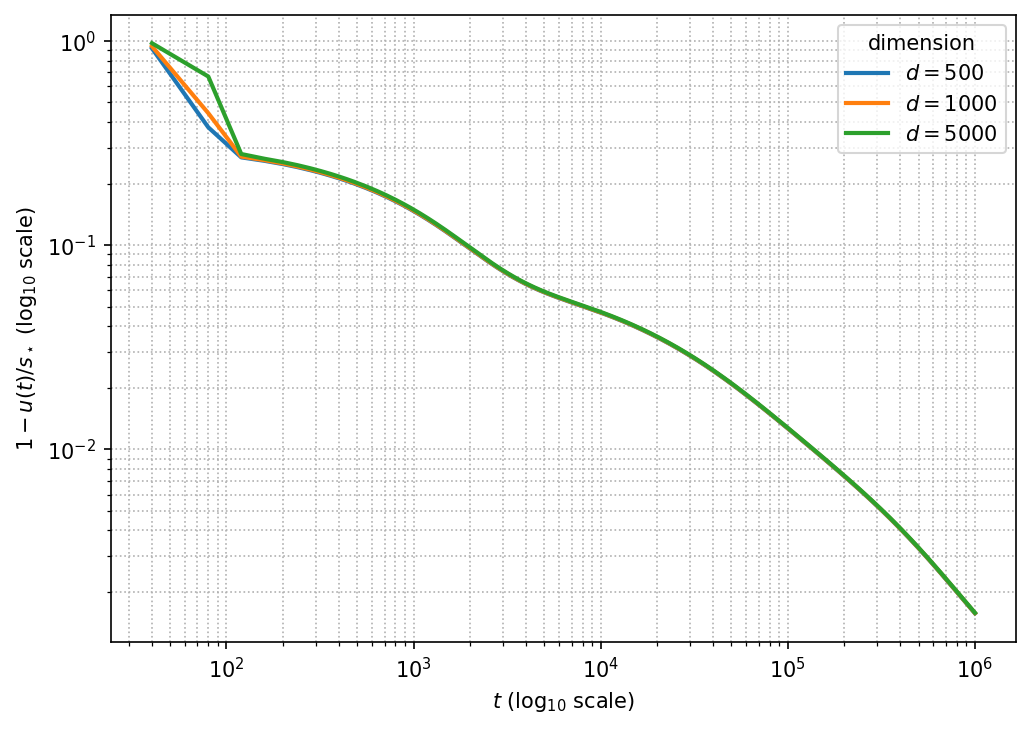}
\caption{$a=4$}
\end{subfigure}
\caption{Convergence rate of $1-u(t)$ for different $a$ and $d$ (log-log scale).}
\label{fig:scaling_dim}
\end{figure}

\end{document}